\newcommand{\dino}{DINO}
\newcommand{\hyp}{Minkowski Representation Hypothesis}
\title{\vspace{-5mm}Into the Rabbit Hull: From Task-Relevant Concepts in DINO to Minkowski Geometry}
\newcommand{\aut}[1]{\textbf{#1}}
\newcommand{\af}[1]{{\small #1}}
\newcommand{\afn}[1]{\textcolor{primary}{$^{#1}$}}
\author{
\hspace{-1.1mm}\aut{Thomas Fel}{\textcolor{primary}{$^\star$}\afn{a,b}} \quad
\aut{Binxu Wang}{\textcolor{primary}{$^\star$}\afn{a,b}} \\
\aut{Michael A. Lepori}\afn{d} \quad
\aut{Matthew Kowal}\afn{e} \quad 
\aut{Andrew Lee}\afn{b} \quad
\aut{Randall Balestriero}\afn{d} \quad
\aut{Sonia Joseph}\afn{g} \\
\aut{Ekdeep S. Lubana}\afn{b,h}
\aut{Talia Konkle}\afn{a,c} \quad
\aut{Demba Ba}\afn{a,b} \quad
\aut{Martin Wattenberg}{\textcolor{primary}{$^\dagger$}\afn{b,f}} \vspace{4pt}\\
\afn{a}\af{Kempner Institute, Harvard University} \quad
\afn{b}\af{Harvard University} \\
\afn{c}\af{Dept. of Psychology, Harvard University} \quad
\afn{d}\af{Brown University} \\
\afn{e}\af{FAR.AI} \quad
\afn{f}\af{Google DeepMind} \quad
\afn{g}\af{Meta} \quad
\afn{h}\af{Goodfire}\\
\\
{
\small 
\href{https://kempnerinstitute.github.io/dinovision}{\raisebox{-2.8pt}{\includegraphics[height=10pt]{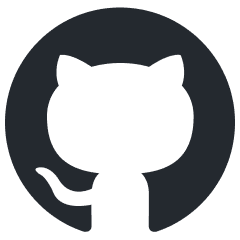}}
\texttt{kempnerinstitute.github.io/dinovision}}
}
\vspace{-4mm}
}
\begin{document}

\renewcommand{\thefootnote}{}
\footnotetext{
\textcolor{primary}{$^\star$} Equal contribution. \\
\textcolor{primary}{$^\dagger$} Work done while at Harvard University. \\
Correspondence to \texttt{\{tfel,binxu\_wang\}@fas.harvard.edu}.
}
\renewcommand{\thefootnote}{\arabic{footnote}}

\maketitle

\begin{figure}[ht]
\vspace{-1mm}
\centering
\makebox[\textwidth][c]{%
    \includegraphics[width=1.05\textwidth]{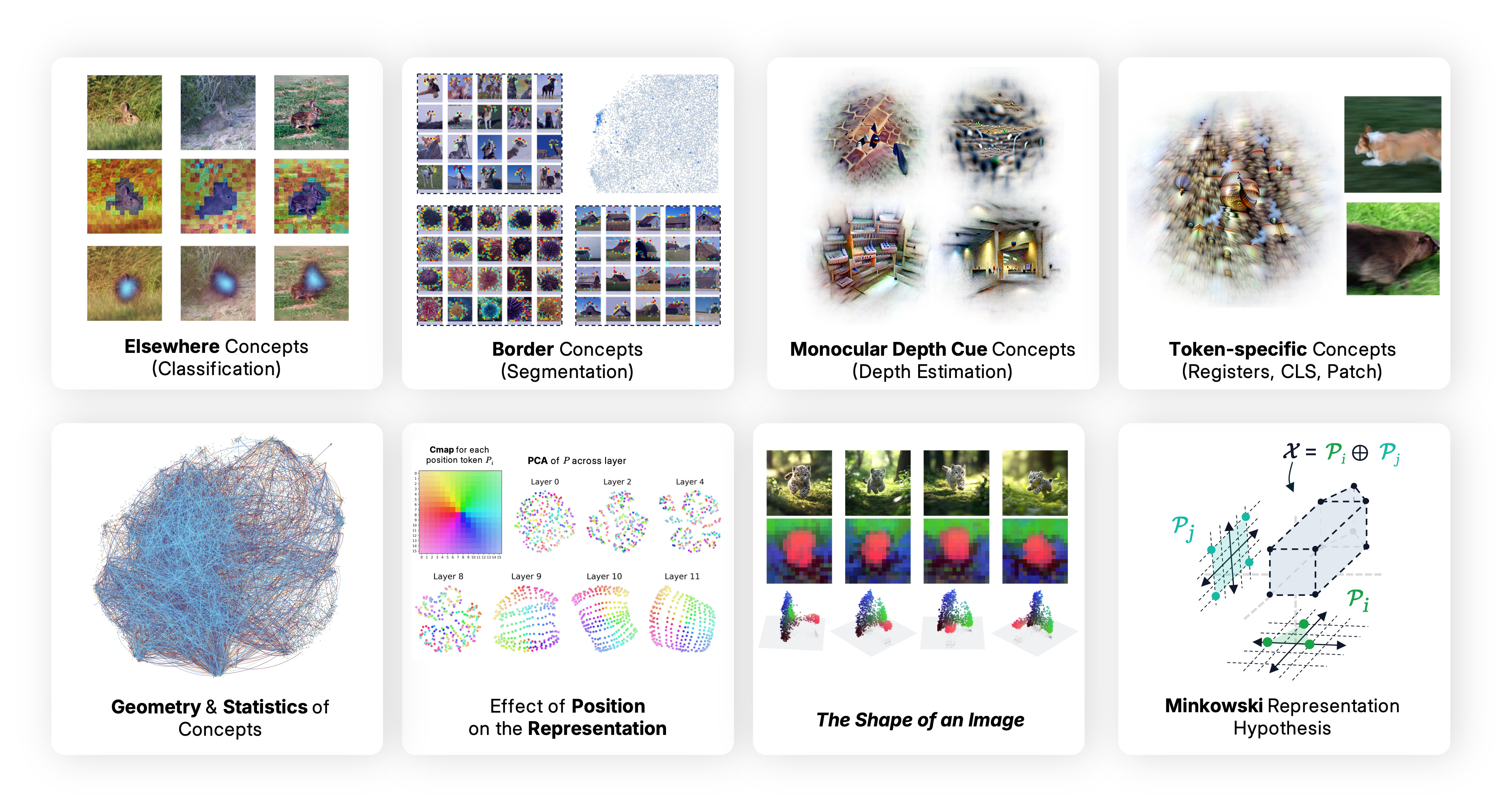}
}
\vspace{-1mm}
\caption{
\textbf{Overview of our study.}
\emph{Part I — Downstream usage.} Different tasks recruit distinctive families of concepts: classification relies on ``Elsewhere'' detectors, segmentation on boundary concepts, depth estimation on three families of monocular cues, while token-specific concepts (e.g., registers) capture global scene factors such as illumination or motion blur.
\emph{Part II — Geometry and statistics of concepts.} Even though atoms are distributed as in the sparse-coding view, we also find anisotropy aligned with task subspaces, antipodal pairs forming signed axes, and partly dense structure: positional information compresses into 2D, yet locally connected neighborhoods persist even after removing position.
Together, these signs suggest that representations are organised beyond linear sparsity alone.
\emph{Part III — Towards Minkowski Geometry.} Synthesizing these observations, we explore a refined view: token as sum of convex mixture. This view is grounded in Cognitive theory of Gärdenfors’ conceptual spaces as well as in the model’s own mechanism: each attention head  produces convex combinations of value vectors, and their outputs add across heads; tokens can thus be understood as convex mixtures of a few archetypal landmarks (e.g., a rabbit among animals, brown among colors, fluffy among textures). This points to activations being organized as Minkowski sums of convex polytopes, with concepts arising as convex regions rather than linear directions. We finish by examining empirical signals of this geometry and its consequences for interpretability.
}
\vspace{-1mm}
\label{fig:big_picture}
\end{figure}

\clearpage

\begin{abstract}
DINOv2 is routinely deployed to recognize objects, scenes, and actions; yet the nature of \textit{what} it perceives remains unknown. 
As a working baseline, we adopt the Linear Representation Hypothesis (LRH) -- which posits that internal features can be expressed through a sparse combination of nearly independent directions. We operationalize it using overcomplete sparse autoencoders, producing a large-scale set of visual concepts: a 32,000-unit dictionary\footnote{Available at \texttt{\href{https://kempnerinstitute.github.io/dinovision}{kempnerinstitute.github.io/dinovision}}} that serves as the interpretability backbone of our study, which unfolds in three parts.

~~In the first part, we analyze how different downstream tasks recruit concepts from our learned dictionary, revealing functional specialization: classification exploits ``Elsewhere'' concepts that fire everywhere except on target objects, implementing learned negations; segmentation relies on boundary detectors forming coherent subspaces; depth estimation draws on three distinct monocular depth cue matching visual neuroscience principles.

~~Following these functional results, we analyze the geometry and statistics of the concepts learned by the SAE. First, we found that representations are partly dense rather than strictly sparse. The dictionary, initialized at random, evolves toward greater coherence and departs from an ideal in which dictionary concepts are maximally orthogonal (Grassmannian frames). 
Within an image, tokens occupy a low dimensional, locally connected set, which persists after removing position.
Taken together, these signs of partial dense representation, local connectedness, and coherent dictionary atoms suggest that representations are organized beyond linear sparsity alone.

~~Synthesizing these observations, we propose a refined view: tokens are formed by combining convex mixtures of a few archetypes (e.g., a rabbit among animals, brown among colors, fluffy among textures). This structure is grounded both in cognitive theory of Gärdenfors’ conceptual spaces and in the model’s own mechanism as multi-head attention is a sum of convex mixtures, implicitly defining regions bounded by archetypes.
In this picture, concepts are expressed through proximity to archetypes and by membership within bounded regions (rather than by unbounded linear directions).
We clarify this view by introducing the \emph{Minkowski Representation Hypothesis (MRH)} and examine its empirical signatures and implications for how we study, steer, and interpret vision-transformer representations.

\end{abstract}

\section{Introduction}

Vision Transformers (ViTs)~\cite{dosovitskiy2020image} have redefined the foundations of visual representation learning. Inspired by the success of Transformer architectures~\cite{vaswani2017attention} in natural language processing, ViTs abandon the convolutional inductive bias~\cite{lecun2015deep,serre2006learning} in favor of a more general framework: images are partitioned into fixed-size patches, each linearly projected into an embedding vector (a patch ``token''), then processed by layers of self-attention and feedforward modules.
This architecture introduces a new regime of scalability and flexibility~\cite{zhai2022scaling,dehghani2023scaling}. Compared to convolutional networks, ViTs follow more favorable scaling laws~\cite{alabdulmohsin2023getting}: their performance improves reliably with increased data and parameters, making them well-suited to large-scale training. They are also malleable learners, able to adapt to a broad spectrum of vision tasks given sufficient data and compute. When trained with contrastive~\cite{zhai2023sigmoid}, masked~\cite{he2022masked}, or self-distillation objectives~\cite{touvron2022deit,caron2021emerging}, ViTs can extract rich, semantically organized representations without the need for human supervision.

\begin{figure}[t]
    \vspace{-8mm}
    \centering
    \includegraphics[width=0.95\linewidth]{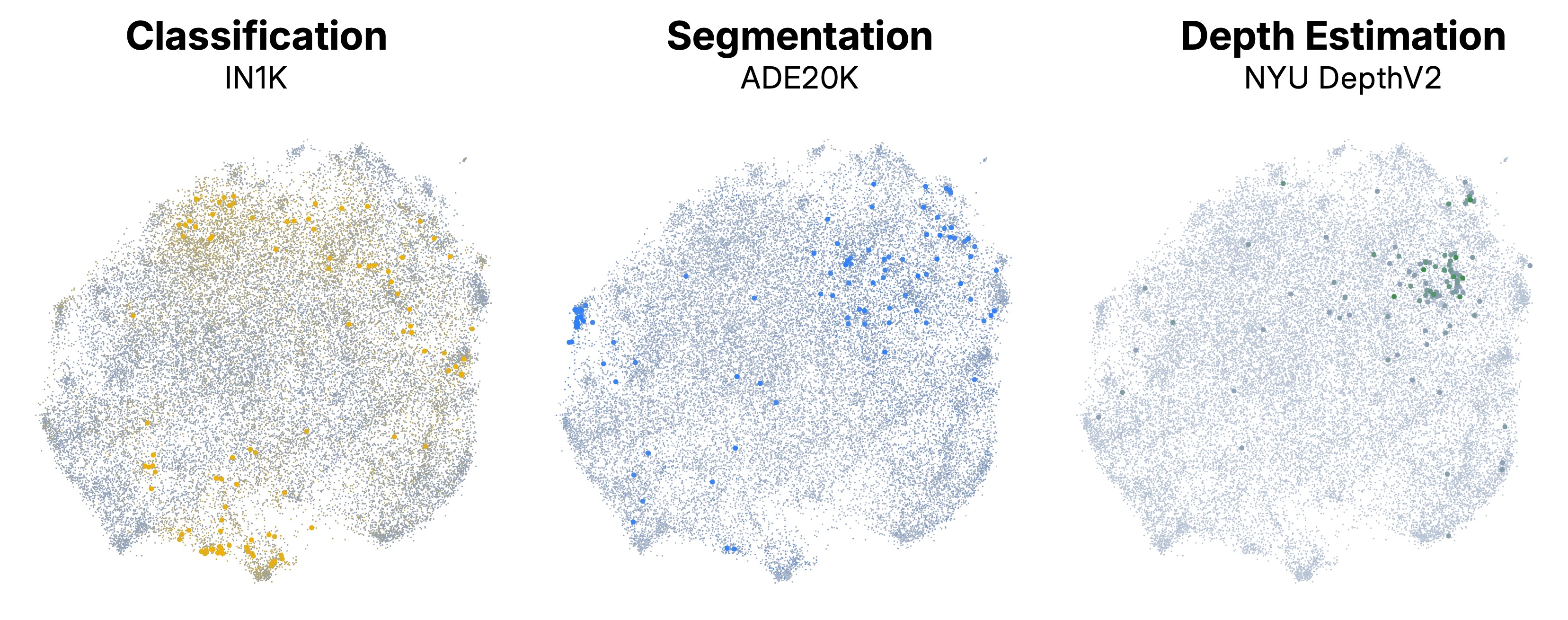}
    \vspace{-3mm}
    \caption{
    \textbf{Concept importance across tasks.} UMAP projection of the learned dictionary, with colors indicating the relative magnitude of each concept’s contribution to three downstream tasks: \textbf{(Left)} classification (ImageNet-1k), \textbf{(Middle)} segmentation, \textbf{(Right)} depth estimation.
    While classification recruits a broad set of concepts, segmentation and depth primarily activate more restricted set of concepts. 
    Although UMAP only preserve local geometry, functionally relevant groupings are visibly clustered in the projection. We show in later sections that different tasks consistently recruit distinct, low-dimensional regions of the concept space.
    }
    \label{fig:downstream_umap}
    \vspace{-4mm}
\end{figure}

\paragraph{The case of DINOv2.} Trained in a self-supervised manner on a massive corpus of unlabeled images, DINOv2 has become a showcase of emergent visual representations~\cite{oquab2023dinov2,darcet2023vision}. Its embeddings support diverse downstream tasks: fine-grained classification~\cite{chiu2024fine}, segmentation~\cite{liu2023grounding}, monocular depth estimation~\cite{mao2024stealing,cui2024surgical}, object tracking~\cite{faber2024leveraging,tumanyan2024tracker}, even robotic perception~\cite{kim2024openvla}. 
They have also proven valuable in non-discriminative settings, acting as priors for generative models~\cite{yu2024representation}, proxies for distributional similarity~\cite{stein2023exposing}, tools for revealing blindspots~\cite{bohacek2025uncovering}.
The embeddings are not just visually rich, but robust: they power video world models that predict physical dynamics from frames~\cite{baldassarre2025back}, and can be aligned with language to achieve open-vocabulary segmentation and zero-shot classification despite having never been trained on text~\cite{jose2025dinov2}.
DINOv2 also transfers robustly across domains~\cite{zhang2023stable,zhou2024dinowm}, including satellite imagery~\cite{waldmann2025panopticon} and medical scans~\cite{ayzenberg2024dinov2}, underscoring that the model has internalized a deeply structured representation of the visual world.
Yet despite these successes, the nature of DINOv2's internal representations remains unclear. What, precisely, is encoded? Which features are \emph{available}~\cite{hermann2023foundations} to downstream tasks, and how are they geometrically organized? More fundamentally, how can a label-free objective give rise to such effective few-shot readout?

\paragraph{Vision Explainability.} 
To address these questions, we draw on vision explainability, a field that has developed both an empirical toolkit and theoretical accounts for probing large vision models~\cite{doshivelez2017rigorous,gilpin2018explaining,fel2024sparks}. 
Early work emphasized attribution-based methods~\cite{simonyan2013deep,zeiler2014visualizing,bach2015pixel,springenberg2014striving,smilkov2017smoothgrad,sundararajan2017axiomatic,selvaraju2017gradcam,fong2017meaningful,fel2021sobol,novello2022making,muzellec2023gradient}, 
which visualize ``where'' a model looks or how sensitive it is to local perturbations. 
While often compelling, such techniques provide only a surface-level view: they do not reveal ``what'' internal features are being computed, 
and their usefulness is limited~\cite{hase2020evaluating,hsieh2020evaluations,nguyen2021effectiveness,fel2021cannot,kim2021hive,sixt2020explanations}. 
To move beyond attribution, the community turned to concept-based explanations~\cite{kim2018interpretability,poeta2023concept}, 
which aim to extract interpretable latent dimensions from a model’s representation space~\cite{bau2017network,ghorbani2019towards,zhang2021invertible,fel2023craft,graziani2023concept,vielhaben2023multi,kowal2024understanding,kowal2024visual,fel2023holistic}. 
A natural first attempt was to identify individual neurons as carriers of concepts. 
However, this view quickly ran into challenges: neurons are polysemantic, depend on arbitrary bases, 
and implicitly assume the number of features is bounded by the number of units~\cite{arora2018linear,elhage2022superposition,gorton2024missing}. 
These limitations motivated methods that identify concepts as distributed features, without requiring alignment to single neurons~\cite{ghorbani2017interpretation,zhang2021invertible,fel2023craft}. 
A first theoretical account soon followed in the form of phenomenology by~\citet{elhage2022superposition}: 
in a $d$-dimensional space, neural networks can encode exponentially many nearly orthogonal features 
by representing each as a sparse combination of neurons. 
This phenomenology crystallized into the \textit{Linear Representation Hypothesis (LRH)}, 
which holds that models contain many more features than neurons, arranged as sparse, quasi-orthogonal directions~\cite{park2023linear}. 
If the LRH is valid, then concept extraction amounts to an overcomplete dictionary learning problem~\cite{fel2023holistic}. 
In this framework, the activation space of a layer is factorized into a dictionary basis and sparse codes. 
Sparse autoencoders (SAEs)~\cite{makhzani2013k,cunningham2023sparse,bricken2023monosemanticity, joseph2025steeringclipsvisiontransformer, joseph2025prismaopensourcetoolkit,klindt2025superposition} 
are one practical instantiation of this idea, enforcing sparsity through a simple encoder-decoder architecture. 
When applied to pretrained models such as DINOv2, SAEs uncover a rich library of patterns, which we refer to as \textit{concepts}.

\vspace{-3mm}
\paragraph{Our contributions.}
In this work, we start our investigation of DINOv2’s representational structure by grounding our analysis in the Linear Representation Hypothesis (LRH), which views neural activations as sparse superpositions of nearly orthogonal features, and operationalizing it through sparse autoencoders. This yields a dictionary of over $32{,}000$ recurring patterns, which we release in the form of, to our knowledge, the largest interactive interpretability demo for any vision foundation model. Building on this foundation, our analysis unfolds in three parts:

\begin{enumerate}[label=\textbullet,leftmargin=1pt,itemindent=0pt]
  \item \textbf{Downstream Usage.}  
  We analyze how different tasks recruit concepts from the learned dictionary and find  specialization: (i) ``Elsewhere'' concepts supporting classification by learned negation; (ii) border concepts forming coherent subspaces central to segmentation; and (iii) three monocular depth cue families consistent with classical visual-neuroscience.

\item \textbf{Concept Geometry and Statistics.}
We found observations compatible with a linear sparse-coding view: atoms are distributed rather than neuron-aligned (low Hoyer scores), and we find antipodal pairs that form signed semantic axes, so $\cos\theta\!\approx\!-1$ can reflect opposite poles of the same feature axis rather than different features\footnote{This poses challenges for cosine-based analyses in clustering or retrieval, which may misinterpret antipodal pairs as unrelated features.}.
At the same time, the dictionary departs from a near-orthogonal Grassmannian picture: pairwise inner products show heavier tails and the singular-value spectrum decays sharply, indicating anisotropy and clustered coherence, with subspaces aligned to task recruitment. Finally, although positional information compresses toward a $\sim$2D subspace in later layers, per-image token clouds remain smooth and low-dimensional even after removing positional components. Taken together, these diagnostics suggest a partly dense, structured organization that go beyond a simple sparse-coding view of the representations.

  \item \textbf{Towards Archetypal Geometry.}  
  Closer inspection suggests a refined view of the representation: tokens could behave as the outcome of several interpolations between archetypal landmarks, for instance, locating itself as a rabbit among animals, as brown among colors, and as fluffy among textures, whose contributions add to form the final embedding. This additive structure, naturally implemented by multi-head attention, motivates an organizing principle we call the \emph{Minkowski Representation Hypothesis (MRH)}: representations assemble from overlapping convex regions around archetypes, and concepts are expressed by proximity to landmarks rather than by single linear directions. We conclude by outlining practical consequences of this theory.
\end{enumerate}
Having sketched our three-part investigation, we begin by recalling the LRH and how it can be operationalized through sparse autoencoders, which provide the dictionary of concepts underpinning our analysis.

\section{Linear Representation Hypothesis and Operationalization}

A recurring phenomenology of large models is that their representational capacity vastly exceeds the number of neurons: in a $d$-dimensional space, they can encode exponentially many features by representing each as a sparse linear superposition \cite{arora2018linear,elhage2022superposition}. Empirically, such features behave as nearly orthogonal directions~\cite{papyan2020prevalence}, active only in restricted contexts, while neurons themselves are polysemantic~\cite{nguyen2016multifaceted}. This motivates two conclusions: (i) neurons are not the appropriate locus of interpretability, and (ii) one must recover the latent basis along which the model effectively operates.
Beyond empirical characterization, such geometry can be motivated by compression: when features are arranged with minimal coherence, it maximizes the number of retrievable features while minimizing destructive interference. 
This principle extends beyond neural activations to fundamental problems in discrete geometry and optimization theory. It connects to classical sphere packing problems: Tammes's problem~\cite{mooers1994tammes}, which seeks optimal angular separation of points on a sphere (originally motivated by pollen grain morphology, where surface protrusions must be optimally spaced for aerodynamic dispersal efficiency); Thomson's problem~\cite{thomson1904structure,bowick2002crystalline}, which minimizes Coulomb electrostatic repulsion energy between charged particles constrained to a sphere; and the spherical code problem~\cite{delsarte1991spherical}, which maximizes minimum distance between codewords on the unit sphere for error correction. 
These problems share a common mathematical substrate: the geometric structure formalized as Grassmannian frames in signal processing theory~\cite{strohmer2003grassmannian}. This object underlies the optimization principle of minimizing mutual coherence (the maximum absolute inner product between distinct normalized vectors). These mathematical convergences suggest that neural networks may naturally approximate such optimal geometric configurations to (\textbf{\textit{i}}) maximize representational capacity while (\textbf{\textit{ii}}) minimizing cross-feature interference.
This geometric phenomenology crystallizes into the Linear Representation Hypothesis~\cite{elhage2022superposition,costa2025flat} that we recall here:
\begin{definition}\label{def:lrh}{\textbf{Linear Representation Hypothesis (LRH).}}
A representation $\a \in \mathbb{R}^{d}$ satisfies the linear representation hypothesis if there exists a sparsity constant $k$, an overcomplete dictionary $\D=(\bm{d}_{1},\dots,\bm{d}_{c}) \in \mathbb{R}^{c\times d}$, and a coefficient vector $\z \in \mathbb{R}^{c}$ such that $\a = \z\D $, under the following conditions:
\[
\begin{cases}
\text{\textit{(\textbf{i})} Overcompleteness:} & c \gg d, \\[2pt]
\text{\textit{(\textbf{ii})} Quasi-orthogonality:} &
\mu(\D) \equiv \max_{i\neq j}|\D_{i}\D_{j}^\tr| \le \varepsilon, \quad \|\D_{i}\|_{2}=1, \\[4pt]
\text{\textit{(\textbf{iii})} Sparsity:} & |\operatorname{supp}(\bm z)| \le k \ll c.
\end{cases}
\]
\end{definition}

The LRH is useful precisely because it can be operationalized (see \cref{app:dico}). If activations admit such a sparse overcomplete representation, then concept discovery reduces to finding the appropriate overcomplete dictionary. 
Classical approaches of Dictionary Learning include Non-negative Matrix Factorization (NMF)~\cite{gillis2020nonnegative}, Sparse Coding~\cite{elad2010sparse} and, more recently, Sparse Autoencoders (SAEs) have emerged as efficient, large-scale instantiations of LRH \cite{cunningham2023sparse,bricken2023monosemanticity}, providing practical access to the latent basis.

A persistent challenge, however, is stability: naïve SAEs produce inconsistent features across runs, undermining interpretability~\cite{paulo2025sparse,papadimitriou2025interpreting}. 
To address this, we adopt a stable SAEs~\cite{fel2025archetypal}, which constrain each dictionary atom to lie in the convex hull of real activations. This guarantees that atoms remain in-distribution and yields reproducible, geometrically faithful dictionaries.
Formally, let $(\mathcal{X},\mathcal{F},\mathbb{P})$ denote the probability space of natural images, $\mathcal{X}\subset\R^{H\times W\times 3}$. For a pretrained Vision Transformer $\f:\mathcal{X}\to\R^{t\times d}$, any image $\x\sim\mathbb{P}$ yields activations $\a=\f(\x)\in\R^{t\times d}$, i.e.\ $t$ token embeddings of dimension $d$. For a batch $\X=(\x_i)_{i=1}^n$, concatenating all tokens gives $\A\in\R^{nt\times d}$. Our objective is to factorize $\A$ into sparse codes $\Z\in\R^{nt\times c}$ and a dictionary $\D\in\R^{c\times d}$ with
\[
\min_{\Z,\D}\ \|\A-\Z\D\|_F^2
\quad \text{subject to} \quad
\Z\ge 0,\ \|\Z_i\|_0\le k,\ \D_i \in \mathrm{conv}(\A).
\]
Here, $\Z \ge 0$ denotes an elementwise non-negativity constraint, i.e.\ all entries of $\Z$ are constrained to be nonnegative.
Concretely, we used DINOv2-B with 4 registers as $\f$, with $d=768$, $t=261$. We set $c=32{,}000$ atoms and $k=8$ active codes per token, approximate $\mathrm{conv}(\A)$ by $128{,}000$ centroids extracted via $k$-means over 1.4M ImageNet-1K images (with augmentation), and parametrize $\D=\S\C$ with $\S$ row-stochastic. Codes $\Z$ are produced via a single-layer encoder with BatchTopK projection~\cite{bussmann2024batchtopk,hindupur2025projecting}. Training with Adam for 50 epochs yields $R^2>88\%$ reconstruction fidelity, consistent with prior stability results.

\paragraph{Interpretation of $\Z$ and $\D$.} Essentially, the matrix $\Z$ encodes the \emph{statistical structure} of the activation space -- capturing which concepts are active, how frequently, and to what degree. In contrast, the dictionary $\D$ encodes the \emph{geometric structure} defining the atomic directions used to span the space and organize the model’s internal feature basis. This decomposition yields a library of 32,000 concept atoms, each interpretable as a linear probe on DINOv2 activations. 
With this basis in hand, we first investigate which concepts are recruited by different downstream tasks (\cref{sec:tasks}), 
then discuss their statistical and geometric organization in detail (\cref{sec:statistics_and_geometry_of_concepts}).

\clearpage
\section{Task-Specific Utilization of Learned Concepts}
\label{sec:tasks}
With the concept dictionary in place, we now ask: \emph{which of these concepts are actually used across downstream tasks?} 
To address this, we analyze how task-specific linear probes interact with the concept dictionary.
Consider our set of activations $\A \in \R^{nt \times d}$ and the concept representation $\Z\D$, since any downstream prediction of the form $\bm{Y} = \A \W^\tr$ with $\W \in \R^{o \times d}$, $o$ being the number of outputs (or classes), becomes $\bm{Y} = (\Z\D)\W^\tr = \Z\W'$ where $\W' = \D \W^\tr$ defines the effective alignment between dictionary concepts and task outputs.
This formulation provides a direct measure of each concept's importance through  $\E(\Z \W') = \E(\Z)\W'$, which represents the expected concept activation weighted by task-output alignment~\cite{ancona2017better}. 
In linear regimes, this score constitutes the optimal importance measure under standard faithfulness criteria (see Theorem~3 in~\cite{fel2023holistic})\footnote{Faithfulness quantifies how well importance scores predict feature perturbation effects, measured via C-Deletion, C-Insertion~\cite{petsiuk2018rise}, and C-$\mu$Fidelity~\cite{yeh2019infidelity}. In linear cases, these metrics coincide and share the same optimal attribution function.}. 
This optimality result validates $\E(\Z)\W'$ as the canonical measure of concept importance for task alignment (see \cref{app:importance} for details).

\paragraph{Different tasks recruit different concepts.}
Figure~\ref{fig:downstream_umap} reveals that different tasks recruit different subsets of concepts, often with minimal overlap. Classification activates a wide and dispersed array of concepts, while in contrast, segmentation and depth estimation draw on more compact and localized regions of the concept manifold. This may suggest \emph{functional regions} in the latent space, where concepts are reused non-uniformly across tasks.
Quantitatively, we confirm this asymmetry in~\cref{fig:downstream_intratask} (Left): classification draws from a broader span of the dictionary than segmentation or depth estimation. 
\begin{figure}[h]
    \centering
    \includegraphics[width=0.48\linewidth]{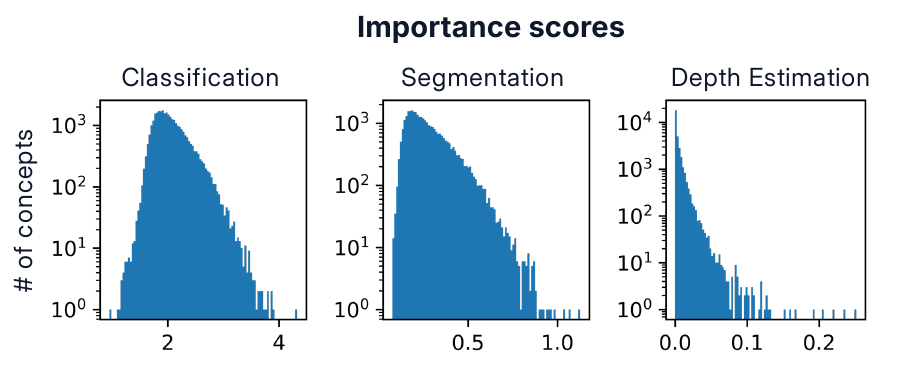}
    \includegraphics[width=0.3\linewidth]{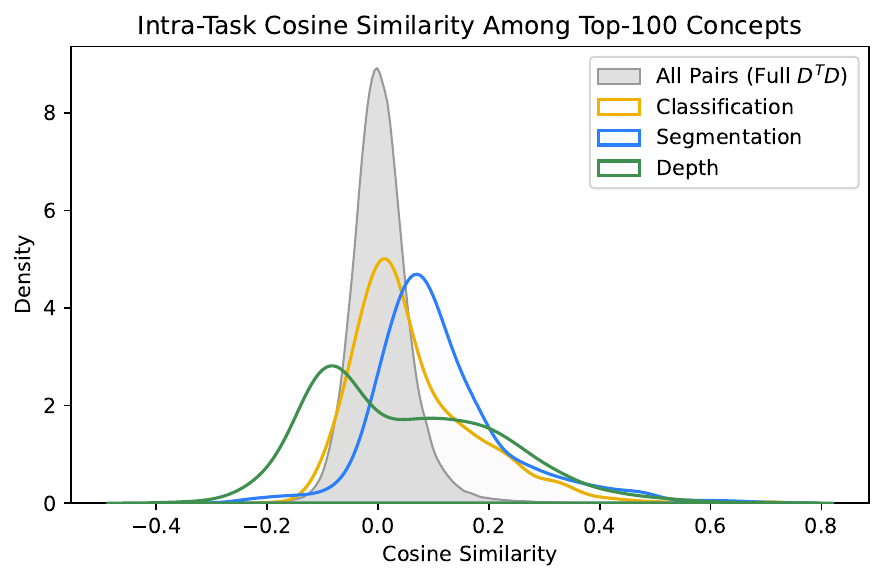}
     \includegraphics[width=0.2\linewidth]{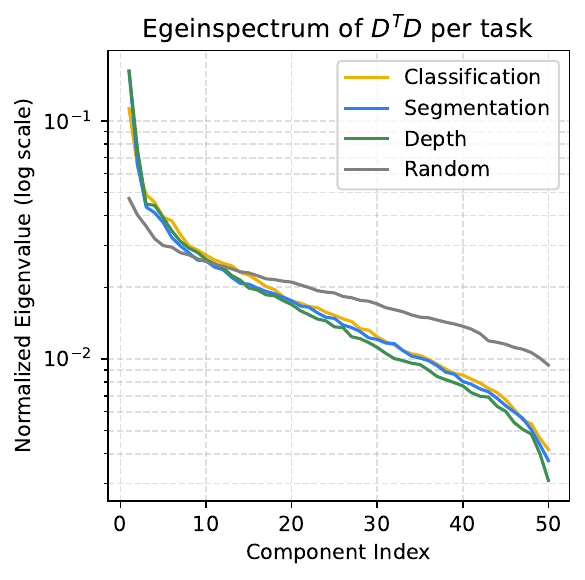}
    \caption{
    \textbf{(Left) Classification recruits more concepts than segmentation than depth.}  Classification utilizes a larger fraction of the dictionary compared to segmentation and depth, likely reflecting the higher rank of the classification head. This supports the view that task complexity and output dimensionality shape the breadth of concept recruitment.
    \textbf{(Middle) Intra-task concept similarity.} Cosine similarity histograms of the top 100 most important concepts per task, compared to random subsets of the dictionary. Intra-task concept pairs exhibit higher mutual alignment, deviating from the quasi-orthogonality expected of generic dictionary atoms. This suggests that functional concepts form more coherent subspaces. 
    \textbf{(Right) Spectral analysis of task-specific subspaces.} Singular value spectra of the top-100 task-relevant concepts reveal sharply decaying profiles for all tasks (especially segmentation and depth) indicating that each task activates a low-dimensional functional subspace. Compared to random concept subsets, task-aligned subspaces exhibit stronger concentration, supporting a ``functional subspace'' hypothesis.
    }
    \label{fig:downstream_intratask}
\end{figure}

In fact, we can show that the recruited concepts seem to bear geometric resemblance (see \cref{fig:downstream_intratask}).
We isolate the top 100 most task-aligned concepts per head and analyze their pairwise similarities. As shown in~\cref{fig:downstream_intratask} (Middle), intra-task concepts are significantly more aligned with one another than randomly selected concepts, breaking the quasi-orthogonality observed globally. Finally, in ~\cref{fig:downstream_intratask} (Right), we confirm this observation by comparing the eigenvalue spectrum of each task's sub-dictionary. All three spectra decay much faster than those of random subsets of concepts, indicating that task-specific concepts form a low-dimensional subspace.

Together, these findings suggest that perceptual tasks selectively activate low-dimensional, functionally specialized subspaces within the broader concept representation space of \dino. 
But what do these task-specific subspaces actually look like? Can we identify recurring families of concepts that characterize each task? We now turn to a qualitative examination of these patterns, starting with classification.

\begin{figure}[t]
    \vspace{-5mm}
    \centering
    \includegraphics[width=0.99\linewidth]{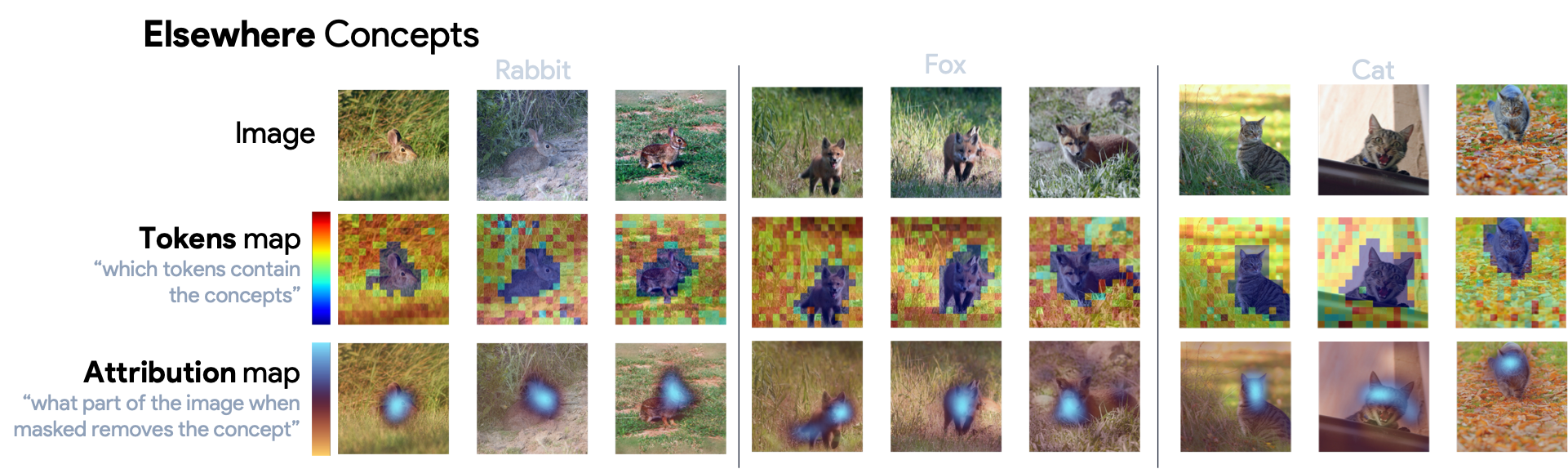}
    \caption{
    \textbf{``Elsewhere'' concepts reflect off-object activation conditioned on object presence.}
    Visualization of a recurring concepts pattern, consistently among the top-3 most important  concepts for several ImageNet classes (rows: rabbit, fox, cat), using token-level attribution (middle row) and causal masking~\cite{petsiuk2018rise} (bottom row). 
    These ``Elsewhere'' concepts consistently activate in tokens disjoint from the object, yet their presence is conditional on the object itself being present elsewhere in the image: they vanish when the object is removed. 
    Rather than capturing background texture, they express a structured logical relation: ``\emph{not the object, but the object exists}''. This suggests that \dino~implicitly learns a form of fuzzy spatial logic, distributing class evidence across both object-centric and off-object tokens. See \cref{app:elsewhere} for more details.
    }
    \label{fig:hollow}
\end{figure}

\paragraph{Classification and the \emph{Elsewhere} Concepts.}

Zooming into classification, we observe a consistent and intriguing pattern: across a wide range of ImageNet classes, the top few most important concepts typically include not only interpretable objects or object-parts, but also an ``\textit{Elsewhere}'' concept. As illustrated in~\cref{fig:hollow}, these concepts activate broadly across the tokens, but crucially \emph{not} on the object itself. Their firing is suppressed exactly where the object appears, and prominent in surrounding regions or background. Importantly, ``Elsewhere'' concepts are \emph{not} generic background detectors: their firing depends on the object’s presence, and vanish if the object is removed.
This suggests that certain concepts do not provide direct positive evidence for the presence of a class, but instead activate in regions excluding the object, while still being dependent on its presence elsewhere in the image. These concepts reflect a kind of conditional negation: ``the object exists elsewhere, but this token is not the object''. Such spatially disjoint activations may contribute to classification by implicitly outlining object boundaries, encoding contextual contrast, or supporting a distributed logic over tokens. However, their non-local nature can mislead attribution-based interpretations: concept heatmaps might falsely suggest reliance on background regions, when the actual signal lies in a structured off-object pattern. Further, this also violates the unspoken assumption of heatmaps visualization: \textit{The concept is just about the token where the concept fires}. 

This highlights the need for interpretability tools that take both concept localization and causal perturbation~\cite{shaham2024multimodal} into consideration.
Having explored classification, we now turn to the task of semantic segmentation.

\begin{figure}[t]
    \vspace{-5mm}
    \centering
    \includegraphics[width=0.99\linewidth]{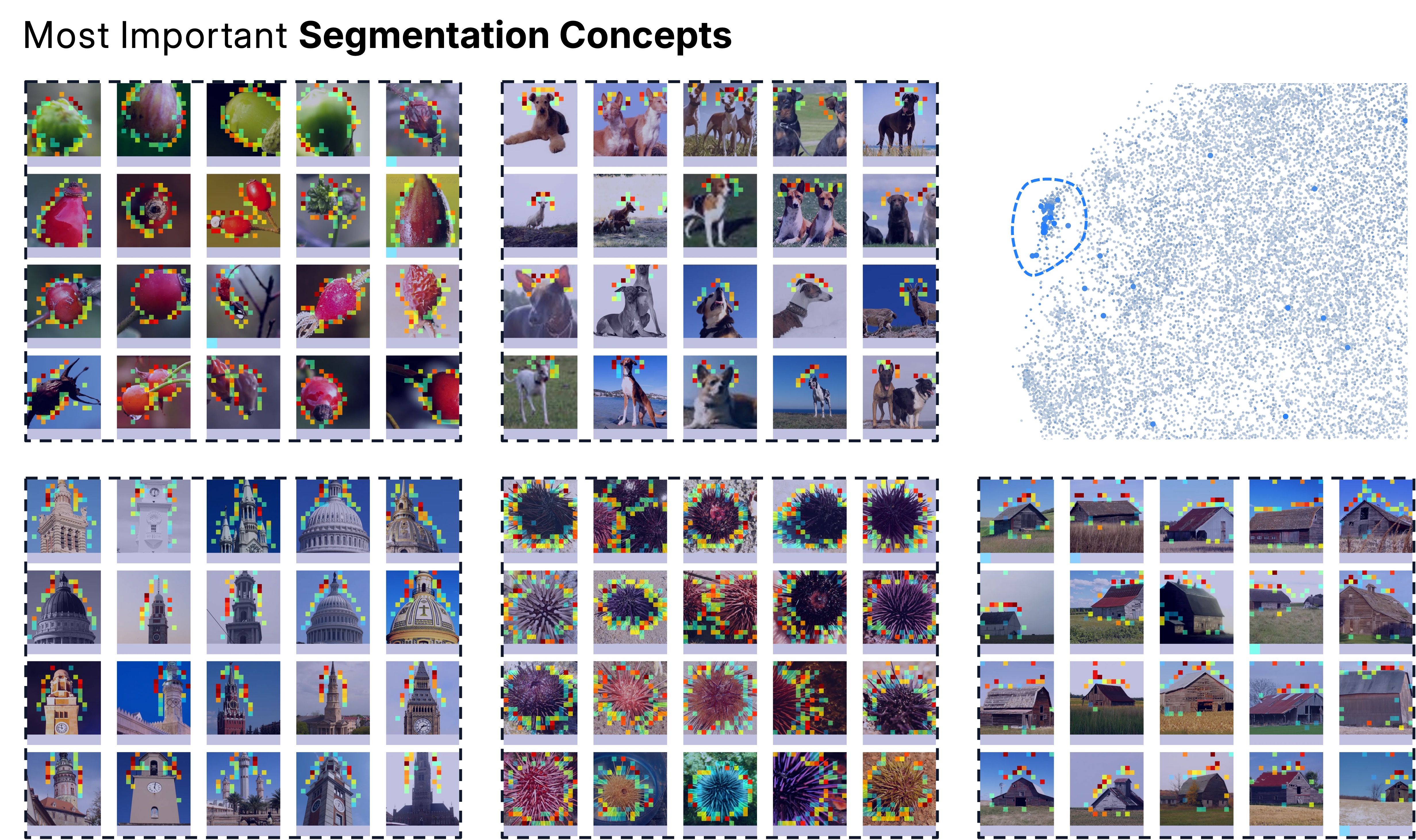}
    \caption{
    \textbf{Segmentation relies on spatially localized border concepts.}
    Examples of the most important concepts across segmentation tasks, visualized via token attribution (colored overlays). Most of these concepts activate along object boundaries, whether biological (e.g., limbs, heads) or architectural (e.g., domes, rooflines). Despite differences in content, these border concepts exhibit consistent spatial patterns and nontrivial similarity in embedding space (right), suggesting a shared functional role and a possibly low-dimensional submanifold within the concept geometry.
    }
    \label{fig:border}
\end{figure}

\paragraph{Segmentation and Border Concepts.}
For Segmentation, we observe that all the concepts among the top-50 consistently localize along object contours or spatial boundaries. As shown in~\cref{fig:border}, these ``border concepts'' activate narrowly along the periphery of objects (highlighting limbs, outlines, or silhouette transitions). Remarkably, while the precise visual features vary across classes (e.g., animal ears, tower edges, or sea urchin spines), the spatial footprints of these concepts remain strikingly consistent.
Furthermore, in the concept embedding space, these border concepts form a visibly tight cluster (\cref{fig:border}, right), suggesting that \dino~allocates a dedicated region of its representational geometry to encoding object boundaries. As quantitatively shown in \cref{fig:downstream_intratask}, their absolute cosine similarity is higher than average and their  eigenspectrum decays faster than a random subset of concepts suggesting a low-dimensional structure composed of boundary detectors.

Segmentation concepts reveal that \dino~dedicates portions of its concept space to encoding local spatial structure. We now examine depth estimation, another spatially grounded task, but one that requires global 3D understanding rather than contour localization.

\begin{figure}[t]
    \vspace{-2mm}
    \centering
    \includegraphics[width=0.99\linewidth]{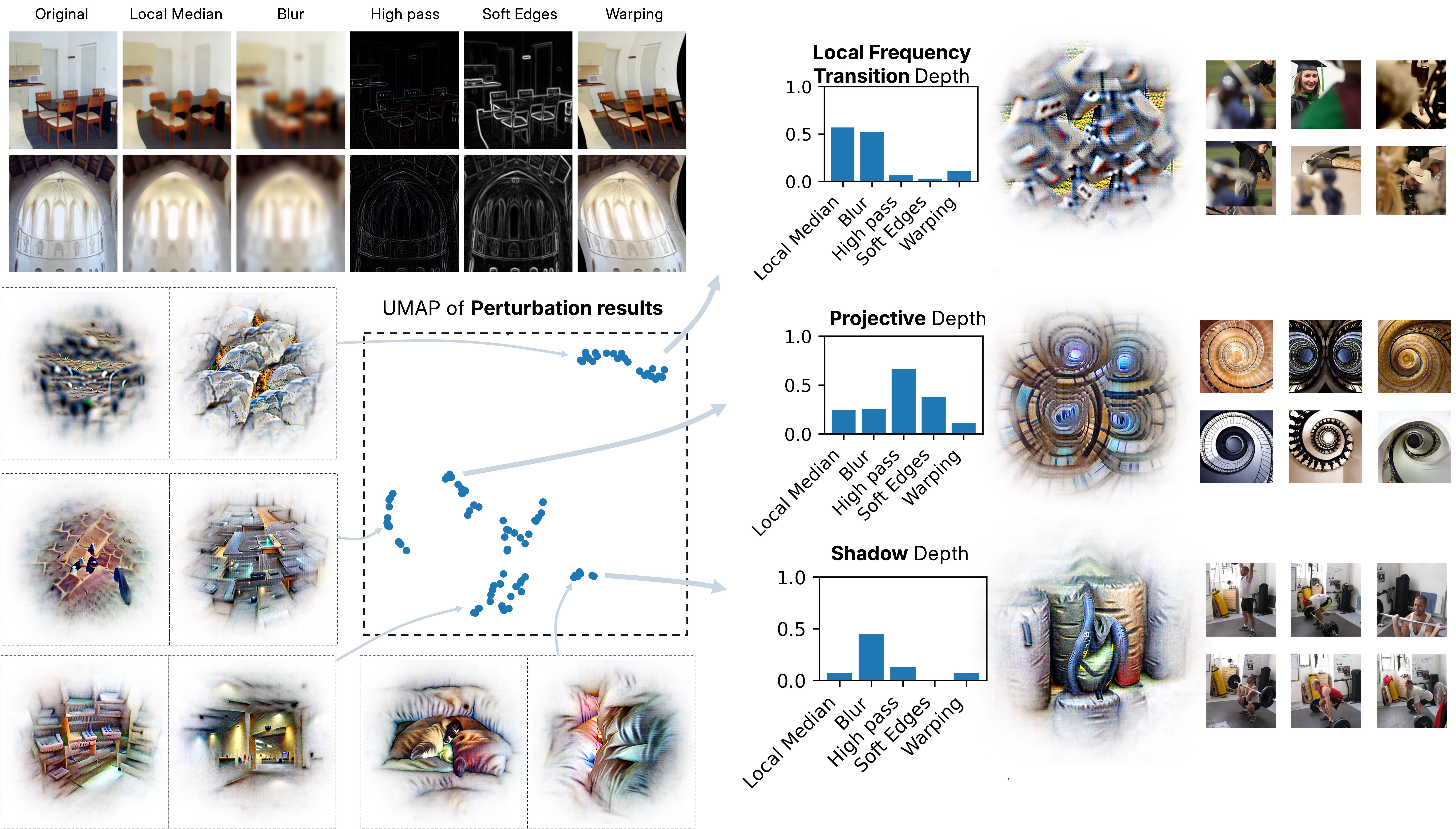}
    \caption{
    \textbf{\dino~encodes diverse monocular depth cues.}
    Visualization of key concepts used in monocular depth estimation tasks. We identify three dominant types: projective geometry cues (e.g., vanishing lines, converging structures), shadow-based cues (e.g., soft lighting gradients and cast shadows), and frequency-based cues (e.g., transitions between high- and low-texture regions). These findings suggest that \dino~learns a rich basis of 3D perception primitives from 2D data alone.
    }
    \label{fig:depth}
\end{figure}

\paragraph{Depth and Monocular Cue Concepts.}
Despite being trained without any explicit supervision for 3D understanding, \dino~exhibits a surprising aptitude for depth-related tasks~\cite{mao2024stealing,el2024probing,zhan2024general}. Prior studies have shown its features to be competitive in monocular depth estimation benchmarks, raising the question: what types of internal features support this capacity?
We conduct a targeted perturbation analysis of the most depth-relevant concepts. Specifically, we apply a set of controlled image manipulations designed to selectively remove or preserve different classes of monocular cues (e.g., local median blurring to suppress shadows, edge-preserving smoothing to retain object contours, or high-pass filtering to emphasize projective geometry of the scene).
We then measure the concept activation profiles across these perturbations and project the results onto a UMAP embedding (\cref{fig:depth}). This reveals three coherent clusters, each differentially sensitive to specific perturbation types. By examining both their activation patterns and the visual prototypes associated with each cluster, we identify three main families of depth-related concepts: (\textbf{\textit{i}}) \emph{projective geometry} cues, responsive to perspective lines and structural convergence; (\textbf{\textit{ii}}) \emph{shadow-based} cues, reliant on soft lighting gradients; and (\textbf{\textit{iii}}) \emph{local frequency transitions}, responding to abrupt changes in spatial detail or texture~\cite{schubert2021highlowFreq,Ding2023BiPartiteInvariances}, a pattern reminiscent of the \textit{bokeh} concepts identified in~\cite{fel2024understanding}.
Importantly, this taxonomy is not exhaustive: some depth-relevant concepts exhibit mixed sensitivity profiles or combine multiple cues. Full perturbation results for the top 100 depth-relevant concepts are provided in~\cref{app:fig:depth_full}.
Nevertheless, the observed clustering provides evidence that \dino~incorporates a set of interpretable, monocular depth cues. These features, emerging in a self-supervised setting, suggest that \dino's concept geometry supports not only object identity, but also depth reasoning, accessible via simple linear probes.

\section{Token-Type-Specific Concept}
\label{sec:tokenspecific}
Up to this point, we have studied \dino~concepts primarily through their semantic content and task alignment. However, this overlooks a fundamental structural aspect in Vision Transformers: the token types. In ViT architectures~\cite{dosovitskiy2020image}, not all tokens play the same role: \texttt{cls} and \texttt{reg} tokens are explicitly designed for global processing~\cite{darcet2023vision}, while spatial tokens correspond to local image patches. This raises a natural question: \emph{are some concepts specialized for specific token types? And if so, do they occupy distinct subspaces of the concept geometry?}
To address this, we study the footprint of each concept: the distribution of its activations across token positions. For every concept, we compute the entropy of its token-wise activation over 1.4 million images. Concepts with low footprint entropy are highly localized—activating consistently on specific token subsets—whereas high-entropy concepts are spatially diffuse and positionally agnostic.

\begin{figure}[h]
    \centering
    \includegraphics[width=0.75\linewidth]{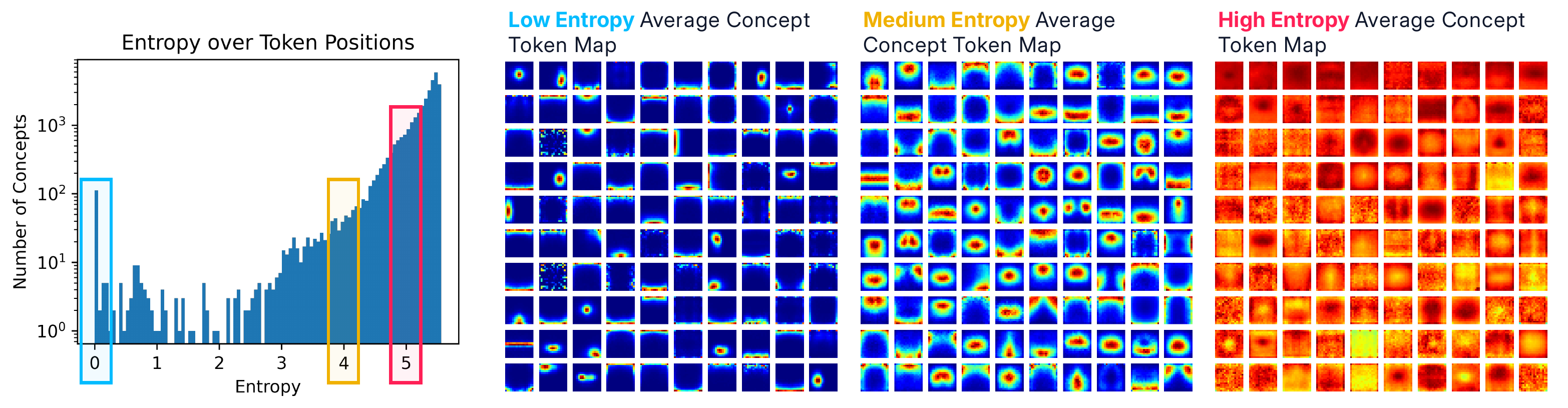}
    \includegraphics[width=0.24\linewidth]{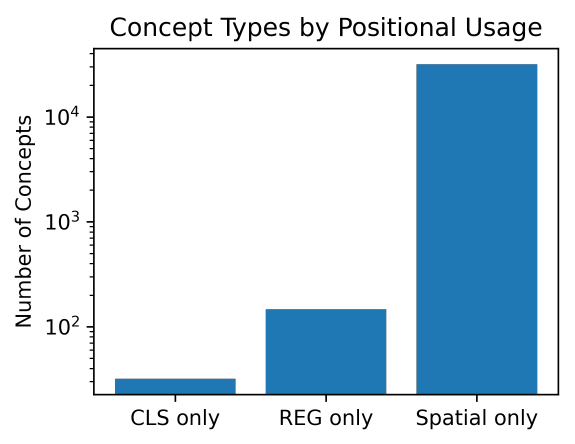}
    \caption{
    \textbf{(Left) Token-wise footprint of concepts.}
    Distribution of concept entropy across token positions. Most concepts are not token specific (high entropy), but a significant tail exhibits strong localization. The average activation maps for low-entropy concepts often reveal spatial edges (e.g., left/right, top/bottom) or special-token specificity. \textbf{(Right) Number of concepts by token-type exclusivity.} While only one concept fires exclusively on the \texttt{cls} token, hundreds are \texttt{reg}-only, indicating substantial specialization beyond positional embedding concept.
    }
    \label{fig:footprint}
\end{figure}

\paragraph{Footprint Types and Special-Token Concepts.}
\cref{fig:footprint} reveals a continuum of footprint entropy across the concept dictionary. While most concepts are positionally agnostic (high entropy), a distinct subset exhibits highly localized activations. We identify three main categories among low-entropy concepts:
(\textbf{\textit{i}}) Position-specific concepts that consistently activate on narrow spatial regions, such as ``left-only'' or ``bottom-only.'' These may reflect residual positional encoding, biases in the training data, or local geometric primitives.
(\textbf{\textit{ii}}) A unique \texttt{cls}-only concept, which fires persistently on the \texttt{cls} token across all images. This concept is closely tied to the positional embedding of the \texttt{cls} token and may act as an ``ID'' or ``passport''-like concepts in the network.
(\textbf{\textit{iii}}) A much more diverse (and surprising) group of \texttt{reg}-only concepts that activate exclusively on the registers tokens. Unlike the \texttt{cls} case, the variety and number of \texttt{reg}-specific concepts cannot be explained by position alone. This suggests the register tokens encode a set of non-spatial features that we explore in the next section.

\begin{figure}[t]
    \vspace{-8mm}
    \centering
    \includegraphics[width=0.95\linewidth]{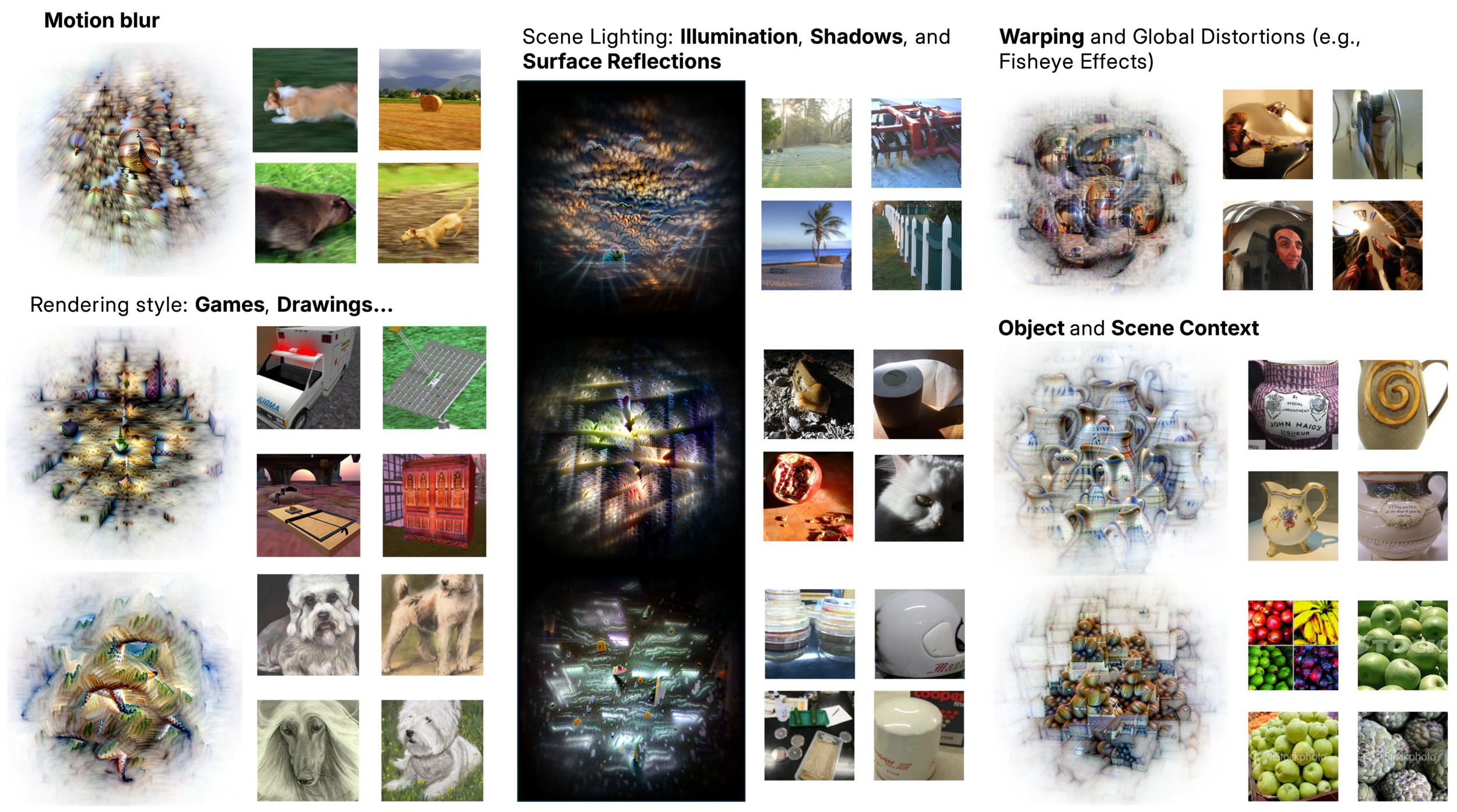}
    \caption{
    \textbf{Register-only concepts capture global, non-local scene properties.}
    Visualization of selected concepts that activate exclusively on \texttt{reg} tokens. These concepts are not object or region specific, but instead encode global properties such as motion blur, illumination, caustics reflections, lens effects, and style. Their emergence suggests that the register token acts as a potential conduit for abstract scene level information.
    }
    \label{fig:registers}
\end{figure}

\paragraph{Register-Only Concepts and Global Scene Properties.}
Upon inspecting the register-only concepts, an interesting pattern emerges: these concepts do not align with localized object parts or semantic categories. Instead, they seem to encode holistic, global attributes of the image. As shown in Figure~\ref{fig:registers}, register-only concepts respond to phenomena such as lighting style, motion blur, caustic reflections, or artistic distortion. Some even appear sensitive to camera properties (e.g., wide-angle warping or depth-of-field effects).

Interestingly, this specialization is highly asymmetric: while only a single concept activates exclusively on \texttt{cls}, hundreds specialize for \texttt{reg}. This suggests that DINOv2 not only encodes high-level features, but also distributes them across structurally distinct token pathways.

\clearpage
\section{Statistics and Geometry of Concepts}
\label{sec:statistics_and_geometry_of_concepts}
\begin{figure}[t]
    \vspace{-8mm}
    \centering
    \includegraphics[width=0.63\linewidth]{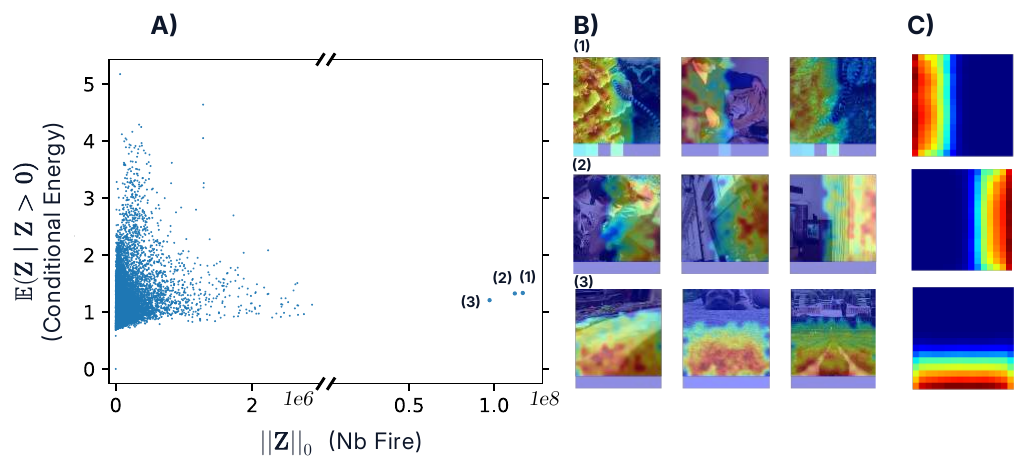}
    \includegraphics[width=0.35\linewidth]{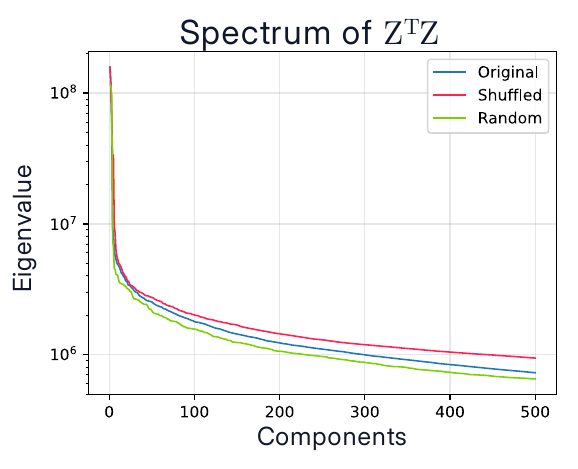}
    \caption{
    \textbf{A)} Conditional energy $\mathbb{E}(\Z_i \mid \Z_i > 0)$ versus number of firings (i.e., datapoints where $\Z_i > 0$) for each concept. 
    Most features follow a frequency–energy trade-off: rarely firing concepts tend to allocate higher energy per activation, while frequently active ones remain weaker. 
    The resulting triangular envelope indicates that concepts operate at distinct energy regimes, contributing unequally to the overall norm of the reconstructed activation: some features dominate norm-wise.
    Three outliers (labeled 1, 2, 3) deviate from this pattern by exhibiting dense~\cite{sun2025dense}, dataset-wide activation at relatively low energy.
    \textbf{B)} These outlier concepts encode persistent positional features (e.g., whether a token lies on the left, right, or bottom), suggesting the presence of dense representations within an otherwise sparse latent structure.
    \textbf{(Right)} Spectrum of the normalized concept co-activation matrix $\Z^\tr\Z$. The eigenvalue decay is smooth, with no sharp gaps or dominant modes, indicating the absence of block structure or low-rank modularity. Relative to random and shuffled baselines (see \cref{app:baseline_ztz} for details), the empirical spectrum retains broader variance across many components, reflecting high-dimensional patterns. This suggests that the concept space is richly connected and distributed, resisting simple clustering while still deviating from chance structure. 
    }
    \label{fig:stastistics}
\end{figure}

\paragraph{Concept Occurrence Statistics.} 
Having identified which concepts are recruited by downstream tasks, we next examine their broader statistical and geometric organization. 
\cref{fig:stastistics} (left) shows, for each concept, its \emph{conditional energy} $\mathbb{E}(\Z_i\mid\Z_i > 0)$, the expected activation given that it fires, plotted against its firing count, i.e., the number of datapoints where $\Z_i$ is nonzero. 
For the majority of concepts, the points are distributed within a triangular envelope: even at comparable firing frequencies, some concepts exhibit systematically higher activation energy than others. This reveals that concepts occupy distinct norm regimes, i.e. their contributions to the overall norm of the activation can differ markedly. 
In other words, certain features dominate (norm wise), whereas others remain energetically less important.%

However, superimposed on this sparse backdrop are three outlier concepts that exhibit an anomalously dense activation pattern~\cite{sun2025dense}, firing across the entire dataset. Upon inspection, these concepts are not arbitrary artifacts, but instead they encode positional information: specifically indicating whether a token lies on the left, right, or bottom of the input image grid. Their ubiquity reflects the persistent relevance of spatial structure within the model, and their emergence aligns with our decoding experiments, which reveal that positional information remains linearly decodable up to the intermediate layers (see \cref{app:position}) and form a 2-dimensional subspace at the penultimate layer (see~\cref{sec:shape_image}). 
This finding adds nuance to the assumption that the concept space is strictly sparse. Instead, we observe a hybrid representational regime where a small set of universally-active, (here spatially grounded features) coexists with a broader ensemble of highly selective, image-contingent activations, resembling sparse and dense low-rank representations~\cite{jiang2014sparse,pramanik2020deep}. 

\paragraph{Concepts Co-occurence.} The Gram matrix of concept co-activation $\bm{G} = \Z^\tr\Z \in \R^{c \times c}$, as shown in \cref{fig:stastistics} (right), has a spectrum that decays smoothly, with no gaps or dominant modes, providing little evidence for modular low-rank structure. 
As reference points, we create two baselines, one from random activations $\Z$ with same sparsity (random), and one directly shuffling the coactivation matrix $\bm G$ (shuffle). 
Relative to baselines (\cref{app:baseline_ztz}), it lies between the steep decay of the random surrogate and the inflated tail of the shuffled one. %
Below both would imply redundancy, above both extreme diffuseness; its intermediate position instead reflects a high-dimensional, distributed regime where concepts co-activate without forming clear clusters.

\begin{figure}[t]
    \vspace{-8mm}
    \centering
    \includegraphics[width=0.99\linewidth]{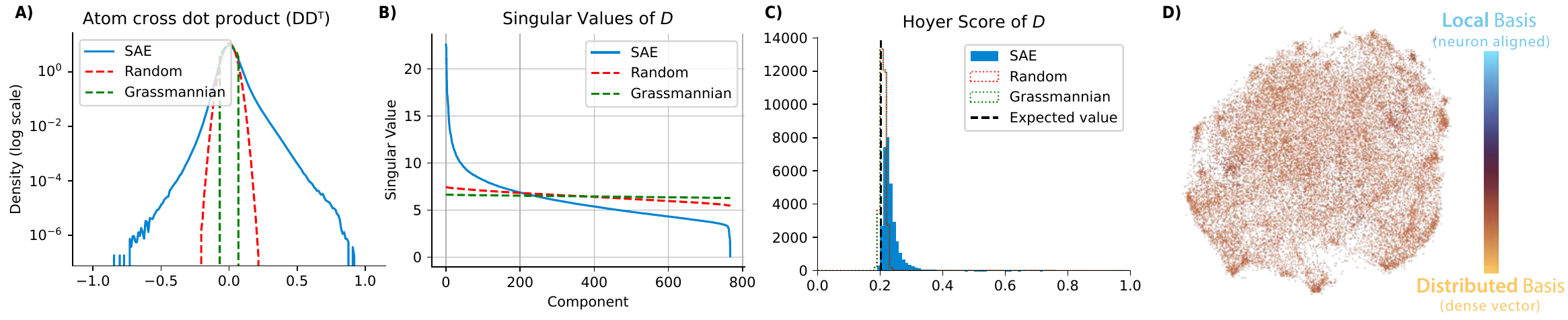}
    \caption{\textbf{Geometry of concept vectors.}
    \textbf{(A)} Distribution of pairwise inner products in the learned dictionary, compared to Random and Grassmannian baselines. While the Grassmannian frame approximates the theoretical optimum for mutual incoherence, and the Random dictionary reflects a typical null model, the SAE dictionary exhibits heavier tails and a flatter central peak—indicative of higher coherence. This suggests that the SAE geometry departs from the ideal of a uniformly incoherent basis, suggesting a clustered structure.
    \textbf{(B)} Singular value spectrum of the dictionary. SAE exhibits sharper decay than both baselines, suggesting lower effective rank and anisotropy: some directions are densely populated while others are sparse or unused.
    \textbf{(C \& D)} Hoyer sparsity scores and a UMAP embedding of dictionary atoms colored by Hoyer score. Scores remain below the neuron-aligned limit (1.0), indicating that concepts are not axis-aligned. Compared to baseline expectations (e.g., $1 - \sqrt{2/\pi} \approx 0.202$ for Gaussian vectors), the SAE atoms are slightly sparser, still supporting the notion of distributed representations.
    Together, these deviations from idealized Grassmannian dictionaries may reflect emerging structural organization—such as specialization or local orthogonality, potentially aligned with functional roles explored in \cref{sec:tasks}.
    }
    \label{fig:dictionary}
\end{figure}

\paragraph{Geometric Organization.}
Having examined the statistics of activations, we now turn to the geometry of the learned dictionary.\cref{fig:dictionary} summarizes structural properties of the concept basis $\D$, including pairwise dot products, singular value spectrum, and sparsity scores.
\textbf{Local Coherence.} Panel (\textbf{A}) reveals that relative to the isotropic random vectors (e.g. at initialization), the dictionary does not move towards the uniform incoherence of an ideal Grassmannian frame. Instead, the SAE dictionary moves to the other direction and exhibits higher coherence: while most atoms are weakly correlated, a non-negligible fraction form tight clusters with stronger alignment. This geometry departs from the assumptions of the LRH. Rather than approximating a Johnson–Lindenstrauss like embedding~\cite{larsen2017optimality},\footnote{The Johnson–Lindenstrauss (JL) lemma~\cite{johnson1984extensions} implies that \(p = \exp(\mathcal{O}(m \varepsilon^2))\) vectors can be embedded into \(\mathbb{R}^m\) such that all pairwise inner products are preserved up to a small distortion \(\varepsilon > 0\). This supports the feasibility of near-orthogonality even in regimes where \(p \gg m\).} the model appears to favor structured redundancy, potentially to support specialization or compositional reuse.
Moreover, a pattern of interest emerges upon inspecting the tail of the dot-product distribution: a small but significant number of concept pairs are nearly antipodal, i.e., $\D_i \approx -\D_j$ (see~\cref{fig:geometry_patterns}, A). This go against the assumption of a Grassmannian-like dictionary ~\cite{strohmer2003grassmannian}. Instead, it suggests that \dino~sometimes exploits paired directions to encode semantically opposed features such as ``left vs right'' or ``white vs black'' along shared lines but with opposite signs. Practically, such antipodal structure should be accounted when using cosine similarity in downstream analyses, particularly in data filtering, clustering, or retrieval contexts.
\textbf{Global Anisotropy.} The singular value spectrum in Panel (\textbf{B}) shows a sharp decay, indicating that the dictionary spans a low-dimensional subspace despite its overcomplete size. This again suggests an anisotropic allocation of representational capacity: some directions are densely populated, while others are sparsely or not at all represented. One potential explanation is that the activations $\A$ themselves have anisotropic covariance, thus to reconstruct them $\A\approx \Z\D$ with L2 error, more atoms are allocated to the more important and higher variance subspace.\footnote{A speculative hypothesis is that the affine components of the last LayerNorm deforms the activation space into an ellipsoidal geometry, locally increasing curvature along some directions. In such regions, dictionary learning may require more atoms to adequately cover the manifold with low reconstruction error. While this remains to be formalized, similar intuitions appear in manifold tiling and sparse coding contexts~\cite{peyre2009manifold}.}  %
\textbf{Distributed Encoding.} Finally, Panels (\textbf{C}) and (\textbf{D}) show that the dictionary atoms are not aligned to individual units: their Hoyer scores lie far below the one-hot bound. Compared to random baselines, SAE atoms are slightly sparser but remain broadly distributed—supporting the view that concept representations are shared across the population rather than axis-aligned~\cite{elhage2022superposition,colin2024local}.

\begin{figure}[t]
    \vspace{-8mm}
    \centering
    \includegraphics[width=0.48\linewidth]{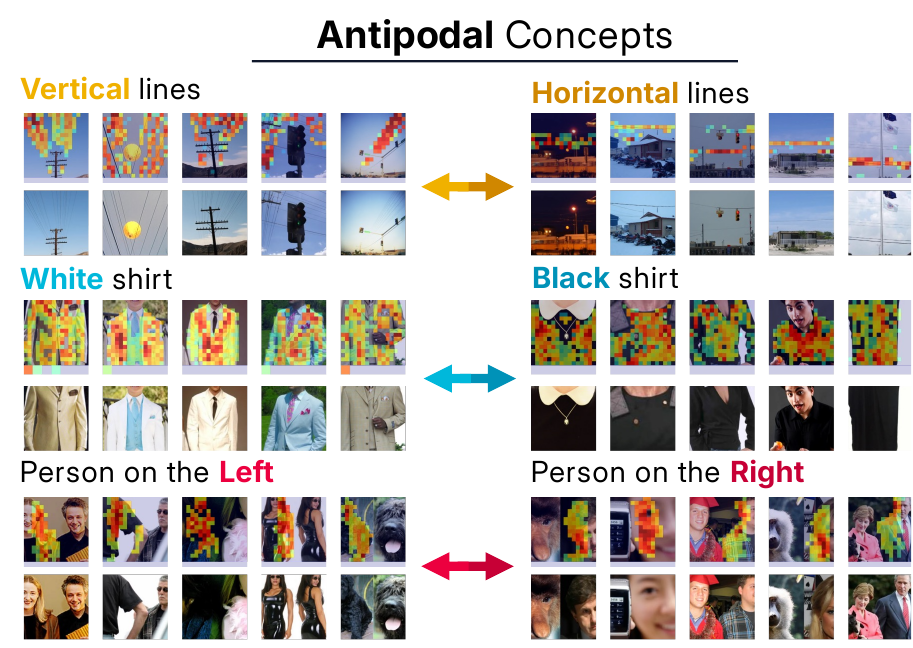}
    \hfill
    \includegraphics[width=0.50\linewidth]{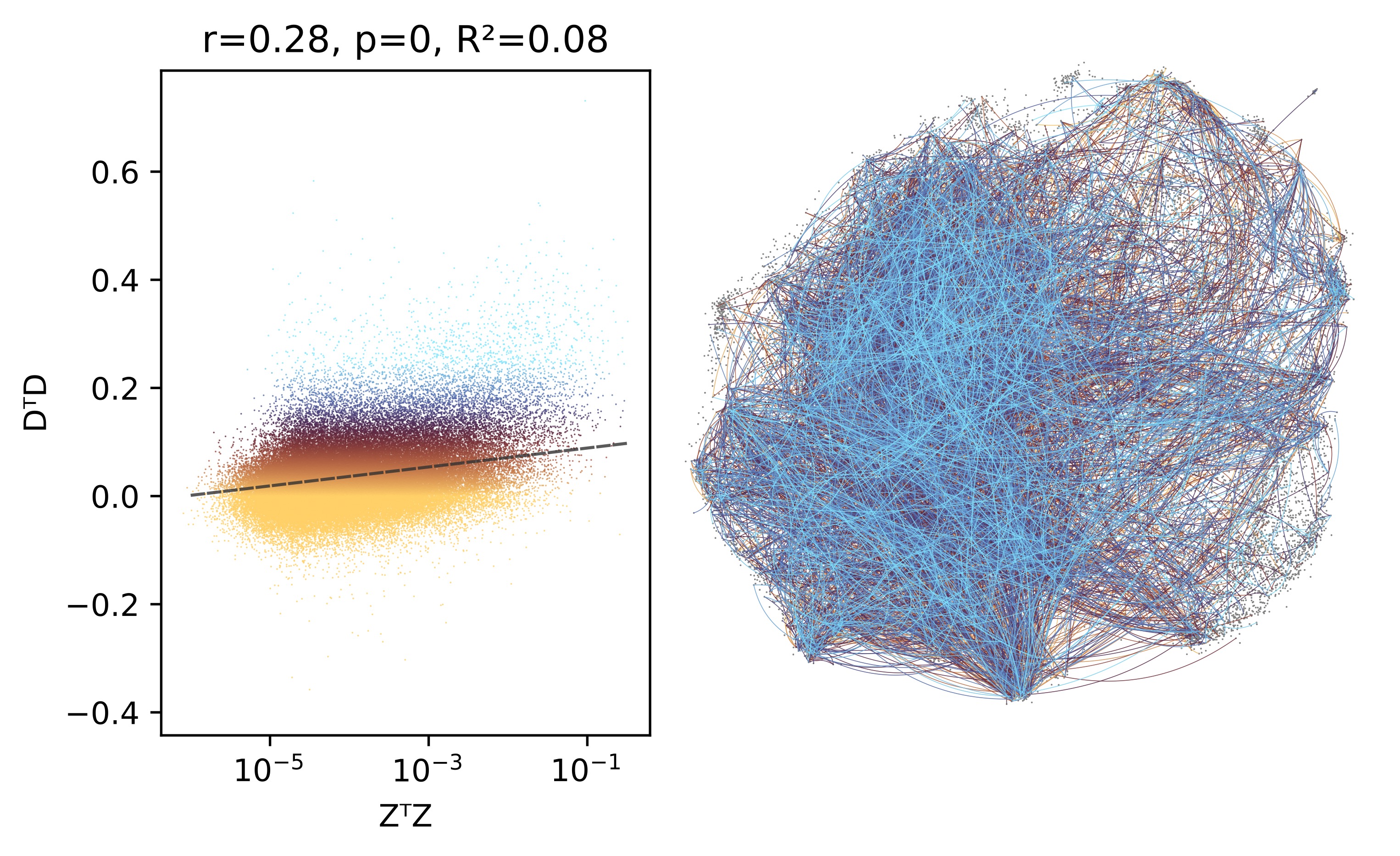}
    \caption{
    \textbf{Emergent structure in concept geometry.}
    \textbf{(A)} Antipodal concept pairs: examples where \(\D_i \approx -\D_j\). Many such pairs correspond to semantically opposed patterns—e.g., Vertical vs. Horizontal lines, or White vs. Black shirts. Despite their opposition, the vectors are nearly colinear with opposite signs, suggesting the model uses polarity to encode meaning.
    \textbf{(B) Concept geometry is only moderately shaped by co-activation statistics}. Left panel: correlation between concept co-activation (\(\Z^\tr \Z\)) and geometric similarity (\(\D \D^\tr\)) shows a weak relationship (\(r = 0.28\), \(R^2 = 0.08\)). Right panel: UMAP embedding overlaid with high co-activation links reveals widespread non-local connections and no clear modular organization, suggesting that usage statistics do not strictly determine spatial proximity in concept space.
    }
    \label{fig:geometry_patterns}
\end{figure}

\begin{figure}[t]
    \centering
    \includegraphics[width=\linewidth]{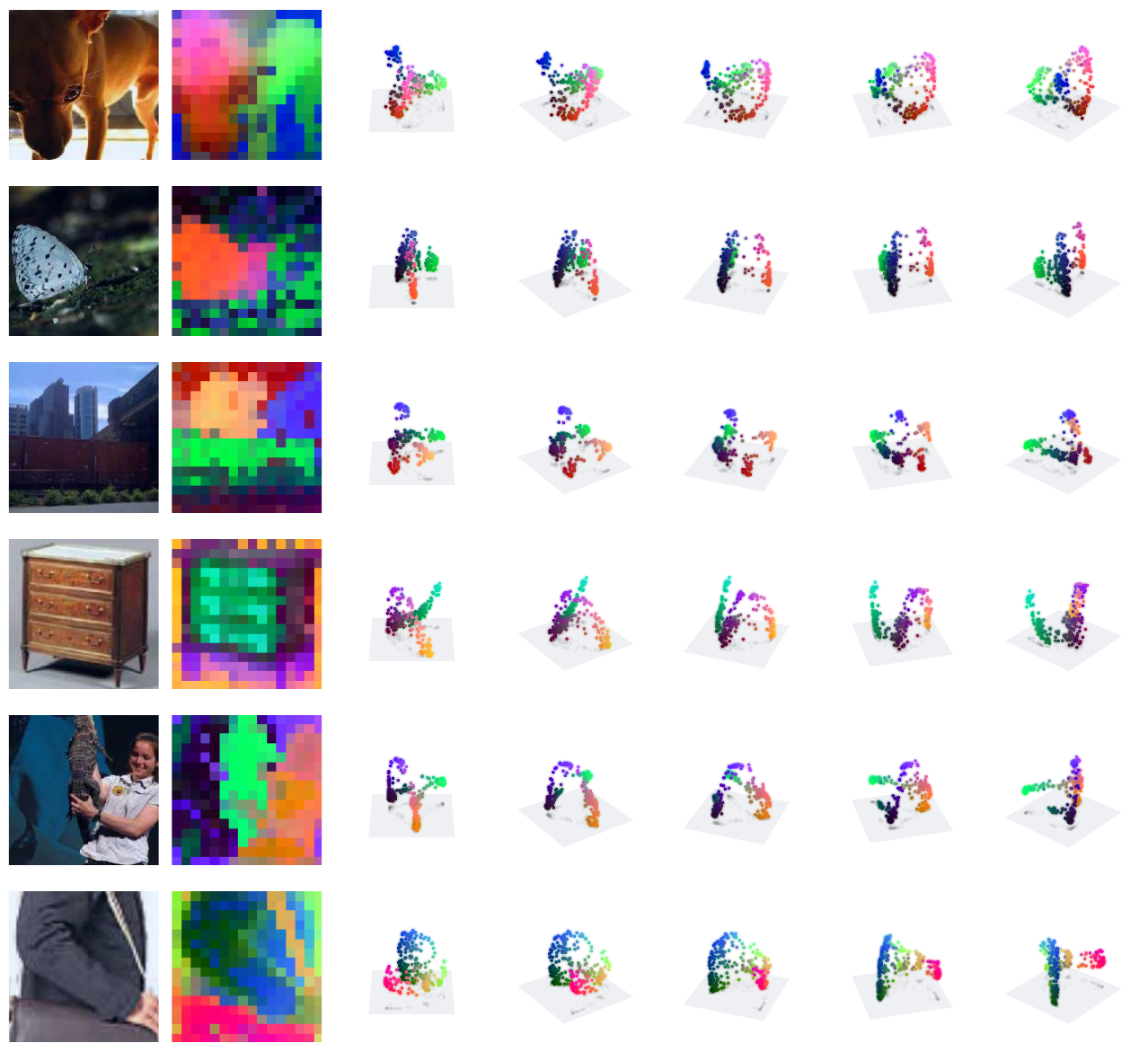}
    \caption{%
        \textbf{Local embedding geometry of patch tokens across random ImageNet samples.}
        Each panel shows a raw ImageNet image alongside its token-level geometry. Patch embeddings from \dino~are projected via PCA (on a per-image basis) and visualized by assigning the first three principal components to RGB color (normalized to $[0,1]$). Despite the lack of supervision, token embeddings appear to lie on smooth, low-dimensional manifolds, with transitions often aligned to object boundaries or perceptual contours. Importantly, PCA is an affine (linear) transformation and cannot generate curvature on its own, implying that the observed smoothness reflects genuine structure in the embeddings. 
    }
    \label{fig:local_geometry_pca1}
\end{figure}

\paragraph{Concept geometry is only weakly shaped by co-activation statistics.}
A natural hypothesis to explain this structure is that the geometry of the concept dictionary reflects patterns of co-activation, that is, concepts frequently used together are embedded nearby in representational space. In this view, co-occurrence of concepts would ``bend'' their geometry, and we should see clusters of co-occurring concepts.
To evaluate this, we compare the co-activation matrix \( \Z^\top \Z \), which captures how often pairs of concepts are jointly active, with the geometric affinity matrix \( \D \D^\top \). As shown in~\cref{fig:geometry_patterns}B, the two matrices exhibit only weak but statistically significant correlation. This suggests that while usage patterns exert some influence on geometry, they are not the dominant organizing principle. \footnote{Algebraically, It's interesting to notice that the correlation is roughly proportional to the trace of activation covariance $\mathrm{corr}(\Z^\top \Z, \D \D^\top) \propto \mathrm{tr}[\Z^\top \Z\ \D \D^\top]=\mathrm{tr}[\D^\top\Z^\top \Z\ \D ]\approx \mathrm{tr}[\A^\top \A]\propto\mathrm{cov}(\A)$, which is guaranteed to be positive. So this positive relation might be an intrinsic property of linear reconstructive methods}
Additional anecdotal evidence comes from the global structure of the concept space: in a UMAP embedding of the dictionary (\cref{fig:geometry_patterns}, right), edges representing top co-activations form a tangled, non-local graph rather than local module that would indicate close and co-activating concepts.

Together, these results characterize the learned dictionary as neither maximally incoherent nor uniformly distributed. Instead of approximating an optimal Grassmannian frame (which, again, would minimize pairwise correlations and maximize representational diversity) the SAE converges to a geometry that exhibits higher coherence, reduced effective rank. This configuration suggests that the model is not merely optimizing for feature packing or uniform coverage of the representational space. 
Rather, it allocates representation disproportionately across the dictionary: a significant fraction of atoms exhibit high pairwise alignment, and their span concentrates in lower-dimensional subspaces. 
As we have seen in Section~\ref{sec:tasks}, these directions often correspond to concept groups that are co-activated by specific downstream objectives, indicating a functional partitioning of the dictionary aligned with task demands.

Our analysis shows that while sparse coding provides a useful starting point, 
several observations sit uneasily with this view. 
We find dense positional features alongside sparse ones, coherent clusters and antipodal pairs rather than near-orthogonal atoms, and low-dimensional task-specific subspaces. 
These patterns are difficult to reconcile with a purely linear superposition model, and instead point toward additional geometric constraints. 
To probe this structure more directly, we next examine local image-level geometry.

\clearpage
\section{The Shape of an Image}
\label{sec:shape_image}

\begin{figure}[t]
    \vspace{-8mm}
    \centering
    \includegraphics[width=0.95\linewidth]{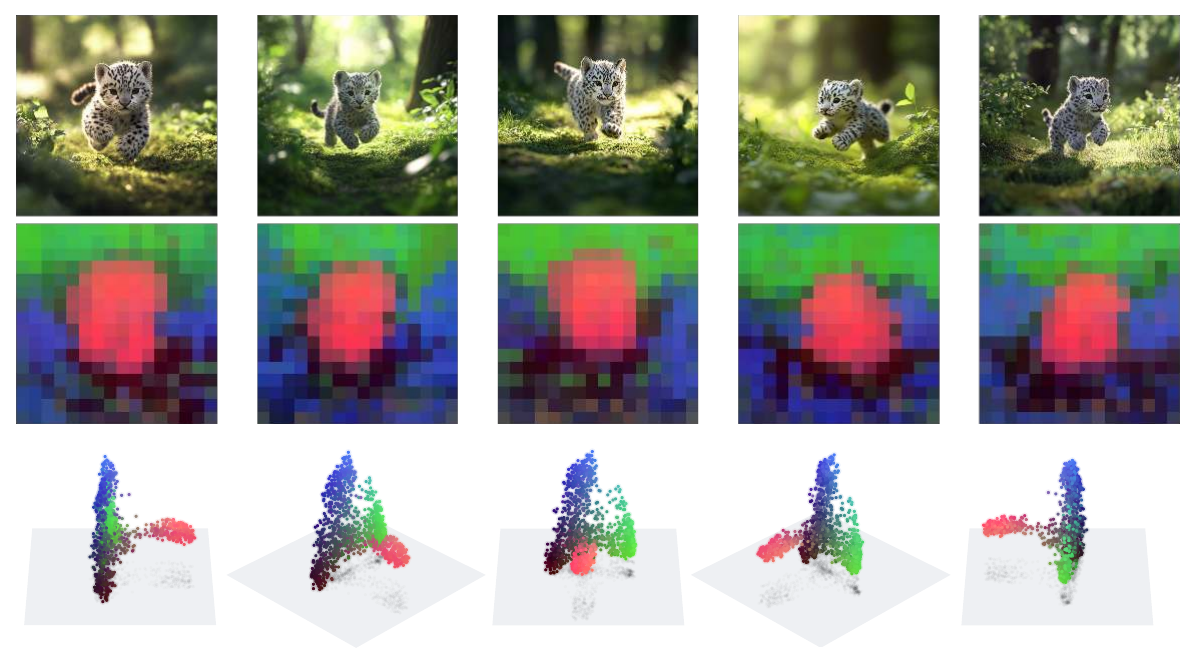}
    \caption{
    \textbf{PCA maps of \dino~patch embeddings reveal locally connected, semantically aligned structure.}
    \textbf{(Top)} Original images. \textbf{(Bottom)} PCA heatmaps of the top three components of patch-token embeddings encoded as $(r,g,b)$ values. Despite no supervision for localization or segmentation, the PCA projections consistently delineate object shapes with smooth transitions across the image grid. Since PCA is a linear operator, it cannot fabricate curvature; the observed connectedness therefore reflects genuine structure in the embeddings. Moreover, the alignment of token geometry across distinct images suggest that the representation is not purely relative (i.e., based only on distances within each image tokens).
    }
    \label{fig:pca_animals}
\end{figure}

All previous sections examined \dino~representations through the lens of dictionary learning. Yet several observations point to additional structure beyond a strictly quasi-orthogonal sparse superposition: anisotropic clustering, low-rank task subspaces, antipodal semantic axes, and locally connected neighborhoods (see \cref{fig:pca_animals}). Motivated by these signals, we now turn to the ``model-native'' geometry of tokens, without imposing a dictionary factorization. Specifically, we ask: within a single image, how are patch tokens organized in the high-dimensional embedding space? This geometry is often visualized using PCA, where token embeddings are projected onto top components and shown as colored 2D maps~\cite{oquab2023dinov2,darcet2025cluster}, which are widely used to illustrate semantic structure and even serve as compact inputs to student models in distillation pipelines~\cite{kouzelis2025boosting}.

Our goals in this section are twofold: (i) to demystify what PCA actually captures in single-image embeddings, particularly the origin of the observed ``smooth'' structure; and (ii) to characterize token-level geometry across layers without presupposing a sparse, near-orthogonal coding model.

\paragraph{Just Position?}
A natural hypothesis is that the local connectedness observed in PCA maps, illustrated in \cref{fig:pca_animals} and \cref{fig:local_geometry_pca1}, is inherited from positional encodings. After all, spatial position is the only structured signal available at initialization, and remains entangled with token identity throughout the network.
To test this, we begin by analyzing the evolution of positional information across layers. We trained linear decoders to decode the token position from each layer's activations and extracted the corresponding decoding vectors at each spatial location $(i,j)$ to form a position encoding matrix $\bm{P} \in \R^{256 \times d}$ at each layer (see \cref{app:position} for more details). In early layers, this positional embedding matrix $\bm{P}$ is effectively high-rank: we can linearly decode the precise $(i,j)$ location of a token with near-perfect accuracy (see Fig.~\ref{fig:pos_basis}, Top). This aligns with the model's need for precise positional resolution during early attention operations. However, as shown in Fig.~\ref{fig:pos_basis} (top, right), the rank of the positional subspace decreases sharply over depth, eventually collapsing to a 2D plane in the final layers. This compression is reminiscent of a transition from place cell-like coding schemes~\cite{OKeefe1971Hippocampus}, where individual units encode specific spatial locations, to axis coding~\cite{chang2017code} where each dimension encodes the continuous value of horizontal or vertical coordinates. This compression is also visible in the PCA projection of positional embeddings themselves (\cref{fig:pos_viz_across_layers}) that eventually collapse to a smooth 2D sheet in the final layers.

\begin{figure}[t]
    \vspace{-5mm}
    \centering
    \includegraphics[width=0.35\linewidth]{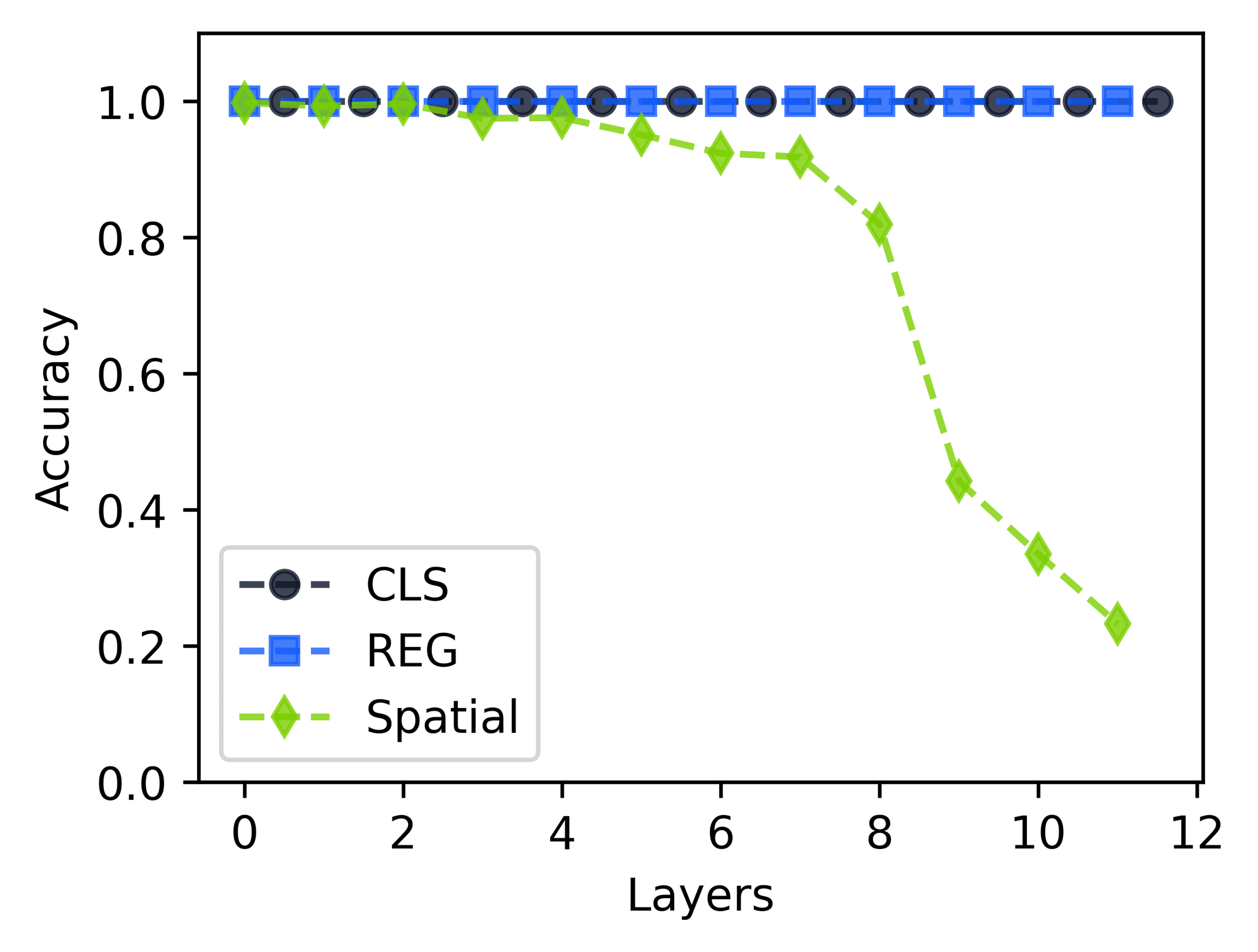}
    \includegraphics[width=0.60\linewidth]{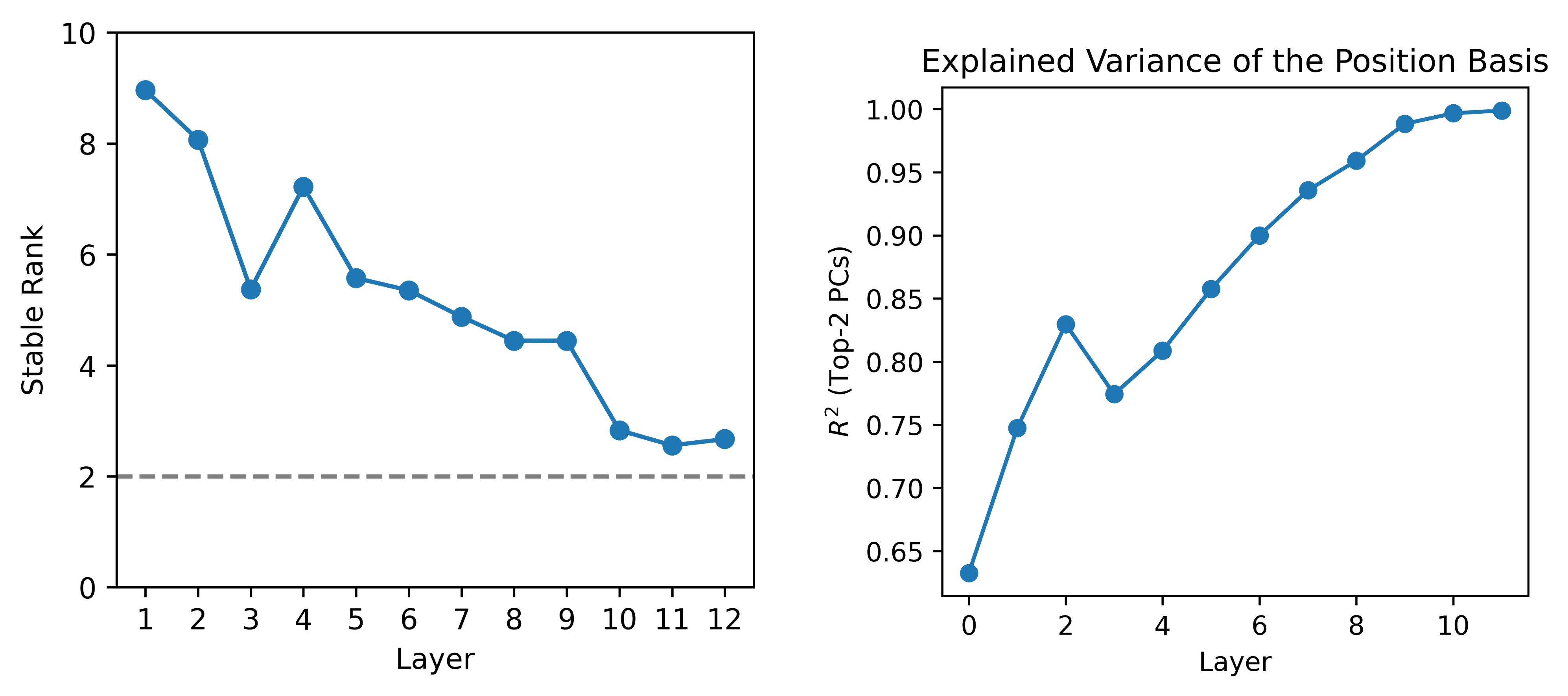}
    \includegraphics[width=0.95\linewidth]{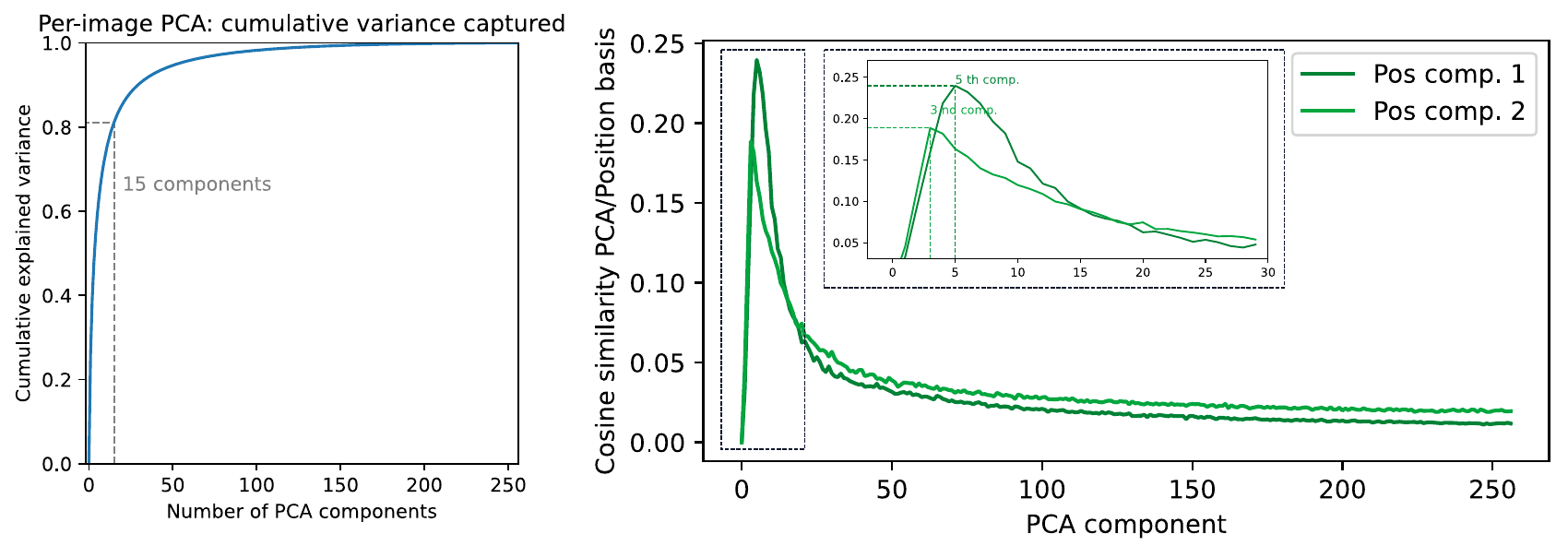}
    \caption{
    \textbf{(Top) Positional encodings compress over layers}
    (Left) Linear decodability of token position across layers: precise spatial coordinates are recoverable up to layer 8, after which accuracy collapses. \texttt{cls} \& \texttt{reg} tokens are linearly decodable until the end. (Right) The rank of the positional embedding subspace steadily drops, converging to a 2d subspace by the final layers—indicating strong compression. \textbf{(Bottom)} Few PCA components of single images tokens explain the variance of tokens across the entire image, suggesting they are lying in a low dimensional subspace. On average, the image-wise PCA shows that position basis correlates with components 3 and 5, but not the dominant directions. The ``smoothness'' of the PCA maps persists even when positional components are removed, suggesting that this reflects deeper geometric organization beyond position alone (for a qualitative example see \cref{app:position}).
    }
    \label{fig:pos_basis}
\end{figure}

This compression is also visible in the PCA projection of positional embeddings themselves (\cref{fig:pos_viz_across_layers}). The positional basis contracts into a two-dimensional structure, possibly reflecting the emergence of coarse spatial priors. While this may contribute to the overall connectedness of PCA maps, it cannot fully explain them.
Indeed, when we project token embeddings from actual image inputs, the principal components that correspond to position appear only in intermediate directions—typically around the 3rd to 5th component (see~\cref{fig:pos_basis}, bottom). Moreover, PCA maps remain locally connected even after removing positional components (see~\cref{app:fig:pca_no_pos}). This indicates that PCA is capturing something beyond explicit position.

\begin{figure}[h]
    \vspace{-1mm}
    \centering
    \includegraphics[width=0.95\linewidth]{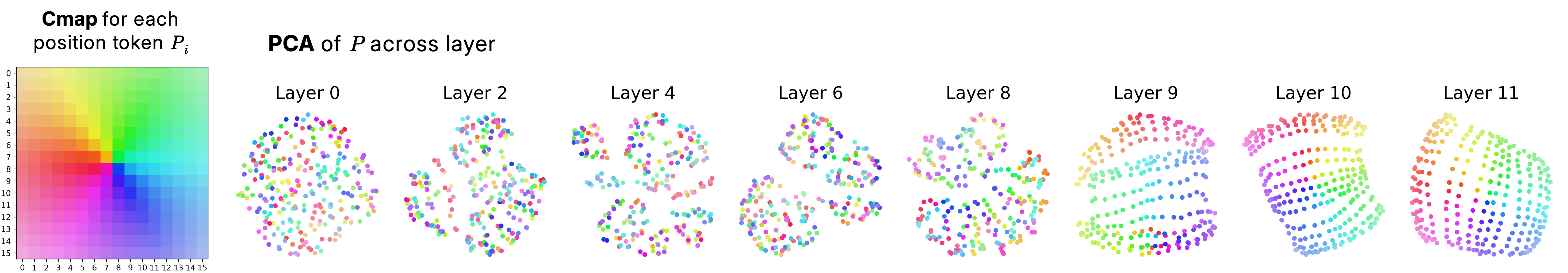}
    \vspace{-3mm}
    \caption{
    \textbf{Visualization of positional encoding across layers reveals compression to a 2D sheet.}
    PCA projections of the positional encoding vectors at different layers show a clear evolution: from high-rank, dispersed geometry in early layers (\cref{fig:pos_basis}) to a low-dimensional sheet in the final layers.}
    \vspace{-5mm}
    \label{fig:pos_viz_across_layers}
\end{figure}

\paragraph{Toward interpolative geometry.}
\cref{fig:pca_animals} shows that, across many images, patch tokens occupy a consistent low dimensional and locally connected set that aligns with object and coherent regions. 
First, this stability across images suggest that embeddings are not purely relative. 
Second, we saw that the connectedness is not explained by position alone, since principal components that encode position have intermediate rank and PCA maps remain locally connected after removing the positional subspace (see~\cref{fig:pos_basis} and \cref{app:fig:pca_no_pos}). 
What remains is the possibility that tokens interpolate smoothly between a discrete set of anchor points or landmark representations.

This interpolative structure finds natural support in DINOv2's training objectives. The global DINO head matches teacher representations using 128k prototypes, while the iBOT~\cite{zhou2021ibot} head performs masked prediction with its own 128k prototype vocabulary\footnote{The DINOv2 architecture employs two distinct prototype-based heads: (1) a DINO head consisting of a \texttt{4096→4096→256} MLP that outputs scores over 128k prototypes for image-level self-distillation, and (2) an iBOT head with a \texttt{4096→4096→256} MLP outputting scores over 128k prototypes for masked token prediction. Both heads apply softmax normalization to produce probability distributions over their respective prototype vocabularies, effectively implementing implicit clustering with soft assignments.}.
The Kozachenko-Leonenko regularizer~\cite{sablayrolles2018spreading} further encourages entropy maximization, promoting uniform distribution across the representation space while avoiding collapse. 
Rather than learning arbitrary smooth manifolds, this setup naturally favors representations that can be expressed as mixtures of multiple prototype sets -- each token becomes a weighted combination of landmarks from different prototype vocabularies.
This suggests that the observed connectedness may arise not from continuous interpolation per se, but from discrete mixing of archetypal points, where the ``interpolation'' reflects probabilistic combinations across multiple archetypal systems.
To formalize this geometric intuition, the next section investigates how such landmark-based representations can be recovered and whether attention mechanisms naturally generate this structure.

\section{\hyp}
\label{sec:hypothesis}

\begin{figure}[t]
\vspace{-8mm}
\centering
\includegraphics[width=0.75\linewidth]{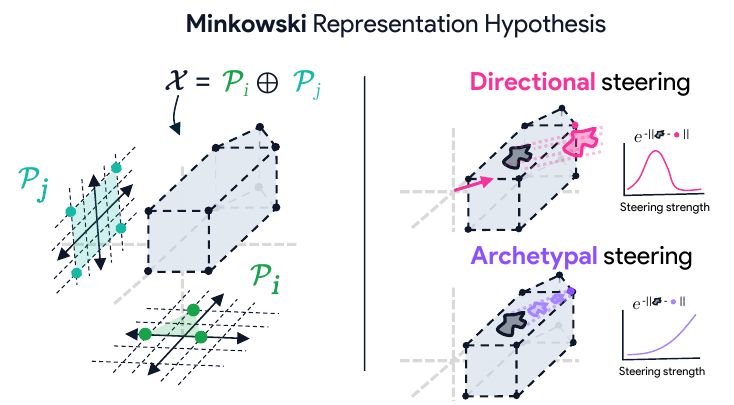}
\caption{
\textbf{Schematic view of the Minkowski Representation Hypothesis.}
\textbf{(Left)} The activation space $\mathcal{X}$ is represented as the Minkowski sum $\mathcal{X} = \mathcal{P}_i \oplus \mathcal{P}_j$ of polytopes formed by archetypal points (colored dots). 
Each polytope $\mathcal{P}_i = \conv(\A_{\mathcal{T}_i})$ is the convex hull of archetypes within tile $\mathcal{T}_i$, shown as the lower geometric structure.
This Minkowski sum structure naturally emerges from multi-head attention, where each head produces convex combinations of its value vectors, and the final output layer additively combines these convex hulls from different heads.
\textbf{(Right) Consequences for steering.} Under directional steering (\textcolor{pink}{\textit{Pink}}), interventions follow unbounded linear directions that quickly leave the manifold, with steering strength first increasing the proximity to the archetype before decreasing it when moving away from valid representations.
Under archetypal steering (\textcolor{purple}{\textit{Purple}}), interventions move toward specific archetypal landmarks within the polytope structure, with proximity increasing with steering strength.
}
\label{fig:minkowski_hypothesis}
\end{figure}
While hypotheses about representational composition are rarely stated explicitly, the geometry assumed by our methods is foundational: it constrains both their validity and the phenomena they can reveal.
If sparse autoencoders implicitly answer to the Linear Representation Hypothesis, then we understand the importance of specifying the correct ambient geometry: it not only conditions how we interpret representations, but also determines the very form that our analytical methods of concept extraction can take.

Armed with our preceding observations, we contend that an alternative account can explain the phenomena we documented, in particular the interpolative geometry within single images. 
Our motivation traces to Gärdenfors’ conceptual spaces, where concepts inhabit convex regions along geometric quality dimensions \cite{gardenfors2004conceptual}. 
Put plainly, we have reasons to believe that the observed interpolation is the surface of a deeper organization: the activation space behaves as a sum of convex hulls~\footnote{This aligns with evidence that concept structure can be convex and compositional in other domains~\cite{park2025geometry}}. 
A single attention head already performs convex interpolation over its values, creating an archetypal geometry; multi-head attention then aggregates these convex pieces additively, yielding a Minkowski sum. 
It is natural to imagine one hull reflecting token position, another depth, another object or part category, so that the final activation is the sum of these convex interpolations, and the ``concepts'' available to probes are the archetypes governing each hull. 
In this section, we will start by making this idea explicit, then we will review some theoretical evidence based on previous works and showing how simple attention blocks generate such geometry, Then, we will follow showing some empirical signals that make the proposal a plausible candidate and the implication of such geometry. 
We now formally define the \hyp.
\begin{definition}{\textbf{Minkowski Representation Hypothesis (MRH).}}
Let $\mathcal{X} \subset \mathbb{R}^{d}$ be a layer’s activation space and $\x \in \mathcal{X}$. Let $\Arch = (\bm{a}_{1},\ldots,\bm{a}_{c}) \in \mathbb{R}^{c\times d}$ be an Overcomplete Dictionary of Archetypes ($c\gg d$).
We partition the archetypes into $m$ disjoint tiles $\{\T_i\}_{i=1}^m$, $\T_i\subset\{1,\ldots,c\}$, and define the tile polytopes:
\[
\P_i=\conv(\Arch_{\T_i}) \quad \text{with} \quad \Arch_{\T_i}=\{\bm{a}_j: j\in\T_i\}.
\]
Then $\mathcal{X}$ satisfies MRH if
\[
\begin{cases}
\text{\textit{(\textbf{i})~~~Minkowski sum:}} &
\mathcal{X}=\oplus_{i=1}^{m}\P_i.\\
\text{\textit{(\textbf{ii})~~Block-convex coding:}} &
\x = \sum_{i \in S} \z_i \Arch_{\T_i},\ \ \z_i \in \Delta^{|\T_i|},\ \ |S| \ll m. \\
\end{cases}
\]
where $\Delta^{k} = \{\bm{z} \geq 0 : \mathbf{1}^{\top}\bm{z} = 1\}$.
\label{def:minkowski}
\end{definition}
Put simply, MRH says that a point $\x$ is a sparse composition of concept regions, with only a few tiles active $|S| \ll m$, and each active tile contributing a convex combination of its archetypes. For example, a token representing a rabbit might combine: (1) a convex mix from an ``animal category'' tile (capturing rabbit-like features), (2) a convex mix from a ``spatial position'' tile (left/center/right), and (3) a convex mix from a ``depth'' tile (foreground/background). The final activation is the sum of these convex contributions.
More formally, the activation set is generated by Minkowski addition of distinct tile polytopes $\P_i=\conv(\Arch_{\T_i})$, with a union over attention patterns when different tiles are selected across inputs, rather than a single global linear space: representations live in localized convex regions whose sums capture the interpolative geometry we observe. Before turning to empirical evidence, we briefly review prior work that hinted at such convex organization and show how standard attention mechanisms naturally generate this structure.

\subsection{Theoretical account}

We provided a formal definition of MRH and connected it to the interpolative geometry that could arise under DINO-style self-supervision, which encourage smooth interpolation between prototypes. We now give theoretical support for MRH along three complementary avenues.

\paragraph{Gärdenfors' conceptual spaces}
In \cite{gardenfors2004conceptual}, Peter Gärdenfors models concepts as convex regions within domain-specific coordinate systems; points represent individual objects and prototypes correspond to focal (often extreme) points of these regions. Dimensions capture perceptual or functional qualities (e.g., color, weight, temperature), and category membership varies with proximity -- if two objects belong to a category, then any point on a straight path between them remains within the category’s region. From this perspective, MRH can be viewed as an operationalization of Gärdenfors’ theory for decomposing visual concepts: tiles (or regions) correspond to convex concept sets, while archetypes play the role of prototypes. A natural question follows: is such an instantiation realistic in modern architectures, and what evidence favors regions built from points rather than purely directional features?

\paragraph{Convex-geometric precedents}
Prior work provides evidence that situates MRH within a convex-analytic tradition, which we organize into three strands. First (i), in piecewise-linear networks, ReLU-based architectures partition input space into convex polyhedral linear regions (often unbounded). Theoretical analyses of this partition structure have established upper and lower bounds (and in some cases exact counts) on the number of linear regions and have clarified depth–width trade-offs; in parallel, the spline perspective makes the underlying affine-template structure explicit~\cite{montufar2014number,telgarsky2015representation,serra2018bounding,raghu2017expressive,balestriero2018spline,balestriero2020mad}. Empirically, prior studies have found that trained networks realize far fewer regions than the maximal theoretical counts and that the realized tessellations depend sensitively on optimization, yielding structured yet parsimonious decompositions~\cite{hanin2019complexity,zhang2020empirical}; related interpretability work have  exploited this polyhedral structure by enumerating regions to extract exact piecewise-linear rules~\cite{black2022interpreting,chu2018exact}. Second (ii), in representation space, recent analyses demonstrate convex organization of activations and architecture-specific convex projections~\cite{tvetkova2025convex}. This observation dovetails with results in population geometry indicating that network activity concentrates on low-dimensional manifolds with structured variability~\cite{chung2021neural,cohen2020separability}. 
Relatedly, recents concepts extraction methods such as \emph{k}-Deep Simplex~\cite{tankala2023kdeep}, MFAs~\cite{shafran2026directions} and SpaDE~\cite{hindupur2025projecting} constrain hidden representations to lie in the convex hull of a learned dictionary (i.e., as convex combinations of simplex vertices), thereby reinforcing a prototype centric view of representation geometry. Third (iii), in language models, recent work has shown that categorical and hierarchical concepts admit polytopal encodings whose geometric relations mirror semantic relations~\cite{park2025geometry,park2026information}. 

Our last theoretical argument will consist in demonstrating that standard attention mechanisms naturally generate the geometry in \cref{def:minkowski}. 

\paragraph{From Multi-Head Attention to Minkowski Geometry}
In brief, a single head produces outputs in the convex hull of its projected value vectors (softmax yields barycentric coordinates); affine transformations preserve this convex structure; and multi-head aggregation sums headwise polytopes, yielding a Minkowski sum. The argument proceeds in three steps, with full proofs in \cref{app:hypothesis_theory}, showing how elementary operations compose to create the geometric structure described above.

\begin{lemma}[Single head yields a convex polytope and matches MRH for $|S|=1$]
Let one attention head have queries $\bm{Q}$, keys $\bm{K}$, values $\bm{V}=\{\bm{v}_1,\ldots,\bm{v}_m\}$, attention $\bm{A}=\bm{\sigma}(\bm{Q}\bm{K}^\top)$, and outputs $\bm{Y}=\bm{A}\bm{V}$ with attainable set $\mathcal{Y}$.
Then $\mathcal{Y}\subseteq \conv(\bm{V})$.
Moreover, every output admits an MRH form with $|S|=1$, codes $\bm{z}=\bm{\alpha}$ (rows of $\bm{A}$) and archetypes equal to $\bm{V}$.
If, in addition, the pre-softmax logit map has image $\mathrm{Im}(\bm{K}^\top)+\mathrm{span}\{\mathbf{1}\}=\mathbb{R}^m$ as the query varies, then $\mathcal{Y}=\conv(\bm{V})$.
\end{lemma}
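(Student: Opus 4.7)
The plan is to handle the three claims in sequence, exploiting the simplex structure of the softmax output. For the inclusion, I fix any query row $\bm{q}$ and set $\bm{\alpha}=\bm{\sigma}(\bm{q}\bm{K}^\top)$. Since $\bm{\sigma}$ is the row-wise softmax, $\bm{\alpha}$ has strictly positive entries summing to one, so $\bm{\alpha}\in\mathrm{relint}(\Delta^m)$. Hence $\bm{y}=\bm{\alpha}\bm{V}=\sum_{j=1}^{m}\alpha_j\bm{v}_j$ is a convex combination of the rows of $\bm{V}$, giving $\bm{y}\in\conv(\bm{V})$; taking the union over queries yields $\mathcal{Y}\subseteq\conv(\bm{V})$.

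For the MRH form, I take a single tile $\mathcal{T}_1=\{1,\ldots,m\}$, identify its archetype matrix with $\bm{V}$, and set $\bm{z}=\bm{\alpha}\in\Delta^{m}$. The expression $\bm{y}=\bm{z}\,\bm{V}$ then instantiates \cref{def:minkowski} with one active block, $|S|=1$, the attention row playing the role of the block-convex code, and the value vectors playing the role of archetypes. This step is essentially a relabeling and requires no additional hypothesis.

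For the reverse inclusion, I would use that the attainable attention rows, as $\bm{q}$ varies over all queries, form $\bm{\sigma}(\mathrm{Im}(\bm{K}^\top))$. Because softmax is invariant under additive shifts along $\mathbf{1}$ and descends to a diffeomorphism of $\mathbb{R}^{m}/\mathrm{span}\{\mathbf{1}\}$ onto $\mathrm{relint}(\Delta^m)$, this set equals $\bm{\sigma}\!\bigl(\mathrm{Im}(\bm{K}^\top)+\mathrm{span}\{\mathbf{1}\}\bigr)$. The hypothesis makes the latter all of $\mathbb{R}^m$, so $\bm{\alpha}$ sweeps the entire open simplex; pushing through the linear map $\bm{\alpha}\mapsto\bm{\alpha}\bm{V}$ (which sends $\mathrm{relint}(\Delta^m)$ onto $\mathrm{relint}(\conv(\bm{V}))$) gives $\mathrm{relint}(\conv(\bm{V}))\subseteq\mathcal{Y}\subseteq\conv(\bm{V})$.

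The main obstacle is the boundary: softmax never attains a zero coordinate, so the vertices and proper faces of $\conv(\bm{V})$ are only realized as limits, e.g.\ by sending selected logits to $-\infty$. In the writeup I would either define $\mathcal{Y}$ as the closure of the attainable output set, or state $\mathcal{Y}=\conv(\bm{V})$ with an explicit note that equality holds up to closure. Modulo this small technical clarification, the three claims follow from elementary convex-geometric facts about softmax and linear images of the probability simplex.
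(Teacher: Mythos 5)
Your argument is correct and follows essentially the same route as the paper's proof: establish $\mathcal{Y}\subseteq\conv(\bm{V})$ from the simplex structure of the softmax rows, relabel to the $|S|=1$ MRH form, and use the reachability hypothesis together with the $\log$ inverse of softmax modulo $\mathrm{span}\{\mathbf{1}\}$ to recover $\mathrm{relint}(\Delta^m)$ and hence $\mathrm{relint}(\conv(\bm{V}))$. You are in fact slightly more scrupulous than the paper about the boundary: the paper waves at ``limits of interior points'' and then asserts $\mathcal{Y}\supseteq\conv(\bm{V})$, whereas your remark that equality only holds up to closure (or for $\mathcal{Y}$ defined as the closure of attainable outputs) makes the exact same point precise.
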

For full details see \cref{app:sec:single_head_polytope}. This simple observation see the attention as generator of convex geometry: each output is a convex combination of value vectors with attention weights serving as the convex codes. In fact, when attention patterns are block-sparse (they select disjoint subsets of tokens across different input regimes) and the values within each block are affinely independent, one could show we obtain disjoint polytopes.%
Crucially, this convex structure is preserved under affine transformations:
\begin{lemma}[Affine transformations preserve MRH structure]
If $\bm{\gamma}(\x) = \bm{W} \x + \bm{b}$ is an affine transformation and $\x = \sum_j z_j \a_j$, then $\bm{\gamma}(\x) = \sum_j z_j \a_{j}'$ 
where $\a_{j}' = \bm{W}\a_j + \bm{b}$ are transformed archetypes and the bias is absorbed into each archetype while codes $\z$ remain unchanged.
\end{lemma}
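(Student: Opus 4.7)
The plan is to reduce the claim to the single algebraic identity $\bm{b} = \left(\sum_j z_j\right)\bm{b}$, which is valid precisely because the codes lie in the simplex. Once the bias is rewritten this way, the rest follows from linearity of $\bm{W}$ and simple regrouping. The lemma is therefore really a one-line calculation dressed up as a geometric statement about how affine maps interact with the MRH representation.

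First I would fix the block-convex decomposition from \cref{def:minkowski}, so that $\x = \sum_j z_j \a_j$ with $\bm{z} \in \Delta^{|\T|}$ (i.e.\ $z_j \ge 0$ and $\sum_j z_j = 1$), and simply apply $\bm{\gamma}$:
\[
\bm{\gamma}(\x) = \bm{W}\Big(\sum_j z_j \a_j\Big) + \bm{b} = \sum_j z_j\, \bm{W}\a_j + \bm{b}.
\]
Then I would use the simplex constraint to rewrite the bias as $\bm{b} = \big(\sum_j z_j\big)\bm{b} = \sum_j z_j \bm{b}$, and merge the two sums to obtain
\[
\bm{\gamma}(\x) = \sum_j z_j\, (\bm{W}\a_j + \bm{b}) = \sum_j z_j\, \a_j',
\]
with $\a_j' := \bm{W}\a_j + \bm{b}$. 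The codes $\bm{z}$ are untouched, so they still lie in $\Delta^{|\T|}$, and one concludes that the MRH form is preserved with transformed archetypes.

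To match the full hypothesis as stated in \cref{def:minkowski}, I would then observe that the same argument extends block-wise across the active tiles $i \in S$: the bias must be distributed across tiles (for example, attributing $\bm{b}/|S|$ to each tile or, equivalently, absorbing the entire bias into one designated tile) since now $\sum_{i\in S}\sum_j z_{ij} = |S|$ rather than $1$. In either convention, each tile polytope $\mathcal{P}_i$ maps to another convex polytope $\bm{W}\mathcal{P}_i + \bm{b}_i$, so the whole Minkowski sum is carried to a Minkowski sum of the images, confirming that the geometric structure is preserved.

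The only real obstacle is bookkeeping rather than mathematics: deciding how to split $\bm{b}$ when there are multiple active tiles, since this choice is non-canonical. I would address this explicitly by noting that any partition $\bm{b} = \sum_{i\in S} \bm{b}_i$ gives a valid transformed dictionary, with the canonical choice being to attach $\bm{b}$ to a distinguished ``bias tile'' so that codes of other tiles remain invariant. This mirrors the standard trick of lifting to homogeneous coordinates and keeps the statement consistent with the single-tile version in the lemma.
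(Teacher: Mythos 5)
Your core calculation is exactly the paper's: apply $\bm{\gamma}$, distribute $\bm{W}$ by linearity, and regroup $\bm{b}$ inside the sum using the simplex constraint $\sum_j z_j = 1$. You make this last step explicit where the paper's one-line chain leaves it implicit, which is a small but genuine improvement in clarity.

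Your observation about the multi-tile case is correct and worth flagging: as stated, the lemma's conclusion $\bm{\gamma}(\x) = \sum_j z_j(\bm{W}\a_j + \bm{b})$ holds only when the coefficients sum to $1$, i.e., when the decomposition is over a single tile ($|S|=1$). Over $|S|>1$ active tiles one has $\sum_{i\in S}\sum_j z_{ij} = |S|$, so absorbing the full $\bm{b}$ into every archetype would overcount the bias by a factor of $|S|$. Your fix — choosing a partition $\bm{b} = \sum_{i\in S}\bm{b}_i$, or a designated bias tile — is the right repair. The paper elides this because in the place where the lemma is actually invoked (Proposition on multi-head attention), the per-head maps $\bm{W}_O^{(h)}$ are purely linear with no bias, so the issue never bites; but as a standalone statement the lemma should really either restrict to $|S|=1$ or state the tile-wise bias-partition convention, and your proof does the community a service by naming this.

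Both routes are correct and equivalent for the case the paper uses; yours is simply more careful about the domain of validity.
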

This \cref{app:sec:mrh_affine_robust} for full details. This ensures that the archetypal structure survives linear projections and affine normalizations commonly found in transformer architectures. The preservation of convex combination coefficients is crucial: regardless of how the archetypal landmarks are transformed, the relative positions within each polytope remain geometrically consistent.
We now show that multi-head attention naturally aggregates these individual polytopes into the Minkowski sum structure describe by MRH.
\begin{proposition}[Multi-head attention realizes MRH]
Let there be $H$ heads with value sets $\bm{V}_h$ and per-head output projections $\bm{W}_O^{(h)}$.
For any input, head $h$ produces weights $\bm{\alpha}_h \in \Delta^{m_h}$ and output
$\bm{y}_h = \sum_{i} \alpha_{h,i} \bm{v}_h^{(i)} \in \conv(\bm{V}_h)$.
After projection and summation,
\[
\bm{y} 
= \sum_{h=1}^H \bm{W}_O^{(h)}\bm{y}_h
= \sum_{h=1}^H \sum_{i}\alpha_{h,i} \bm{W}_O^{(h)} \bm{v}_h^{(i)}
\in \oplus_{h=1}^H \bm{W}_O^{(h)}\bigl(\conv(\bm{V}_h)\bigr).
\]
Thus every output admits an MRH representation with block-convex codes
$\bm{z}=(\bm{\alpha}_1,\ldots,\bm{\alpha}_H)$ and archetypes
$(\bm{W}_O^{(1)}\bm{V}_1,\ldots,\bm{W}_O^{(H)}\bm{V}_H)$.
If, in addition, each head can realize any point of $\mathrm{relint}(\Delta^{m_h})$ up to the softmax additive constant, then the attainable set is exactly the Minkowski sum.
\end{proposition}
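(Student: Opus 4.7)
The plan is to assemble the statement from the two preceding lemmas and then handle the ``exactly equal'' refinement separately. First, I would apply the single-head lemma to each head $h\in\{1,\ldots,H\}$: for any input, its output $\bm{y}_h=\sum_i \alpha_{h,i}\bm{v}_h^{(i)}$ lies in $\conv(\bm{V}_h)$, with $\bm{\alpha}_h\in\Delta^{m_h}$ playing the role of barycentric codes. Next, I would invoke the affine-transformation lemma (in its purely linear specialization, $\bm{b}=0$) to push each $\bm{y}_h$ through $\bm{W}_O^{(h)}$, obtaining
\[
\bm{W}_O^{(h)}\bm{y}_h \;=\; \sum_i \alpha_{h,i}\,\bigl(\bm{W}_O^{(h)}\bm{v}_h^{(i)}\bigr)\;\in\;\bm{W}_O^{(h)}\bigl(\conv(\bm{V}_h)\bigr),
\]
so that the same barycentric codes $\bm{\alpha}_h$ now parameterize a convex combination of the transformed archetypes $\bm{W}_O^{(h)}\bm{v}_h^{(i)}$.

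Second, I would sum the per-head outputs and use the definition of Minkowski sum: a sum of points drawn from individual sets lies in the Minkowski sum of those sets, hence $\bm{y}=\sum_h \bm{W}_O^{(h)}\bm{y}_h \in \oplus_{h=1}^H \bm{W}_O^{(h)}(\conv(\bm{V}_h))$. To match \cref{def:minkowski}, I would then identify the tiles $\mathcal{T}_h$ with the index sets of each head's values, the archetypes with $\bm{W}_O^{(h)}\bm{v}_h^{(i)}$, and the block codes with $\bm{z}=(\bm{\alpha}_1,\ldots,\bm{\alpha}_H)$; each $\bm{\alpha}_h\in\Delta^{m_h}$ certifies the block-convex coding condition. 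In this construction every head contributes, so nominally $|S|=H$; the genuine sparsity ($|S|\ll m$) arises downstream through effective gating (e.g.\ near-degenerate attention collapsing heads to a single archetype, or zero projections), a point I would note without formalizing here.

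Third, for the refinement claim ``the attainable set is exactly the Minkowski sum'', I would argue surjectivity head by head. The softmax map $\mathbb{R}^{m_h}\to\mathrm{relint}(\Delta^{m_h})$ is surjective and invariant under adding a constant to all logits, so the hypothesis that each head can realize any pre-softmax logit vector up to the additive constant implies the attainable weight set $\mathcal{A}_h$ equals $\mathrm{relint}(\Delta^{m_h})$. Hence the attainable image of head $h$ equals $\bm{W}_O^{(h)}(\mathrm{relint}(\conv(\bm{V}_h)))=\mathrm{relint}(\bm{W}_O^{(h)}\conv(\bm{V}_h))$ by linearity. Using the standard fact that the Minkowski sum of relative interiors of convex sets equals the relative interior of the Minkowski sum, and then taking closure if one wants the full polytope (which is harmless when attention weights can be taken arbitrarily close to the simplex boundary), yields the claimed equality.

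\paragraph{Main obstacle.} The routine parts are the two applications of prior lemmas and the definitional Minkowski inclusion. The delicate step is the surjectivity refinement: one must be careful that softmax only reaches $\mathrm{relint}(\Delta^{m_h})$, not the boundary, so equality with the full Minkowski sum requires either a limiting argument (logits $\to\pm\infty$) or a redefinition of the attainable set to include its closure. I would state this explicitly and choose the closure convention, since the boundary faces correspond exactly to degenerate attention patterns that arise as limits and are the natural objects for downstream interpretability.
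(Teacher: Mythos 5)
Your proposal is correct and follows essentially the same route as the paper's proof: apply the single-head lemma head by head, push each convex hull through the per-head projection via the affine lemma (the paper instead cites the equivalent fact $\bm{L}(\conv(S))=\conv(\bm{L}(S))$ directly), invoke the definition of Minkowski sum for the inclusion, and then read off the block-convex MRH form. Your observation that the nominal support is $|S|=H$ rather than $|S|\ll m$, with genuine sparsity coming from downstream gating, is a fair and useful aside that the paper leaves implicit.

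The only place where you and the paper diverge is the ``exactly equal'' refinement. The paper picks an arbitrary point of the Minkowski sum, decomposes it head-wise into $\bm{\beta}_h\in\Delta^{m_h}$, and appeals to reachability to realize each $\bm{\beta}_h$ as an attention pattern (silently relying on the single-head lemma's limiting argument to cover boundary $\bm{\beta}_h$, since the stated reachability only covers $\mathrm{relint}(\Delta^{m_h})$). You instead argue structurally: the attainable head image is $\mathrm{relint}(\bm{W}_O^{(h)}\conv(\bm{V}_h))$, the Minkowski sum of relative interiors equals the relative interior of the Minkowski sum, and the full polytope is recovered by closure. Your route is marginally more careful about what softmax actually reaches, and you are right to flag the closure convention explicitly — the paper's direct reconstruction argument, read literally, has the same boundary gap and resolves it the same way (via limits as logits diverge), just less visibly. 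Both arguments are sound; yours trades directness for a cleaner statement of where the boundary is handled.
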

See \cref{app:sec:minkowski_mha}. This completes our theoretical account: multi-head attention produces outputs that lie within Minkowski sums of head polytopes, and under standard reachability assumptions, the attainable set equals this sum. The resulting representations admit the block-structured decomposition that MRH describe, with each head contributing a distinct tile of archetypes.

\subsection{Empirical evidence}
We now provide preliminary support for each criterion. The results below should be read as compatible evidence, not proof: multiple mechanisms can mimic the same surface phenomena. Unless stated otherwise we use ImageNet-1k validation tokens from the last DINOv2-B layer; cosine distance and $k$-NN graphs with standard symmetrization.
Figure~\ref{fig:hypothesis_quantitative} (left) contrasts straight-line interpolation between tokens with piecewise-linear geodesics computed as shortest paths on the token $k$-NN graph. Straight lines depart the data support quickly, whereas graph geodesics remain close throughout. This matches what a Minkowski sum structure predicts: feasible displacements arise as sums of small face-walks within head polytopes, which are piecewise linear in barycentric coordinates yet appear curved in the ambient space. The curved, on-support geodesics thus support criterion (\textbf{\textit{i}}) by reflecting head-wise convex reweighting rather than simple linear combinations.
To test the convex coding assumption (\textbf{\textit{ii}}), we compare Archetypal Analysis~\cite{cutler1994archetypal} (AA) with SAE for token reconstruction in \cref{fig:hypothesis_quantitative} (middle). Note that AA is precisely the single-tile case of MRH ($|S| = 1$), making this a direct test of our geometric assumptions. AA imposes dramatically stronger constraints\footnote{Archetypal Analysis seeks $\bm{Z} = \bm{X}\bm{B}$ where archetypes are convex combinations of data (so $\bm{Z} \subset \conv(\bm{X})$), and $\bm{X} \approx \bm{Z}\bm{A}$ where data are convex combinations of archetypes. Both $\bm{A}$ and $\bm{B}$ have simplex constraints (columns sum to 1, non-negative).}: it forces all reconstructions to lie within the convex hull of observed tokens and requires archetypes to be actual data combinations. Despite these restrictive assumptions, AA matches SAE performance with remarkably few archetypes (10 archetypes per image), providing preliminary evidence that even the simplest case of MRH captures fundamental geometric structure. 
This is particularly striking in high dimensions, where the probability that a random point lies in the convex hull of a small set of samples decays exponentially with dimension \cite{barany1988shape,balestriero2021learning}; the fact that tokens can nevertheless be accurately reconstructed from only $\sim$10 archetypes indicates that embeddings concentrate on low-dimensional polytopes embedded within the ambient space.
Additionally, the block structure emerges naturally: \cref{fig:hypothesis_quantitative} (right) reveals that archetypal decompositions spontaneously organize into the block-sparse pattern predicted by criterion (\textbf{\textit{iii}}), with distinct clusters of co-activating archetypes rather than uniform mixing.

\begin{figure}[t]
    \centering
    \includegraphics[width=0.98\linewidth]{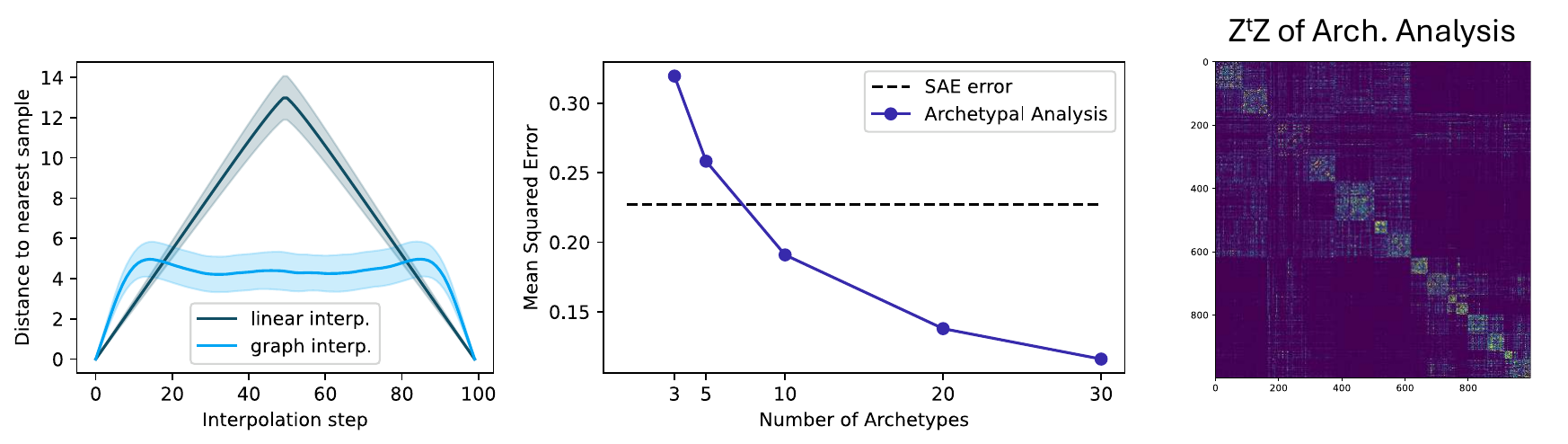}
    \caption{
\textbf{Empirical support for MRH criteria on ImageNet-1k validation set.}
\textbf{(Left)} Distance to data along interpolation paths between tokens. Linear interpolation (dark line) rapidly deviates from valid embeddings, while there exist piecewise-linear paths computed via shortest paths on token $k$-NN graphs (light blue) that remain consistently close to the data manifold. This supports criterion \textbf{(\textit{i})} Minkowski sum structure: feasible connections follow polytope face-walks rather than straight lines through empty space.
\textbf{(Middle)} Archetypal Analysis (single-tile MRH with $|S|=1$) achieves lower error than a sparse autoencoder (dashed line) with as few as 10 archetypes, despite stronger constraints, supporting criterion \textbf{(\textit{ii})} convex coding assumptions.
\textbf{(Right)} Archetypal coefficient matrix $\bm{Z}^T\bm{Z}$ after clustering reveals emergent block structure with bright diagonal clusters. Even without knowledge of tile boundaries, archetypes naturally organize into co-activating groups, supporting the Tiles assumption as it naturally imply block-structured decomposition.
}
    \label{fig:hypothesis_quantitative}
\end{figure}

We emphasize that these observations -- piece-wise linear on-manifold trajectories, efficient archetypal reconstructions, and block-structured co-activation -- represent preliminary evidence rather than definitive proof. Multiple geometric hypotheses could generate similar surface patterns. Nevertheless, the convergent evidence motivates exploring what MRH would mean for interpretability practice.

\subsection{Implications for Interpretability}
If, and this is an assumption, the~\hyp~holds, three immediate implications follow:

\textbf{\textit{(i)} Concepts are no longer Directions, but Points and Regions.} In the context of the Archetypal representation framework, the very idea of a concept diverges from the classical linear factor picture in which semantic evidence is quantified by the magnitude of an inner product with a preferred direction. A concept is better understood as a landmark, an extremal point of the convex polytope that tessellates the latent manifold, or as a small constellation of such landmarks whose hull supports a coherent semantic region. 

\textbf{\textit{(ii)} Steering admits a strict identifiable maximum.}
In a landmark based geometry probing no longer involves sliding indefinitely along a vector but steering an activation toward a specific point on the manifold. Once the activation reaches that landmark (or the convex cell surrounding it) the semantic signal saturates (further movement would drives the embedding off-manifold). This bounded trajectory could explains why current SAE probing gains plateau~\cite{wu2024reft,mueller2025mib,karvonen2025saebench} or even reverse~\cite{hedstrom2024steer,templeton2024scaling} when the scaling coefficient is pushed too far: the useful range ends at the landmark’s basin of attraction. Practical probes should therefore measure how close an activation can get to the landmark set and stop once convergence is achieved, for instance by estimating barycentric weight or geodesic distance inside the convex cell rather than by extrapolating along an unbounded direction.

\textbf{\textit{(iii)} Decomposition is fundamentally non-identifiable.}
Examining the resulting activation space alone don't have a unique solution and thus can't guarantee to recover the original generating factors (the individual polytopes) without strong additional assumptions.
\begin{proposition}[\textbf{Non-identifiability of Minkowski decomposition}]
Let $\mathcal{X} = \oplus_{i=1}^m \mathcal{P}_i$ be a Minkowski sum of convex polytopes. Given only observations from $\mathcal{X}$, the decomposition $\{\mathcal{P}_1,\ldots,\mathcal{P}_m\}$ is generally non-unique: there exist distinct collections $\{\mathcal{Q}_1,\ldots,\mathcal{Q}_k\}$ such that $\mathcal{X} = \oplus_{j=1}^k \mathcal{Q}_j$. In particular, even very simple polytopes admit infinitely many decompositions as sums of line segments (zonotope generators) with varying directions and lengths.
\end{proposition}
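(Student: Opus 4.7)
The plan is to prove non-identifiability by explicit construction, exhibiting for a single fixed $\mathcal{X}$ multiple distinct collections of polytopes whose Minkowski sums coincide. The ``in particular'' clause about zonotopes already specifies the simplest sufficient instance (line-segment decompositions), so I would first reduce the general statement to this concrete case, and then briefly indicate how a qualitatively different non-zonotope counterexample can be produced to show that the non-uniqueness is not merely a reparametrization artifact.

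The first step exploits the elementary refinement identity: for any $\bm{v}\in\mathbb{R}^d$ with $\bm{v}\neq\mathbf{0}$ and any $t\in(0,1)$,
\[
[\mathbf{0},\bm{v}] \;=\; [\mathbf{0},t\bm{v}] \;\oplus\; [\mathbf{0},(1-t)\bm{v}].
\]
Applied to any generator of a zonotope $\mathcal{X}=\oplus_i [\mathbf{0},\bm{v}_i]$, this yields a one-parameter family of distinct decompositions with strictly more summands; iterating the split produces decompositions of arbitrary finite cardinality, parameterized by a continuum of partitions of each generator. As the resulting summand collections differ in number and in their individual supports, they are pairwise distinct as unordered collections of polytopes, establishing the zonotope clause of the proposition verbatim. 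The identity itself is immediate from the definition of Minkowski addition, so this step involves no substantive technical work.

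The second step addresses the stronger claim that non-uniqueness is not confined to the zonotope class. A concrete two-dimensional witness is the regular hexagon $\mathcal{H}$: it admits the classical zonotope decomposition into three segments oriented at $0,\pi/3,2\pi/3$, but equally $\mathcal{H}=T\oplus(-T)$ for a suitable equilateral triangle $T$ (the ``difference body'' construction, which I would verify by matching the six extremal vertices of the sum). These two decompositions differ in the number, the affine type, and the dimension of summands, and neither can be obtained from the other by a refinement of the first kind, showing that non-identifiability persists across distinct combinatorial types.

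The hard part is expository rather than technical. Classical Minkowski-decomposition theory contains several \emph{positive} uniqueness results under restrictive hypotheses (uniqueness of the indecomposable summand sequence in dimension two, or when the outer normal fan is fixed), so the proof must make clear that in the MRH setting no such restriction is imposed on the tile polytopes $\mathcal{P}_i$ and hence the refinement construction applies directly. I would close with a short remark that identifiability can only be restored by adding structural priors -- fixing the number of tiles, requiring each $\mathcal{P}_i$ to be indecomposable, or tying the decomposition to an external partition such as the attention-head structure -- which motivates the architectural assumptions any practical recovery procedure must impose.
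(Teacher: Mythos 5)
Your proof is correct and takes a genuinely different, more constructive route than the paper's. The paper argues abstractly via support functions: a Minkowski decomposition of $\mathcal{X}$ is equivalent to writing the sublinear function $h_{\mathcal{X}}$ as a sum of sublinear summands, which it asserts is ``highly non-unique,'' and then offers the rectangle (split along one axis) as concrete intuition. You instead promote that segment-refinement identity $[\mathbf{0},\bm{v}]=[\mathbf{0},t\bm{v}]\oplus[\mathbf{0},(1-t)\bm{v}]$ from illustration to the core of the argument, which immediately yields a continuum of distinct decompositions for any zonotope, and you supply a second, combinatorially distinct witness (the regular hexagon both as a three-segment zonotope and as the difference body $T\oplus(-T)$ of an equilateral triangle) that the paper does not give. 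The support-function route is slicker and sets up the more general principle, but as written it glosses over a real subtlety: for an arbitrary sublinear minorant $s$ of $h_{\mathcal{X}}$, the residual $h_{\mathcal{X}}-s$ is generally \emph{not} sublinear---it is so precisely when the set with support function $s$ is a genuine Minkowski summand of $\mathcal{X}$---so the paper's ``fix any sublinear $s$'' step overstates the available freedom and would need a further restriction to be airtight. Your explicit constructions are verified directly and avoid this gap entirely, at the cost of being specific to zonotopes and the hexagon rather than establishing non-uniqueness of sublinear decompositions in the abstract. Your closing remark on restoring identifiability via structural priors (fixed tile count, indecomposability, or tying tiles to attention heads) is a useful sharpening of a point the paper makes only informally.
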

See~\cref{app:sec:non_identifiability} and \cite{smilansky1987decomposability} for a more detailed analysis. This limitation follows from the additivity of support functions under Minkowski addition, $h_{\mathcal{X}}(u)=\sum_i h_{\mathcal{P}_i}(u)$: decomposing a single sublinear function $h_{\mathcal{C}}$ into sublinear summands is typically non-unique.
This means that, under the MRH, any method attempting to recover individual concept contributions $\z$ or concept polytopes $\P$ \textit{only from activations from a single layer alone} faces a non-unique solution. 

However, this impossibility, at its core, also hint a possible direction. While the final Minkowski sum hide its constituent factors, the information to find the decomposition is available within previous intermediate representations.
Accessing attention weights and activations from earlier layers may render the factorization tractable, and pointing interpretability efforts toward exploiting architectural structure rather than treating learned representations as opaque geometric objects. 
We believe that developing these structure-aware interpretability techniques could yield promising advances in concept extraction.

\section{Conclusion}
We presented a large-scale concept analysis of \dino, introducing a 32k-unit dictionary that we use as a working baseline.
Our analysis first revealed functional specialization: classification relies on ``Elsewhere'' detectors implementing learned negation; segmentation on boundary concepts forming coherent subspaces; depth estimation on three monocular cue families; and register tokens on global scene factors such as illumination and motion blur.

Examining geometry and statistics of concepts, we found that representations depart from a strictly sparse, quasi-orthogonal regime: atoms are distributed rather than neuron-aligned, anisotropy follows task subspaces, and positional information compresses into 2D while local neighborhoods remain connected even after removing position information. 
These observations collectively point to an organization beyond pure linear sparsity

Synthesizing these observations, we propose the Minkowski Representation Hypothesis (MRH), in which tokens behave as convex mixtures of archetypes and activations form Minkowski sums of convex polytopes, yielding convex concept regions rather than linear directions. We explored theoretical and preliminary empirical evidence for this organisation.

If MRH holds, then this very non-identifiability becomes instructive: it tells us that concept extraction cannot stop at a layer’s activations. To recover meaningful structure, we must explain a layer \textit{through the sequence of transformations that formed it}.

\section{Acknowledgments}
The authors thank Isabel Papadimitriou, Chris Hamblin, Sumedh Hindupur and Alessandra Brondetta for many fruitful discussions. The authors would also like to thanks David Bau for his help in conceiving the name of the Elsewhere concept. This work has been made possible in part by a gift from the Chan Zuckerberg Initiative Foundation to establish the Kempner Institute for the Study of Natural and Artificial Intelligence at Harvard University. MW acknowledges support from a Superalignment Fast Grant from OpenAI, Effective Ventures Foundation, Effektiv Spenden Schweiz, and the Open Philanthropy Project.

\newpage
\bibliography{main}
\bibliographystyle{iclr2026_conference}
\clearpage

\appendix
\onecolumn

{\hypersetup{colorlinks=true, allcolors=black}
\appendix
\onecolumn
\section{Appendix Index}
\startcontents
\printcontents{}{1}{\setcounter{tocdepth}{2}}
\clearpage
}

\section{From LRH to Dictionary Learning}
\label{app:dico}

The LRH induces a natural inverse problem: recover the latent basis on which activations are sparsely expressed. Concretely, given activations \(\A\in\R^{n\times d}\) (we adopt the row–atom convention \(\D\in\R^{c\times d}\) used in the main text so that \(\A\approx \Z\D\) with \(\Z\in\R^{n\times c}\)), concept extraction becomes a dictionary–learning problem with method-specific constraints on \(\Z\) and \(\D\):

\begin{equation}
\begin{aligned}
\nonumber
    & (\Z^\star, \D^\star) = \argmin_{\Z, \D} || \A - \Z \D ||^2_F, \\
    \text{s.t.} ~~ &\begin{cases}
        \forall i, \Z_i \in \{ \e_1, \ldots, \e_k \}, & \text{(\textbf{ACE} - K-Means)}, \\
        \D \D^\tr = \mathbf{I}, & \text{(\textbf{ICE} - PCA)}, \\
        \Z \geq 0, \D \geq 0, & \text{(\textbf{CRAFT} - NMF)}, \\
        \Z_i \in \Delta^{c}, \D_i \in \conv(\A) & \text{(\textbf{AA} - Archetypal Analysis)}, \\
        \Z = \bm{\Psi}_{\theta}(\A), ||\Z||_0 \leq K, & \text{(\textbf{SAE}s)}.
    \end{cases}
\end{aligned}
\label{eq:dico_constraints}
\end{equation}

Here \(\mathbf{I}\) is the identity, \(\mathbf{e}_j\) denotes a canonical basis vector, and \(\Psi_\theta\) is an encoder and a sparsity projection (e.g., TopK, Jump-ReLU, or simply ReLU) producing sparse codes. This formulation unifies the previous clustering-based concept extraction~\cite{ghorbani2019towards}, orthogonal factorization (PCA/ICE)~\cite{zhang2021invertible}, Nonnegative concept extraction (CRAFT/NMF)~\cite{fel2023craft}, Archetypal Analysis~\cite{cutler1994archetypal,thurau2009archetypal}
and modern SAEs~\cite{cunningham2023sparse,bricken2023monosemanticity}. In practice, these approaches trade off \emph{fidelity} (\(\|\A-\Z\D\|_F\)) against \emph{sparsity} (e.g., \(\|\Z\|_0\)), yielding a Pareto frontier; SAEs are attractive at scale because the encoder \(\Psi_\theta\) enables amortized, batched inference while retaining LRH’s sparse, overcomplete structure.

\section{Task Specific Concept}

In this section, we review additional results and observation on the differents task-specific concepts discussed in \cref{sec:tasks}. We will start by giving details on the theoretical root of the importance measure, then we will briefly expand on the ``Elsewhere'' concept before delving into the monocular depth estimation.

\subsection{Importance Measure for Concept-Task Alignment}
\label{app:importance}

In the \cref{sec:tasks}, we ask which concepts in the dictionary are actually recruited by downstream tasks. We describe here precisely the importance measure we used, which has some appealing property as the linear probe allow us to directly interpret the importance for any linear probe as a linear combination of concepts.

Let $\A\in\mathbb{R}^{nt\times d}$ denote token activations (over $n$ images and $t$ tokens per image). We factor $\A \approx \Z\D$ with codes $\Z \in \mathbb{R}^{nt\times c}$ and dictionary $\D \in \mathbb{R}^{c \times d}$. For a linear probe with weights connecting to $o$ classes $\W \in\mathbb{R}^{o\times d}$ and predictions $\bm{Y}=\A\W^\top\in\mathbb{R}^{nt\times o}$, substituting the factorization gives
\[
\bm{Y}=(\Z\D)\W^\top = \Z\,\underbrace{(\D\W^\top)}_{\W'\in\mathbb{R}^{c\times o}}\,.
\]
The matrix $\W'$ encodes the alignment between dictionary concepts (rows) and task outputs (columns). We define the concept-importance vector for the probe as the expected concept activation weighted by this alignment:
\[
\bm{\phi} = \mathbb{E}(\Z)\,\W'  \in  \mathbb{R}^{o}\,,
\]
where the expectation is taken over tokens (and samples, but not over the concepts) in the evaluation set. Class-wise scores correspond to the components of $\bm{\phi}$; concept-wise scores can be read from $\mathbb{E}(\Z)$ together with the corresponding rows of $\W'$. 

In linear regimes, this coincides with canonical attribution functionals when expressed in the concept basis. Specifically gradient$\times$input \cite{shrikumar2017learning,simonyan2013deep}, Integrated Gradients with zero baseline \cite{sundararajan2017axiomatic,ancona2017better}, Occlusion \cite{zeiler2014visualizing}, and RISE \cite{petsiuk2018rise} reduce to linear functionals that are proportional to $\mathbb{E}(\Z)\W'_{:,j}$ when aggregated across tokens. Under standard faithfulness criteria such as C-Deletion, C-Insertion \cite{petsiuk2018rise}, and C-$\mu$Fidelity~\cite{yeh2019infidelity}, this is proven to be the optimal attribution when concept are linearly linked to class score; see Theorem~3 in \cite{fel2023holistic} (and \cite{ancona2017better} for the initial discussion). We thus use this formulation as a principled and canonical measure of concept importance for linear readouts.

\subsection{On the ``Elsewhere'' Concepts}
\label{app:elsewhere}

We have observed and discussed in~\cref{sec:tasks} a consistent and intriguing pattern: across a wide range of ImageNet classes, the top few most important concepts typically include not only interpretable objects or object-parts, but also an intriguing ``\textit{Elsewhere}'' concept. As detailed in Figure~\ref{fig:hollow}, these concepts activate broadly across the tokens, but crucially \emph{not} on the object itself. Their firing is suppressed exactly where the object appears, and prominent in surrounding regions or background areas. Importantly, ``Elsewhere'' concepts are \emph{not} generic background detectors: their firing depends critically on the object's presence, and they vanish entirely if the object is removed from the image.

This phenomenon reveals that certain concepts do not fire where the relevant information is located, but instead are able to extract information from one region to fire in spatially distant locations. The Elsewhere concepts represent an extreme example of this spatial decoupling, where the relationship between what drives the concept's firing and where it actually fires exhibits a complete spatial inversion. Rather than simply detecting local features, these concepts implement a form of distributed spatial reasoning that can be characterized as implementing the logical relation ``\emph{not the object, but the object exists}''. This suggests that \dino~has implicitly learned a sophisticated form of fuzzy spatial logic, systematically distributing class-relevant evidence across both object-centric and contextually-related off-object tokens.

The utility of this distributed representation becomes particularly evident when considering the architectural constraints of vision transformers. Since \dino's final classifier operates on the spatial average of patch embeddings (concatenated with \texttt{cls} token), having class-relevant information distributed across tokens that do not directly contain the object provides several computational advantages. This strategy enhances robustness to partial occlusion, as class evidence remains available in unoccluded regions even when the primary object is hidden. It also provides invariance to viewpoint changes and spatial transformations, since class information is not concentrated solely at object locations. Furthermore, this distributed approach allows the model to integrate multi-scale contextual information that may be crucial for disambiguation in challenging visual scenarios.

To causally verify these interpretations, we employed RISE~\cite{petsiuk2018rise} analysis, applying random perturbations that mask portions of the image while measuring the resulting changes in concept activation values. The causal attribution maps (bottom row of Figure~\ref{fig:hollow}) demonstrate that despite firing in off-object locations, these Elsewhere concepts are causally dependent on the object itself. The RISE attribution is computed as:
$$
\Gamma_{\text{RISE}}^{(i)}(\bm{f}, \bm{x}) = \mathbb{E}_{\bm{m} \sim \mathbb{P}_{\bm{m}}}(\bm{f}(\bm{x} \odot \bm{m}) | \bm{m}_i = 1)
$$
where $\bm{f}$ represents the composition of the model and the SAE, using 8000 forward passes for each explanation. The results consistently show that the object or animal itself is the most causally important region responsible for concept firing, even though the concept manifests its activation in spatially disjoint locations.

This finding challenges the implicit assumption underlying most heatmap-based concept visualization approaches: that \textbf{a concept is primarily \textit{about} the spatial tokens where the concept fires most strongly}. The Elsewhere phenomenon demonstrates a clear dissociation between activation localization and causal attribution, revealing that the most informative regions for understanding a concept's behavior may be spatially distinct from where the concept exhibits its strongest activations. This spatial decoupling would have implications for interpretability research and practice, as it warns against the common tendency to overtrust activation-based visualizations as direct indicators of what information a concept requires or processes.

The prevalence of Elsewhere concepts across diverse object categories indicates that this distributed spatial reasoning is not an artifact of specific classes but represents a fundamental computational strategy employed by vision transformers. This pattern suggests that these models naturally evolve sophisticated spatial logic capabilities that go beyond simple local feature detection, instead developing context-dependent activation patterns that are dynamically modulated by global image content. The discovery of this phenomenon highlights the need for interpretability tools that explicitly account for the potential disconnect between concept activation locations and their causal dependencies, incorporating both spatial activation analysis and causal perturbation methods~\cite{shaham2024multimodal} to provide accurate and complete characterizations of learned representations.

\subsection{Monocular Depth Estimation}
\label{app:depth}

Still in \cref{sec:tasks}, We have seen that despite being trained without explicit 3D supervision, DINO exhibits surprising aptitude for depth-related tasks. We have conducted targeted perturbation analysis of depth-relevant concepts using five controlled image manipulations: local median blurring to suppress shadows, global blur to remove fine details, high-pass filtering to emphasize geometric patterns, soft edge enhancement to retain contours, and geometric warping to distort perspective cues. We measure concept activation profiles across these perturbations and project results onto a UMAP embedding showcased in ~\cref{fig:depth}.

This systematic analysis reveals three coherent clusters with distinct sensitivity profiles. The local frequency transition cluster responds to blur and median filtering, capturing spatial detail and texture gradients. The projective geometry cluster shows sensitivity to warping and high-pass filtering, detecting perspective lines and structural convergence. The shadow-based cluster exhibits primary sensitivity to median filtering, responding to lighting gradients and cast shadows. \textbf{Many concepts exhibit mixed sensitivity profiles}, suggesting DINO learns composite depth representations integrating multiple visual channels. This taxonomy aligns with classical monocular depth cues from visual neuroscience, demonstrating that interpretable 3D perception primitives emerge through self-supervised learning. Full perturbation profiles are available in Figure \ref{app:fig:depth_full}.

\begin{figure}[t]
    \centering
    \includegraphics[width=0.98\linewidth]{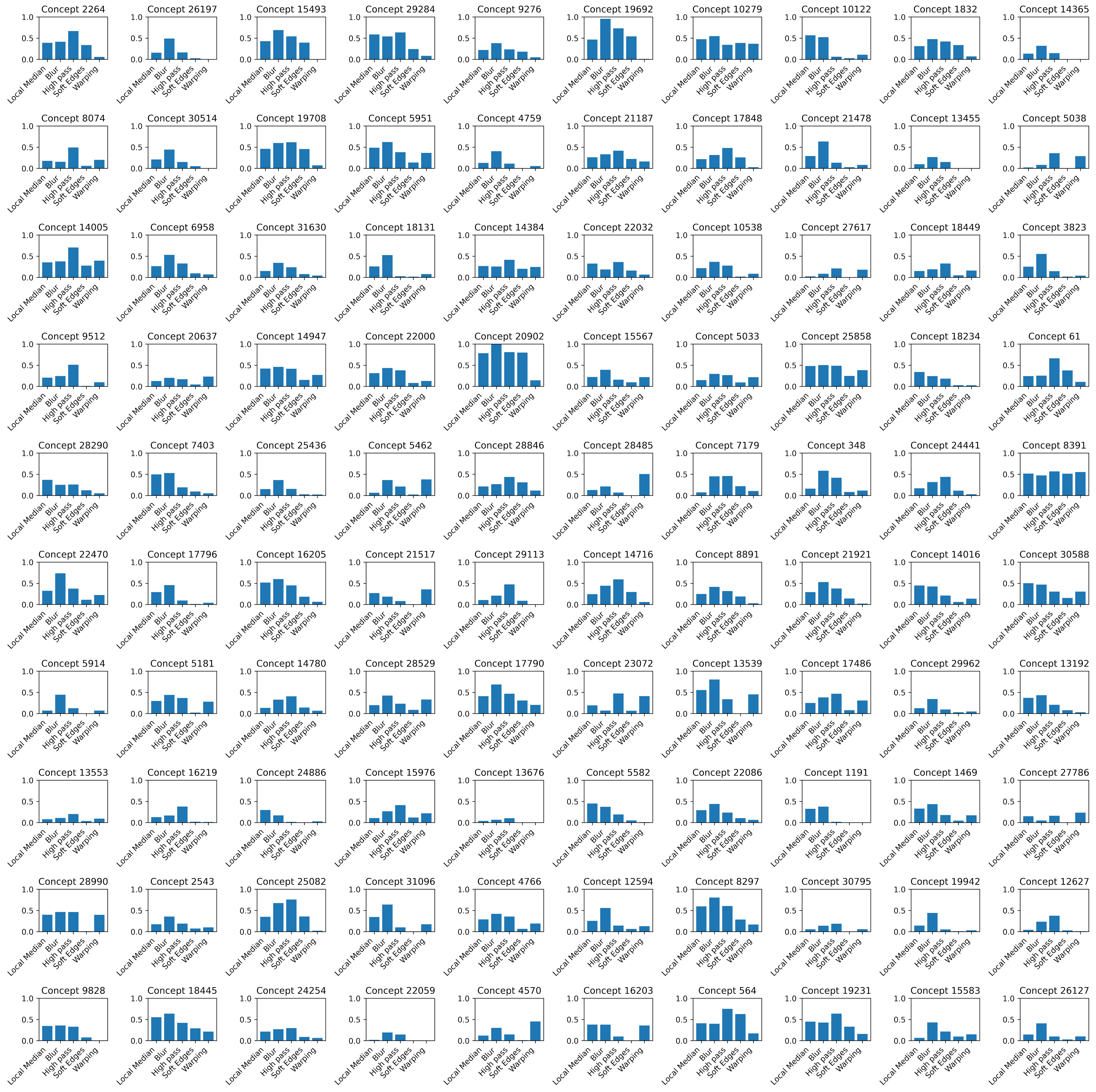}
    \caption{Complete perturbation analysis for depth-relevant concepts showing sensitivity profiles across five image manipulations. The systematic clustering into three main families demonstrates organized depth representation structure within the concept dictionary.}
    \label{app:fig:depth_full}
\end{figure}

\clearpage
\section{Tasks Concept form low-dimensional subspaces}
\label{app:downstream}

To better understand how different tasks recruit concepts from our learned dictionary, we examine the geometric organization of task-specific concept subsets. While ~\cref{sec:tasks} demonstrated that different tasks draw from distinct regions of the concept space, here we investigate whether these functional specializations exhibit coherent geometric structure.

We extracted the top-500 concepts most aligned with each task (classification, segmentation, depth estimation) based on their importance scores, as well as a random subset of 500 concepts as a control. For each subset, we computed 2D PCA projections to visualize their geometric arrangement within the concept space.

The results in~\cref{app:fig:functional_pca} reveal differences in geometric organization across tasks. Classification concepts are broadly scattered across the projection space, consistent with the diverse range of features needed for multi-class recognition. In contrast, segmentation concepts trace a coherent low-dimensional arc. Depth estimation concepts exhibit clear bimodal structure, reminiscent of the bi-modal organization observed in Figure~\ref{fig:downstream_intratask}, potentially reflecting the distinct families of monocular depth cues we identified (projective geometry, shadows, and frequency transitions). The random concept subset shows no discernible structure, confirming that the observed patterns reflect genuine functional organization rather than artifacts of the projection method.

These geometric signatures support the hypothesis that task-specific concept recruitment follows principled patterns: each task draws from geometrically distinct subregions of the concept space, with the local geometry reflecting the underlying computational requirements. This functional-geometric correspondence suggests that the concept dictionary exhibits hierarchical organization, where related computational primitives cluster together in representational space.

\begin{figure}[ht]
    \centering
    \includegraphics[width=0.9\linewidth]{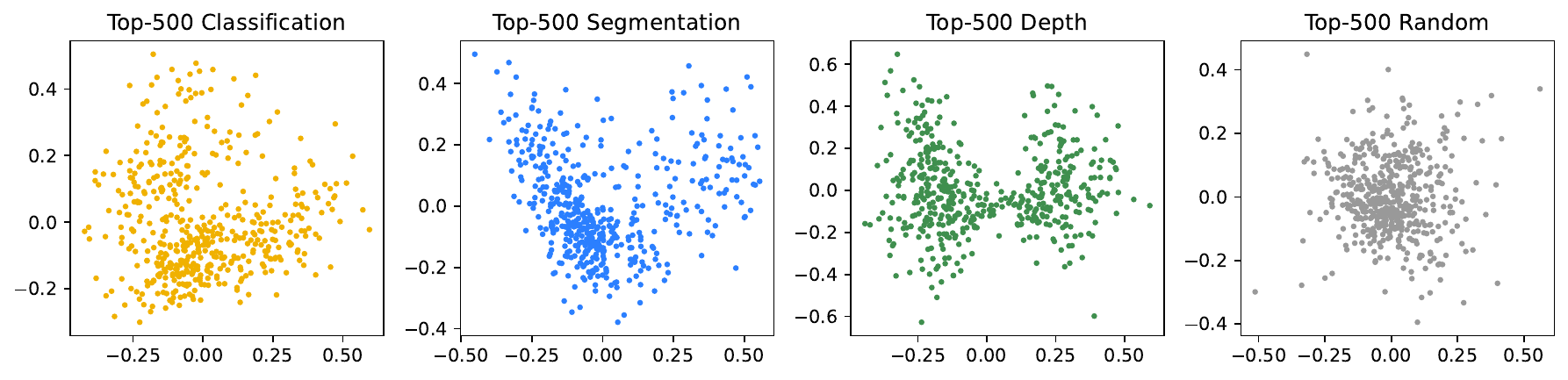}
    \caption{2D projections (PCA) of top-500 concepts most aligned with each task. Classification concepts (left) are broadly scattered; segmentation concepts (middle-left) trace a low-dimensional arc; depth concepts (middle-right) show clear substructure (bimodality), reminiscent of the bimodal structure in Fig.\ref{fig:downstream_intratask} (middle); random concept subset (right) shows no structure. This supports the hypothesis that each task draws from a geometrically distinct subregion of concept space.
}
    \label{app:fig:functional_pca}
\end{figure}

\section{Baseline Models for Co-Activation Spectrum Analysis}
\label{app:baseline_ztz}

To contextualize the spectral properties of $\Z^\tr\Z$ studied in ~\cref{sec:statistics_and_geometry_of_concepts}, we construct two baseline models that preserve structural characteristics while eliminating semantic organization.
\paragraph{Random Baseline.} We generate a random symmetric matrix $\bm{R}$ with identical sparsity $\rho$ and total mass:
\begin{align}
R_{ij} &= U_{ij} \cdot \bm{1}(V_{ij} < \rho) \quad \text{where } U_{ij}, V_{ij} \sim \mathcal{U}(0,1) \\
\tilde{\bm{R}} &= \frac{(\bm{R} + \bm{R}^\tr)/2}{\|\bm{R} + \bm{R}^\top\|_F} \cdot \|\bm{Z}^\tr\bm{Z}\|_F
\end{align}
with $\text{diag}(\tilde{\bm{R}}) = \text{diag}(\bm{Z}^\tr\bm{Z})$. This tests whether the observed spectral structure results from random co-occurrences expected with identical activation sparsity.

\paragraph{Shuffled Baseline.} We preserve the exact empirical distribution of co-activation strengths while destroying semantic organization through random permutation. Let $\bm{G} = \bm{Z}^\top\bm{Z}$ be the original co-activation matrix. We construct the shuffled baseline as follows:

\begin{enumerate}
\item Extract all upper triangular entries: $\mathcal{U} = \{G_{ij} : i < j\}$
\item Apply a random permutation $\pi$ to obtain shuffled values: $\mathcal{U}' = \pi(\mathcal{U})$
\item Construct matrix $\bm{S}$ where:
\begin{align}
S_{ij} = \begin{cases}
G_{ii} & \text{if } i = j \text{ (preserve diagonal)} \\
u'_k & \text{if } i < j \text{ (where } u'_k \text{ is the } k\text{-th element of } \mathcal{U}') \\
S_{ji} & \text{if } i > j \text{ (copy from upper triangle to preserve symmetry)}
\end{cases}
\end{align}
\end{enumerate}

This procedure preserves: (i) all diagonal entries (self-activations), (ii) the empirical distribution of off-diagonal values, and (iii) matrix symmetry. However, it destroys the specific concept pairs and potential block structure while maintaining the same marginal statistics as the original co-activation matrix.
The shuffled baseline is particularly diagnostic: substantial deviation from 
$\text{eig}(\bm{S})$
indicates that the specific pattern of concept co-activation (not merely the distribution of co-activation magnitudes) carries semantic information.

\section{Baseline Models for Concept Geometry Analysis}
\label{app:baseline_conceptD}
To contextualize the geometry of dictionary atoms $\D$, we also build a few baselines. 
\paragraph{Random vectors on sphere} 
We sample a Gaussian random matrix $H\in\R^{c\times d}$, where $H_{ij}\sim\mathcal N(0,1)$ i.i.d., and then we normalized each row to have L2 norm 1. 
This is reminiscent of the concept vectors when randomly initialized before training. 
In high dimension, these vectors are usually relatively isotropic on the unit sphere. 

\paragraph{Grassmannian frame} 
Next, to stress test the LRH, we numerically computed the Grassmannian from of $c$ atoms in $d$ dimension. 
This is a non-trivial computational problem, where analytical solution is only available for few scenarios, otherwise we need to rely on iterative optimization to find approximate Grassmannian frames. 
Brute force gradient optimization of incoherence is slow to converge at our scale.  
Here, we adapted the algorithm from recent work TAAP \cite{massion2025grassmannianTAAP}, to further accelerate the solver, we adapted it to CUDA, made Grassmannian frame solving feasible at our problem scale $c=32000$, $d=768$. 
To note, even with GPU acceleration, solving the frame once still takes 6hr on an A100 GPU. In the end, we reached maximal coherence of $0.065897$. Indeed, more isotropic and less coherence than the Gaussian random vectors. 

\clearpage
\section{Token-Type-Specific Concept}

We complement here to work showcased in~\cref{sec:tokenspecific}, specifically on the Token-specific concepts. Motivated by the fact that token there are different token types in classical ViT, we have studied the different concepts appearing on each concepts. We give details on our entropy measure and how we find token-specific concepts.

We start with the footprint of each concept: the distribution of its activations across token positions. For every concept, we compute the entropy of its token-wise activation over 1.4 million images. Concepts with low footprint entropy are highly localized -- activating consistently on specific token subsets -- whereas high-entropy concepts are spatially diffuse and positionally agnostic.

\begin{figure}[ht]
    \centering
    \includegraphics[width=0.9\linewidth]{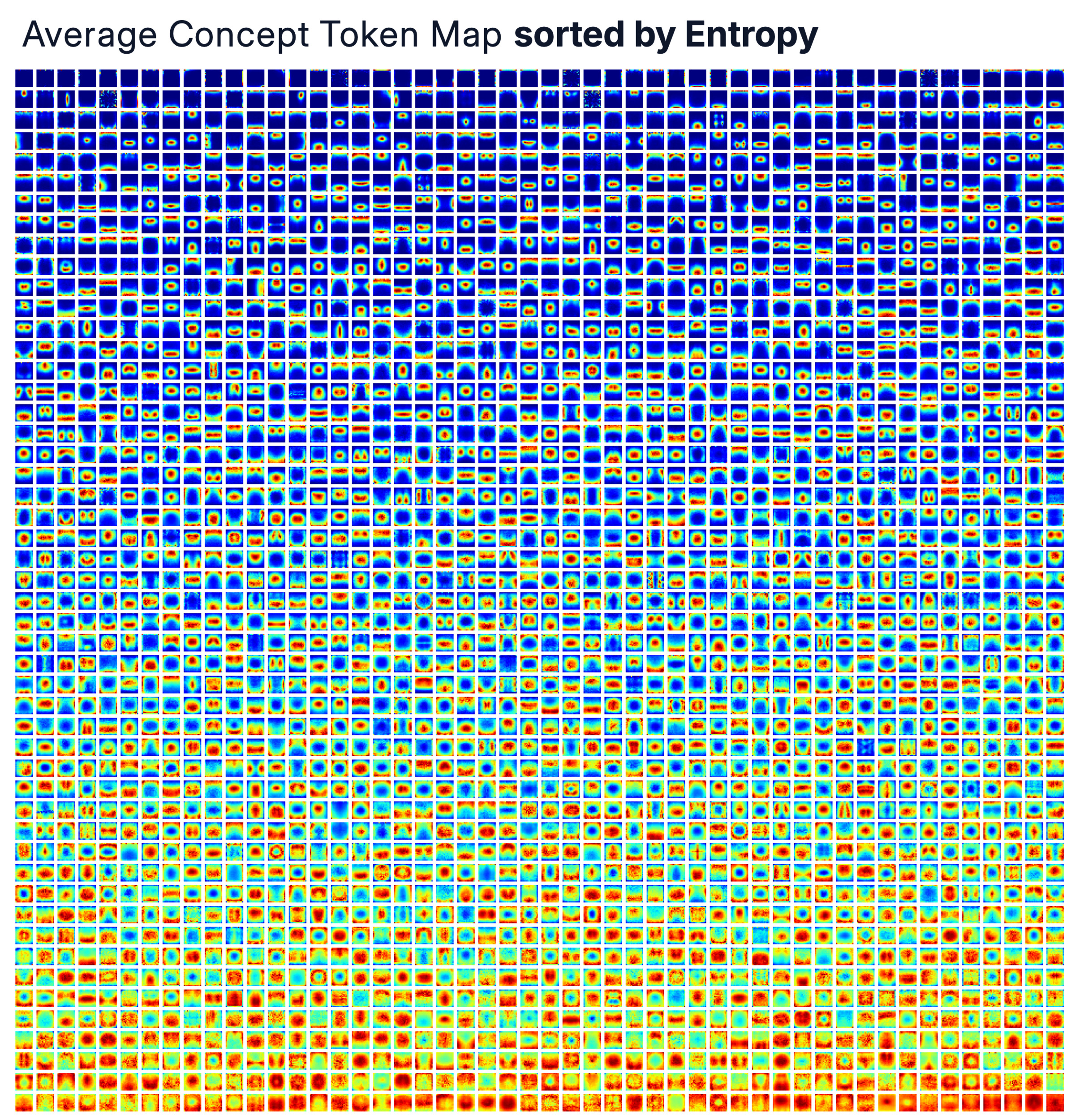}
    \caption{\textbf{Spatial footprints of 2,500 concepts sorted by entropy}. Each row shows the average activation pattern of one concept across token positions, arranged from most localized (low entropy, top) to most uniform (high entropy, bottom). Localized concepts exhibit strong positional preferences, firing predominantly in specific regions or on particular token types (\texttt{cls}, register, or spatial positions), while uniform concepts show broad, distributed activation patterns across the entire image.}
    \label{app:fig:footprints_maps}
\end{figure}

Formally, to understand the spatial organization of concept activations, we propose the notion of ``footprint'' of each concept: its distribution of activations across token positions within images. 
With $\Z \in \R^{nt \times c}$ that we reshape to the tensor $\Z \in \R^{n \times t \times c}$
, then for the concept $i$ we compute the footprint $\bm{\omega} \in \R^t$ as:
\[
\bm{\omega}_{i} = \frac{1}{N} \sum_{n=1}^{N} \Z_{n,:,i}
\]
where $\Z_{n,:,i}$ is the concept map (a vector of $261$ scalars for DINOV2-b) of the concept $i$ on input $n$, and $N$ is the total number of inputs. This captures the average activation strength of each concept at each spatial location, revealing whether concepts exhibit positional preferences or fire uniformly across the image.

We characterize each concept's spatial specificity using the entropy of its empirical footprint distribution. Low entropy indicates highly localized concepts (e.g., firing only at specific positions or token types), while high entropy suggests spatially uniform activation patterns.

The analysis reveals a spectrum of spatial behaviors. Most concepts exhibit relatively uniform firing patterns across spatial tokens, but a significant subset shows strong positional biases. These include concepts that fire exclusively on register tokens (capturing global scene properties like illumination), position-specific concepts that consistently activate in particular spatial regions (potentially encoding geometric or compositional biases), and a single concept that fires exclusively on the \texttt{cls} token (likely encoding its positional embedding). This spatial specialization provides further evidence for the functional organization of the concept dictionary, with different concept families optimized for different computational roles within the architecture.

\begin{figure}[ht]
    \centering
    \includegraphics[width=0.9\linewidth]{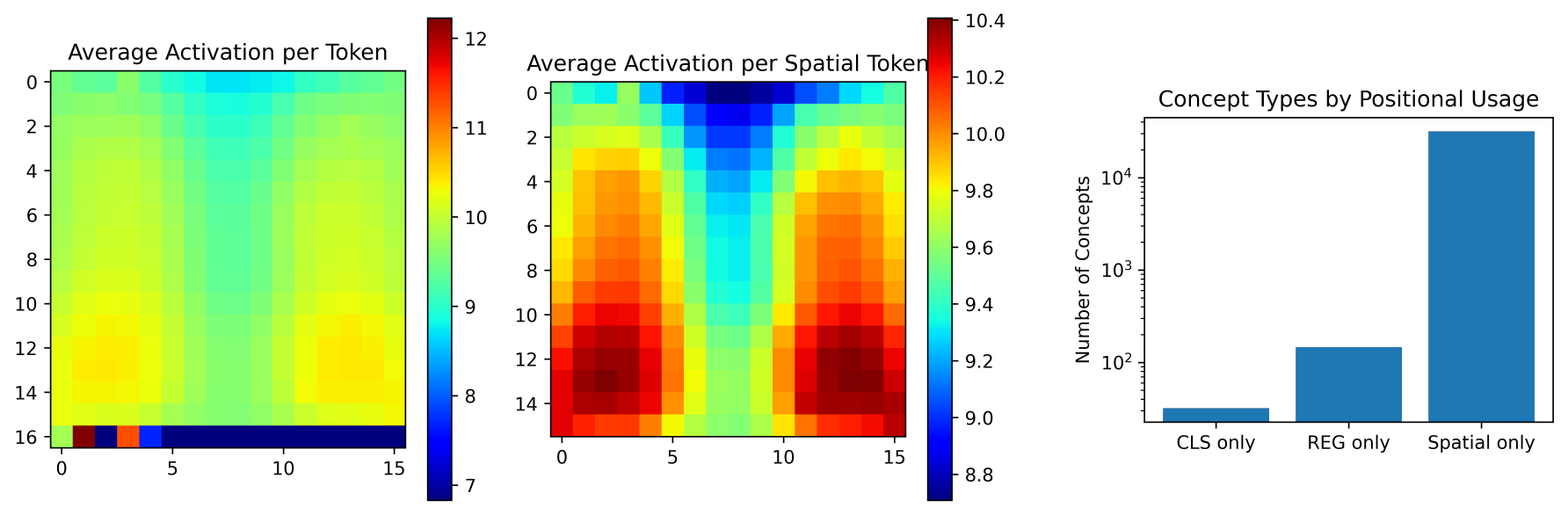}
    \caption{\textbf{Statistical analysis of concept footprints}. (Left) Average activation intensity per token position, showing elevated activity on register and \texttt{cls} tokens compared to spatial tokens. (Middle) Same analysis restricted to spatial tokens only, revealing subtle positional biases within the image grid. (Right) Distribution of concepts by token-type exclusivity, confirming the findings from Figure~\ref{fig:footprint}: one concept fires exclusively on \texttt{cls}, hundreds specialize for register tokens, and many are restricted to spatial positions, indicating substantial functional specialization beyond uniform activation patterns.}
    \label{app:fig:footprints_stats}
\end{figure}

\clearpage
\section{Qualitative Visualization of Local Geometry}
\label{app:local_geometry}

\begin{figure}[ht]
    \centering
    \includegraphics[width=\textwidth]{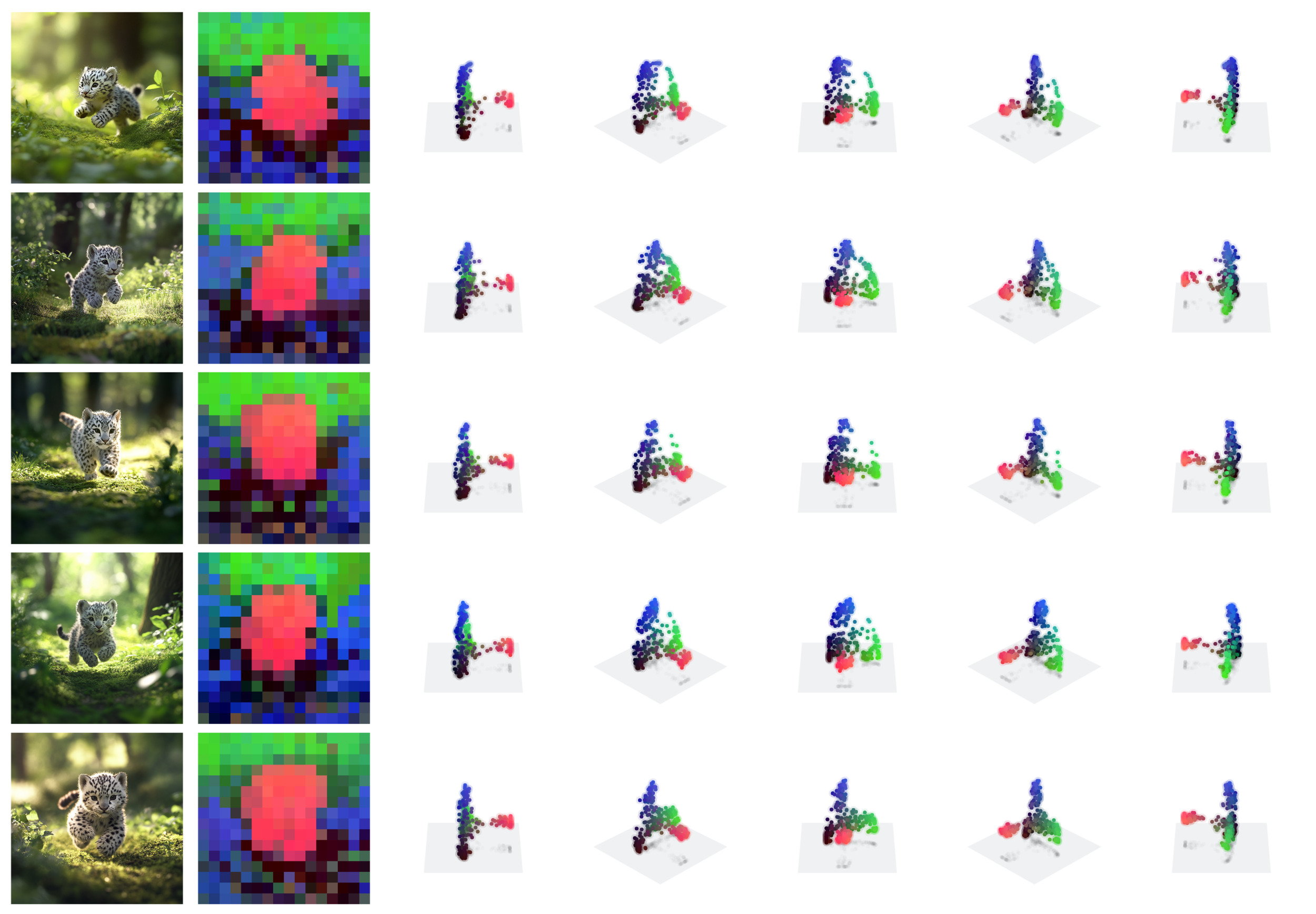}
    \caption{%
        \textbf{Detailed structure of token geometry in snow leopard images.}
        Here we project all patch tokens from five different snow leopard images into the PCA basis trained on all the \dino~token pooled across the five images. Each token is visualized as a point in 3D PCA space, and colored in RGB according to its coordinate along the principal components. This shared projection reveals how similarly-positioned objects (here, the leopards) align across images within the same geometric manifold. Despite slight variations in pose and lighting, the representations remain coherent and consistent across instances. This provides evidence for a global manifold structure, within which local image tokenizations trace smooth trajectories.
    }
    \label{fig:global_pca_snowleopard}
\end{figure}

We provide further visualizations to illustrate the structure of \dino~representations at the per-image level. In Figures~\ref{fig:local_geometry_pca1} to \ref{fig:local_geometry_pca5}, patch tokens are projected into their top PCA components, computed independently for each image. The resulting RGB visualizations highlight smooth embedding transitions that often align with object boundaries.

To complement these local views, Figure~\ref{fig:global_pca_snowleopard} shows a controlled example across five similar inputs (snow leopards), using a shared PCA basis trained on the entire token corpus (of those 5 images). While individual token positions vary slightly due to pose or lighting, the embeddings align within a common geometric frame. This suggests that \dino~not only builds smooth manifolds locally, but does so in a globally consistent latent space.

\begin{figure}[ht]
    \centering
    \includegraphics[width=\textwidth]{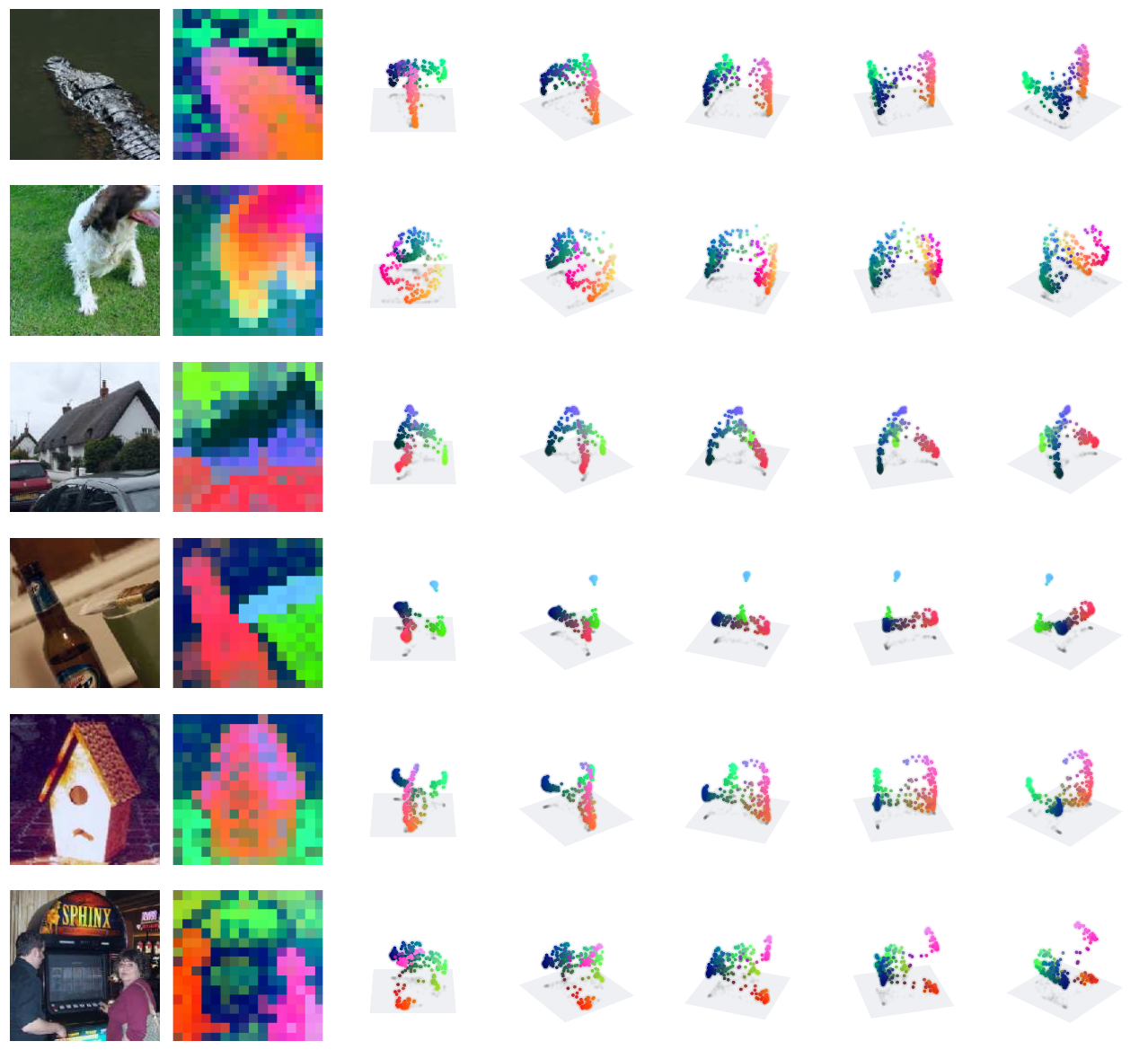}
    \caption[]{\textbf{(cont.)} More examples of PCA-colored patch token embeddings as in Fig.~\ref{fig:local_geometry_pca1}.}
    \label{fig:local_geometry_pca3}
\end{figure}

\begin{figure}[ht]
    \centering
    \includegraphics[width=\textwidth]{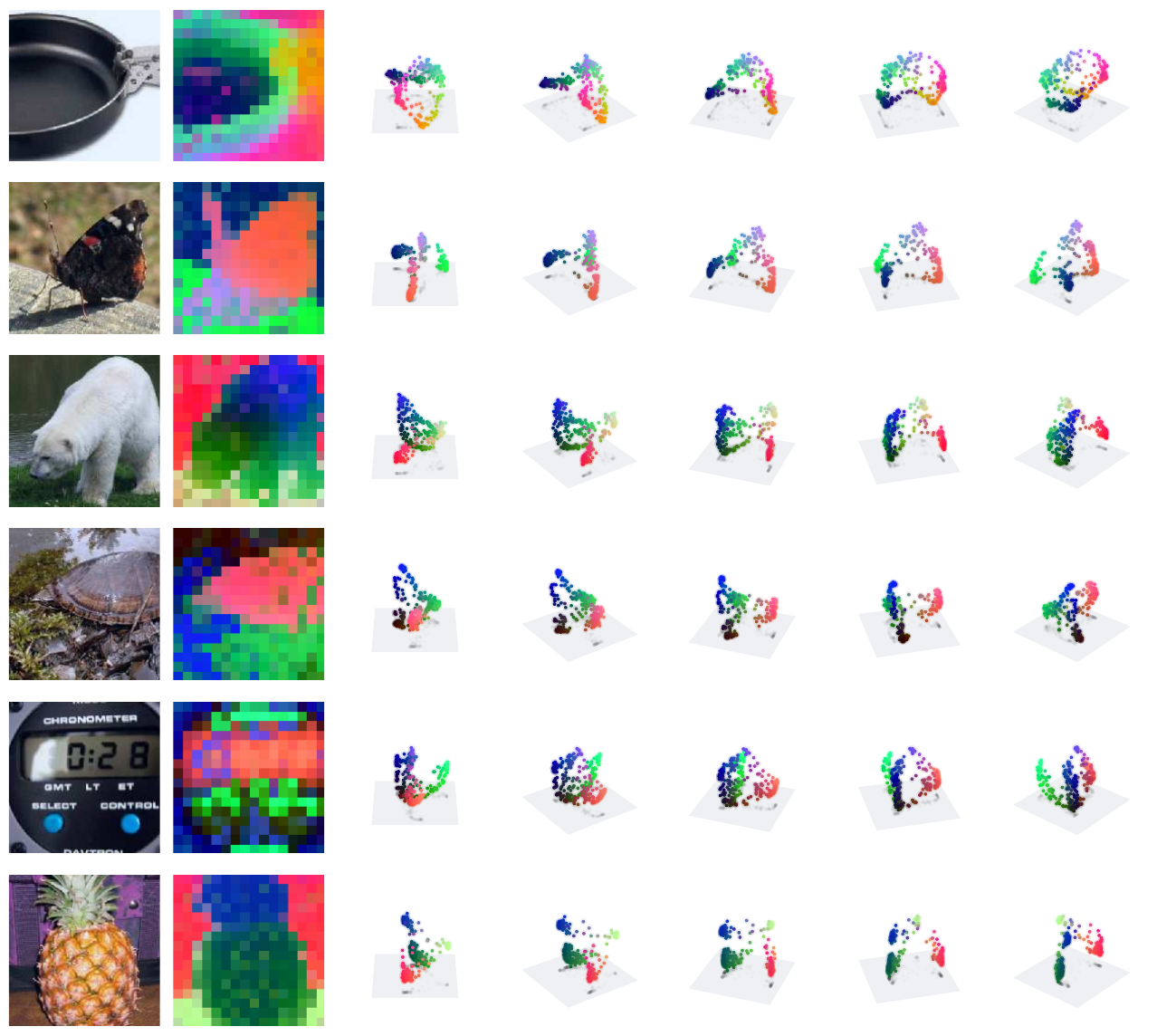}
    \caption[]{\textbf{(cont.)} More examples of PCA-colored patch token embeddings as in Fig.~\ref{fig:local_geometry_pca1}.}
    \label{fig:local_geometry_pca4}
\end{figure}

\begin{figure}[ht]
    \centering
    \includegraphics[width=\textwidth]{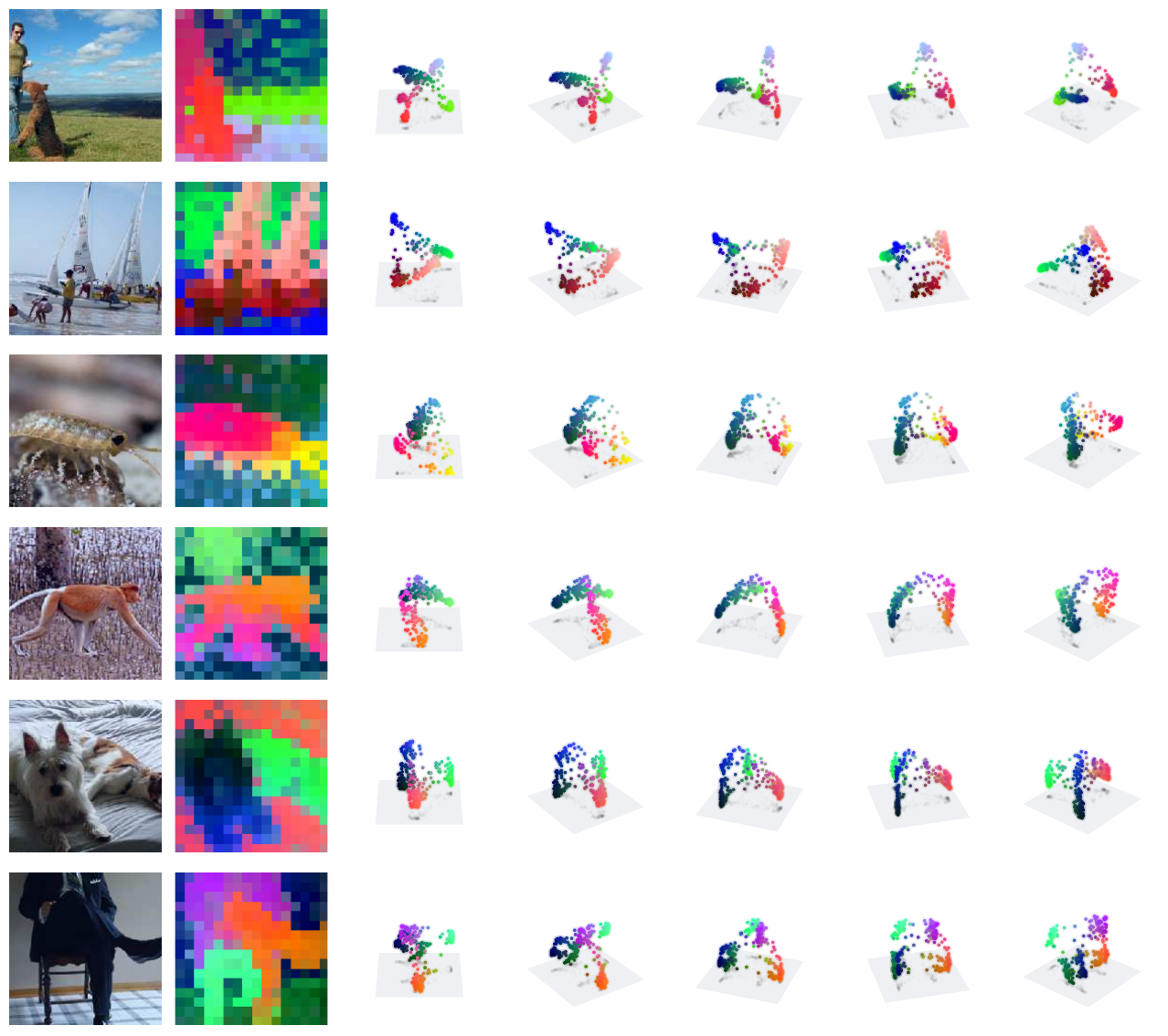}
    \caption[]{\textbf{(cont.)} More examples of PCA-colored patch token embeddings as in Fig.~\ref{fig:local_geometry_pca1}.}
    \label{fig:local_geometry_pca5}
\end{figure}

\begin{figure}[ht]
    \centering
    \includegraphics[width=\textwidth]{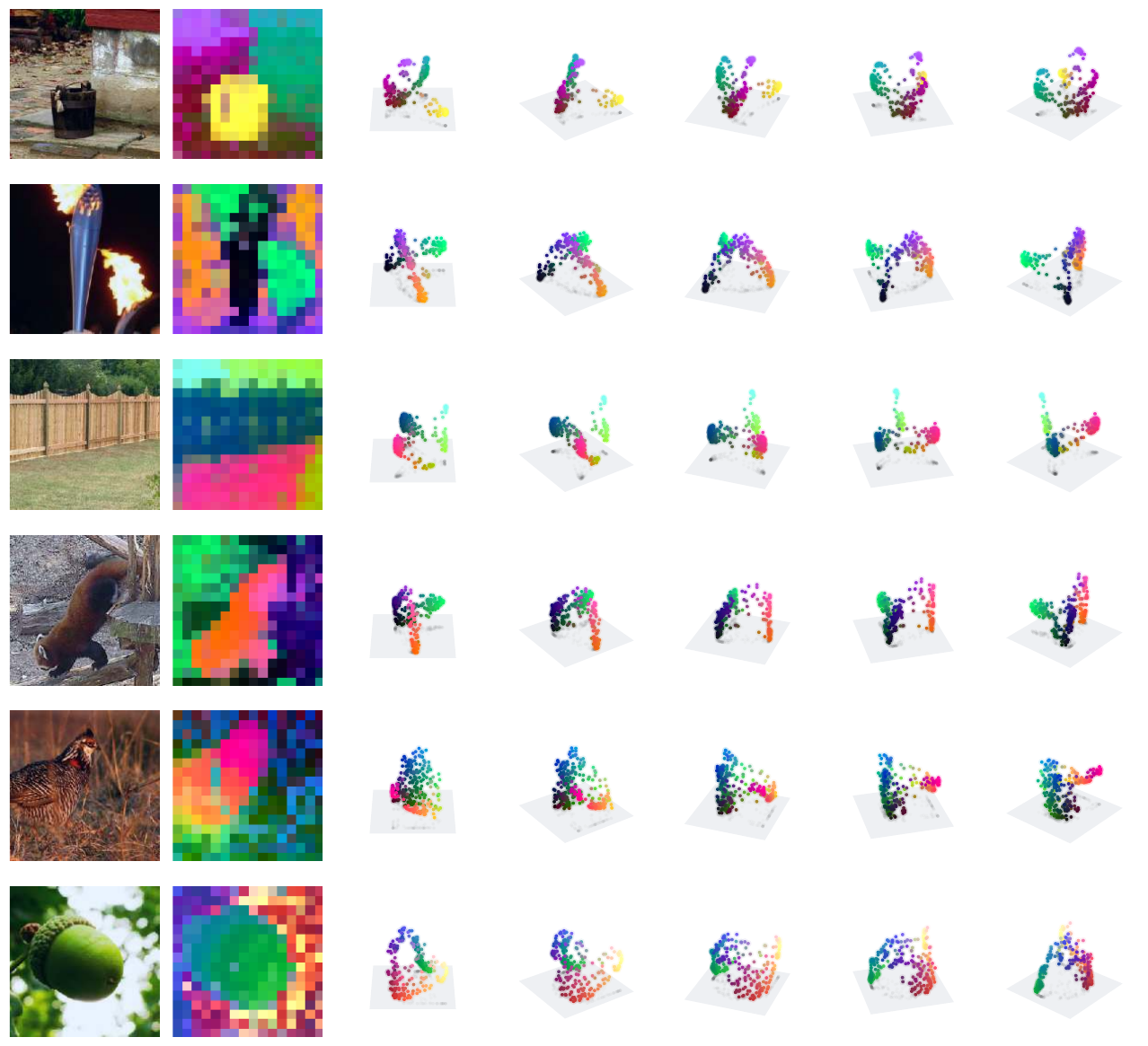}
    \caption[]{\textbf{(cont.)} Final examples of PCA-colored patch token embeddings as in Fig.~\ref{fig:local_geometry_pca1}..}
    \label{fig:local_geometry_pca6}
\end{figure}

\section{Position Embedding Analysis}
\label{app:position}

To investigate the role of positional information in the observed smooth token geometry, we conduct two experiments: (i) extracting and characterizing the positional basis across layers, and (ii) verifying that smoothness persists even after removing positional information.

\subsection{Positional Basis Extraction.} 

For each layer, we extract positional embeddings from 1 million ImageNet images, yielding $\A \in \R^{N \times t \times d}$ token representations where $n$ is the number of images, $t = 261$ tokens per image, and $d = 768$ is the embedding dimension. We employ two approaches to recover the positional basis:
\textbf{Direct averaging.} We compute the average embedding for each spatial position across all images: $\bm{p}_{i} = \frac{1}{N} \sum_{k=1}^N \A_{n,i}$ where $\A_{n,i}$ is the embedding of token at position $i$ in image $n$. We repeat this procedure for each layer.
\textbf{Linear classification.} We train a linear classifier to predict token position from embeddings, yielding weight vectors $\bm{w}_{i}$ for each position at each layer.

Both methods produce highly consistent results: the stable rank profile is similar and the accuracy yield the same results. We therefore choose to use the classifier weights as our primary positional basis for the rest of the experiment.

The analysis in \cref{fig:pos_basis} reveals that positional information undergoes systematic compression across layers. Early layers maintain high-rank positional representations that allow precise spatial localization, but this progressively collapses to a low-dimensional (approximately 2D) subspace in the final layers, consistent with a transition from place-cell-like to coordinate-based encoding.

\subsection{Structure Persists After Position Removal}

Still in \cref{fig:pos_basis}, we showed that the position basis is not responsible for the main part of the structure observed in the PCA visualization. To test qualitatively this effect, we project token embeddings orthogonal to the positional subspace, completely removing positional information by projecting the token on the orthogonal subspace of the classifier.

Remarkably, PCA visualizations (in \cref{app:fig:pca_no_pos}) of the original image of \cref{fig:global_pca_snowleopard} embeddings continue to exhibit the same structure, with smooth patterns that align with object boundaries and semantic regions. This demonstrates that the interpolative geometry we observe reflects genuine semantic organization rather than artifacts of positional encoding. The structure emerges from the model's representation of visual content itself, not from spatial coordinate information.

\begin{figure}
    \centering
    \includegraphics[width=0.9\textwidth]{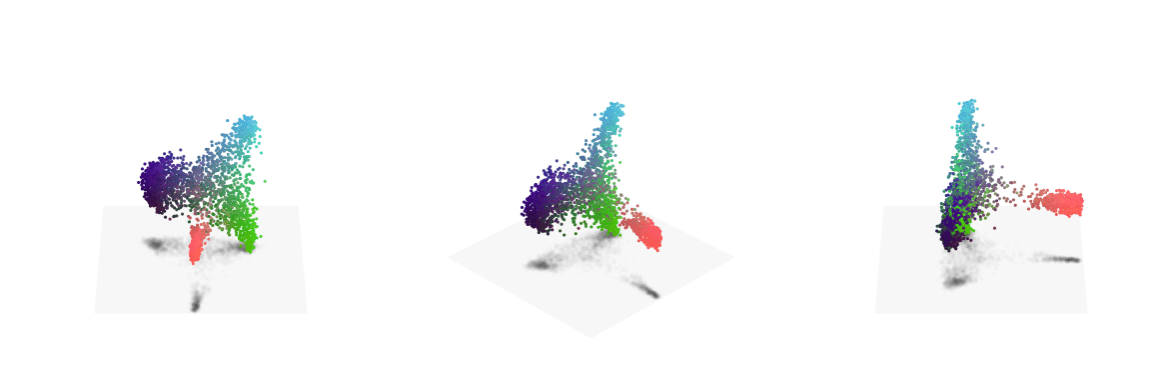}
    \caption{\textbf{Smooth token geometry persists after removing positional information}. PCA visualization of patch token embeddings from a rabbit image after projecting orthogonal to the positional basis to completely remove spatial coordinate information. Despite the absence of any positional cues, the token embeddings (colored by their first three PCA components) continue to exhibit smooth, structured patterns that align with object boundaries and semantic regions. This demonstrates that the interpolative geometry reflects genuine semantic organization rather than artifacts of positional encoding.}
    \label{app:fig:pca_no_pos}
\end{figure}

\clearpage
\section{Minkowski Representation Hypothesis}
\label{app:hypothesis_theory}

The hypothesis stated in~\cref{sec:hypothesis} is an hypothesis about how the representation space is composed. Such assumption is crucial, as it determines what the method, and what the method can validly recover, (what we can see). To put it simply, if sparse autoencoders implicitly answer to the Linear Representation Hypothesis, then we understand the importance of specifying the right ambient geometry as it not only conditions our interpretation but also determines the very methods we use to extract concepts.

Armed with the observation of~\cref{sec:statistics_and_geometry_of_concepts}, we contend that an alternative account can explain the phenomena we documented, in particular the interpolative geometry within single images.

\subsection{Background and Related Work on Convex/Polytopal Views}

As explained in the main text, our motivation for this hypothesis traces to Gärdenfors’ conceptual spaces, where concepts inhabit convex regions along geometric quality dimensions \cite{gardenfors2004conceptual}. 
Put plainly, we have reasons to believe that the observed interpolation is the surface of a deeper organization: the activation space behaves as a sum of convex hulls~\footnote{This aligns with evidence that concept structure can be convex and compositional in other domains~\cite{park2025geometry}}. 
A single attention head already performs convex interpolation over its values, creating an archetypal geometry; multi-head attention then aggregates these convex pieces additively, yielding a Minkowski sum. 
It is natural to imagine one hull reflecting token position, another depth, another object or part category, so that the final activation is the sum of these convex interpolations, and the ``concepts'' available to probes are the archetypes governing each hull. 
We will start by making this idea explicit, then we will review some theoretical evidence based on previous works and showing how simple attention blocks generate such geometry, Then, we will follow showing some empirical signals that make the proposal a plausible candidate and the implication of such geometry. 
We first formally recall the \hyp~stated in \cref{sec:hypothesis}.

\begin{definition}{\textbf{Minkowski Representation Hypothesis (MRH).}}
Let $\mathcal{X} \subset \mathbb{R}^{d}$ be a layer’s activation space and $\x \in \mathcal{X}$. Let $\Arch = (\bm{a}_{1},\ldots,\bm{a}_{c}) \in \mathbb{R}^{c\times d}$ be an Overcomplete Dictionary of Archetypes ($c\gg d$).
We partition the archetypes into $m$ disjoint tiles $\{\T_i\}_{i=1}^m$, $\T_i\subset\{1,\ldots,c\}$, and define the tile polytopes:
\[
\P_i=\conv(\Arch_{\T_i}) \quad \text{with} \quad \Arch_{\T_i}=\{\bm{a}_j: j\in\T_i\}.
\]
Then $\mathcal{X}$ satisfies MRH if
\[
\begin{cases}
\text{\textit{(\textbf{i})~~~Minkowski sum:}} &
\mathcal{X}=\oplus_{i=1}^{m}\P_i.\\
\text{\textit{(\textbf{ii})~~Block-convex coding:}} &
\x = \sum_{i \in S} \z_i \Arch_{\T_i},\ \ \z_i \in \Delta^{|\T_i|},\ \ |S| \ll m. \\
\end{cases}
\]
where $\Delta^{k} = \{\bm{z} \geq 0 : \mathbf{1}^{\top}\bm{z} = 1\}$.
\end{definition}

We have formally defined MRH motivated by an empirical observation of interpolative geometry and now ask a basic question: can a single attention block generate such structure?
The answer is yes, and the mechanism is elementary.
We first review relevant prior work, then show how attention naturally generates such structure.
Previous work provides two lines of supporting evidence. First, neural networks naturally partition input spaces into convex regions through their piecewise-linear activations~\cite{montufar2014number,raghu2017expressive,balestriero2018spline}, suggesting that convex decompositions may be fundamental to deep architectures~\cite{tvetkova2025convex,tankala2023kdeep,hindupur2025projecting}. Analyses have also approached neural networks explicitly through the polytope lens, showing how piecewise-linear partitions structure representation space~\cite{black2022interpreting}.
Second, recent work on neural population geometry shows that deep networks organize representations into low-dimensional manifolds with rich geometric structure~\cite{chung2021neural,cohen2020separability}, which is consistent with a small number of convex factors combining to yield the observed variability.
Third, in the language domain, categorical and hierarchical concepts have been shown to admit convex (polytopal) representations, with semantic relations reflected directly in geometric structure~\cite{park2025geometry}.  

We now demonstrate that standard attention mechanisms naturally generate the geometry described by \cref{def:minkowski}. The argument proceeds in three steps, and full proofs are provided in~\cref{app:hypothesis_theory}. showing how elementary operations compose to create the previously describe geometric structure.

We will show that the elementary operations available in DINO, and in particular a standard multi-head attention block, already generate the geometry described by the Minkowski Representation Hypothesis. We then state and prove a few basic properties of this geometry and discuss its robustness. We begin by recalling some basic fact and with elementary results.

\paragraph{Notations.} For a finite set of vectors $\V = \{\bm{v}_1,\ldots,\bm{v}_m\}\subset\mathbb{R}^d$ the convex hull $\conv(\cdot)$ is
\[
\conv(\bm{V}) = \Bigl\{\sum_{j=1}^m \alpha_j \bm{v}_j \,:\, \bm{\alpha}\in\Delta^m\Bigr\}
\quad\text{where}\quad
\Delta^m = \Bigl\{\bm{\alpha}\in\mathbb{R}^m \,:\, \alpha_j\ge 0,\ \sum_{j=1}^m \alpha_j = 1\Bigr\}.
\]
And we denote by $\bm{\sigma} : \R^{d} \to \R^d$ the standard softmax:
\[
\sigma(\x)_j = \frac{e^{x_j}}{\sum_{k=1}^m e^{x_k}}
\]
and recall two basic properties that will be used repeatedly:
(\textbf{\textit{i}}) softmax is invariant under adding a constant along the all-ones direction, $\bm{\sigma}(\x+\lambda\mathbf{1}) = \bm{\sigma}(\x)$, (\textit{\textbf{ii}}) it maps $\mathbb{R}^m /\mathrm{span}\{\mathbf{1}\}$ diffeomorphically onto the relative interior of the simplex $\text{relint}(\Delta^m)$, with inverse given by the log map up to an additive constant, namely if $\bm{\alpha} \in \text{relint}(\Delta^m)$ then $\bm{x} = \log \bm{\alpha}+ \lambda\mathbf{1}$ satisfies $\bm{\sigma}(\x) = \bm{\alpha}$.

\subsection{From a single head to convex polytopes}
\label{app:sec:single_head_polytope}

We start with the most basic question: \emph{what does a single attention head produce geometrically?} A single head takes a query, forms attention weights over a fixed set of value vectors, and returns a weighted sum of those values. We now make precise the fact that the range of this map is a convex polytope, namely the convex hull of the values, and that under a mild reachability condition every point of that hull can be attained.

\begin{lemma}[Single head creates convex polytopes]
\label{app:prop:single_head}
Consider one attention head with values $\bm{V}=\{\bm{v}_1,\ldots,\bm{v}_m\}\subset\mathbb{R}^d$ and attention weights $\bm{\alpha}\in\Delta^m$. The attainable output set is
\[
\mathcal{Y} = \Bigl\{\sum_{j=1}^m \alpha_j \bm{v}_j \,:\, \bm{\alpha}\in\Delta^m\Bigr\} \subseteq \conv(\bm{V}).
\]
Moreover, suppose the pre-softmax logit map can generate any vector in $\mathrm{Im}(\bm{K}^\top)+\mathrm{span}\{\mathbf{1}\}=\mathbb{R}^m$ as the query varies, where $\bm{K}$ denotes the matrix of keys and the invariance to $\mathbf{1}$ reflects softmax’s additive invariance. Then $\mathcal{Y}=\conv(\bm{V})$. In this case they admit a MRH representation, their codes are $\bm{z} = \bm{\alpha}$ and the archetypes are $\Arch = \bm{V}$.
\end{lemma}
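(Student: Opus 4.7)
The proof will proceed in two stages mirroring the two parts of the statement: (a) the inclusion $\mathcal{Y}\subseteq\conv(\bm{V})$, which is essentially definitional, and (b) the equality under the reachability hypothesis on the logit map, which will rely on inverting softmax on the relative interior of the simplex. Once equality is in hand, the MRH representation with $|S|=1$ is read off immediately.

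For (a), the plan is to observe that for any query $\bm{q}$ the attention weights are $\bm{\alpha}=\bm{\sigma}(\bm{K}\bm{q})$, which by construction satisfy $\alpha_j>0$ and $\sum_j\alpha_j=1$. Hence $\bm{\alpha}\in\mathrm{relint}(\Delta^m)\subseteq\Delta^m$. The output $\sum_j\alpha_j\bm{v}_j$ is then a convex combination of the $\bm{v}_j$ by definition of $\conv(\bm{V})$, so $\mathcal{Y}\subseteq\conv(\bm{V})$. No subtlety here beyond recalling the properties of softmax stated in the preliminaries of the section.

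For (b), I will use the inverse-log property of softmax: any target $\bm{\alpha}^{\star}\in\mathrm{relint}(\Delta^m)$ is attained by the logit vector $\bm{x}^{\star}=\log\bm{\alpha}^{\star}$, uniquely up to the invariance direction $\mathrm{span}\{\mathbf{1}\}$. The reachability assumption $\mathrm{Im}(\bm{K}^\top)+\mathrm{span}\{\mathbf{1}\}=\mathbb{R}^m$ guarantees that this logit can be expressed as $\bm{K}\bm{q}$ (modulo $\mathbf{1}$) for some query $\bm{q}$, so every $\bm{\alpha}^{\star}\in\mathrm{relint}(\Delta^m)$ is realized. Pushing through the value map $\bm{\alpha}\mapsto\sum_j\alpha_j\bm{v}_j$ shows that every point of $\mathrm{relint}(\conv(\bm{V}))$ lies in $\mathcal{Y}$. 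Combined with the inclusion from (a), this gives $\mathcal{Y}=\conv(\bm{V})$ (up to closure, see below). Given equality, each output has the form $\bm{y}=\sum_j\alpha_j\bm{v}_j$ with $\bm{\alpha}\in\Delta^m$, i.e.\ a single tile with $|S|=1$, codes $\bm{z}=\bm{\alpha}$, and archetypes $\Arch=\bm{V}$, matching \cref{def:minkowski}.

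\textbf{Main obstacle.} The only delicate point is boundary behavior: softmax maps strictly into $\mathrm{relint}(\Delta^m)$, so vertices and faces of $\conv(\bm{V})$ are never attained exactly by any finite query. The clean fix is to read $\mathcal{Y}=\conv(\bm{V})$ as equality of closures, with boundary points recovered as limits of logits whose coordinates diverge (e.g.\ scaling the temperature), using compactness of $\conv(\bm{V})$ and continuity of the value map. I will state this explicitly and verify that the reachability hypothesis is preserved under such scaling. A secondary subtlety is checking that the reachability condition is equivalent to $[\bm{K}^\top\ |\ \mathbf{1}]$ having full row rank in $\mathbb{R}^m$, which I will note as the concrete genericity requirement on the keys.
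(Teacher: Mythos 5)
Your proposal is correct and follows essentially the same two-step route as the paper's own proof: definitional inclusion via convexity of softmax outputs, then equality via the log-inverse of softmax on $\mathrm{relint}(\Delta^m)$ together with the reachability hypothesis, with boundary points handled by limits. Your explicit flagging that the equality is really one of closures (since softmax never reaches the boundary of the simplex at finite logits) is a small but welcome clarification of a point the paper glosses over.
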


\begin{proof}
By definition of a single head, the output has the form $\bm{y} = \bm{\alpha} \bm{v}$
with $\alpha \in \Delta^{m}$. Therefore $\bm{y} \in \conv(\bm{V})$ and $\mathcal{Y} \subseteq \conv(\bm{V})$. Now, let $\bm{\alpha} \in \text{relint}(\Delta^m)$ be arbitrary. Define $\bm{u}= \log \bm{\alpha}$ entrywise and note that adding any constant $c$ along $\mathbf{1}$ leaves softmax unchanged. Hence $\bm{\sigma}(\x+ \lambda\mathbf{1}) = \bm{\alpha}$. If the pre-softmax logit map can realize any vector in $\mathbb{R}^m$ up to the $\mathbf{1}$ direction, then there exists a query producing logits $\x+\lambda\mathbf{1}$. For that query, the attention weights equal the prescribed $\bm{\alpha}$ and the head output is the convex combination $\sum_j \alpha_j \bm{v}_j$.
The argument above covers all points with strictly positive coefficients, that is $\text{relint}(\Delta^m)$. The extreme points and boundary faces of $\Delta^m$ can be attained by limits of interior points (or by sending some logits to $-\infty$), hence their images under the affine map $\bm{\alpha} \mapsto \sum_j \alpha_j \bm{v}_j$ are limits of attainable outputs. Therefore $\mathcal{Y}$ contains all of $\conv(\bm{V})$.

Finally, for the MRH identification, this corresponds to the special case of $|S| = 1$ with a single polytope, where the codes are simply $\bm{z} = \bm{\alpha}$ and the archetypes are the values $\Arch = \bm{V}$.
\end{proof}

An interesting observation is that one could show that a strictly block-sparse attention (with $\bm{V}$ affinely independent) would induce disjoint polytopes by applying the previous lemma to each block, splitting $\text{conv}(\bm{V})$ as the union of $\text{conv}(\bm{V}_i)$ for each block $b$. This is particularly interesting as ViT attention patterns are often observed to be block-sparse in practice. This establishes the base case: a single head creates a convex polytope. The next question is: \emph{does this convex structure survive the linear and affine mappings applied throughout the transformer, such as projection matrices or RMSNorm?}

\subsection{Robustness to transformations} 
\label{app:sec:mrh_affine_robust}

The first observation is straightforward but crucial: the image of a convex combination under an affine map is the corresponding convex combination of the images. In our setting, this means that if an activation admits an archetypal decomposition, then any affine layer simply moves the archetypes while keeping the codes unchanged. This is exactly the stability we need to propagate archetypal structure through projections and bias terms.

\begin{lemma}[Affine transformations preserve MRH structure]
\label{app:prop:affine}
Let $\bm{\gamma}(\bm{x}) = \bm{W}\bm{x} + \bm{b}$ be an affine transformation and
$\bm{x} = \sum_{j} z_{j}\, \bm{a}_{j}$ with $z_{j} \ge 0$ and $\sum_{j} z_{j} = 1$.
Then
\[
\bm{\gamma}(\bm{x}) = \sum_{j} z_{j}\, \bm{a}'_{j}
\quad \text{with} \quad
\bm{a}'_{j} = \bm{W}\bm{a}_{j} + \bm{b}.
\]
Hence convex structure is preserved: archetypes absorbe the transformation, but the codes $\bm{z}$ remain unchanged.
\end{lemma}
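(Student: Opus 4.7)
The plan is to establish the claim by a direct computation, exploiting two facts: the linearity of the matrix part $\bm{W}$, and the simplex constraint $\sum_j z_j = 1$ which allows the bias $\bm{b}$ to be absorbed coefficient-wise into each archetype. First, I would substitute the assumed convex decomposition $\bm{x}=\sum_j z_j \bm{a}_j$ into the definition of $\bm{\gamma}$, yielding $\bm{\gamma}(\bm{x}) = \bm{W}\bigl(\sum_j z_j \bm{a}_j\bigr) + \bm{b}$. Then I would pull $\bm{W}$ inside the sum by linearity to obtain $\sum_j z_j \bm{W}\bm{a}_j + \bm{b}$.

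The key step is then the bias rewriting. Since the codes satisfy $\sum_j z_j = 1$, the additive constant $\bm{b}$ can be rewritten as $\bm{b} = \bigl(\sum_j z_j\bigr)\bm{b} = \sum_j z_j \bm{b}$. Combining this with the previous display and grouping terms gives
\[
\bm{\gamma}(\bm{x}) = \sum_j z_j \bigl(\bm{W}\bm{a}_j + \bm{b}\bigr) = \sum_j z_j \bm{a}'_j,
\]
which is exactly the desired expression with the transformed archetypes $\bm{a}'_j = \bm{W}\bm{a}_j + \bm{b}$. Since the same nonnegative weights $z_j$ appear, the simplex constraint is preserved, so the output is itself a convex combination of the new archetypes.

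There is no real obstacle here; the proof is essentially a one-line manipulation, and the only conceptually meaningful point is that absorbing the bias into the archetypes requires the codes to sum to one. I would briefly note as a remark that this is precisely why the simplex (as opposed to the positive orthant or a general sparse cone) is the right constraint for MRH: dropping the normalization $\sum_j z_j = 1$ would leave a residual bias term $(1-\sum_j z_j)\bm{b}$ that cannot be written as a convex combination of transformed archetypes, and the invariance would fail. This remark justifies MRH's choice of block-simplex coding and sets up the next result, where stacking such affine-invariant head outputs produces the Minkowski sum structure.
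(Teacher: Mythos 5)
Your proof is correct and follows essentially the same one-line computation as the paper's proof; you simply make explicit the step (rewriting $\bm{b} = \sum_j z_j \bm{b}$ using the simplex normalization) that the paper uses implicitly in passing from $\sum_j z_j \bm{W}\bm{a}_j + \bm{b}$ to $\sum_j z_j(\bm{W}\bm{a}_j + \bm{b})$. Your closing remark about why the normalization $\sum_j z_j = 1$ is load-bearing is a good observation that the paper does not spell out, but the underlying argument is identical.
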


\begin{proof}
Starting from $\bm{x} = \sum_{j} z_{j}\, \bm{a}_{j}$ with $z_{j}\ge 0$ and $\sum_{j} z_{j} = 1$,
apply $\bm{\gamma}$ and use linearity of $\bm{W}$:
\[
\bm{\gamma}(\bm{x})
= \bm{W}\Bigl(\sum_{j} z_{j}\, \bm{a}_{j}\Bigr) + \bm{b}
= \sum_{j} z_{j}\, \bm{W}\bm{a}_{j} + \bm{b}
= \sum_{j} z_{j}\, (\bm{W}\bm{a}_{j} + \bm{b})
= \sum_{j} z_{j}\, \bm{a}'_{j}.
\]
The right-hand side is a convex combination of the transformed archetypes $\bm{a}'_{j}$.
\end{proof}

Any linear projection $\bm{W}$ and any bias addition preserve convex decompositions exactly as in Lemma~\ref{app:prop:affine}. In particular, the per-head output projections and the final output projection of attention blocks do not break archetypal structure. We remark that LayerNorm and RMSNorm are not globally affine because their scale factor depends on the input. However, many norm become affine in evaluation, once mean/variance are held fixed.

\subsection{From multiple heads to Minkowski sums} 
\label{app:sec:minkowski_mha}

We now turn to the multi-head case. Each head yields a convex polytope (the convex hull of its value vectors), and the standard attention output aggregates head outputs additively after a per-head linear projection. This naturally leads to a Minkowski-sum geometry as describe in \cref{def:minkowski}.

\begin{proposition}[Multi-head attention realizes MRH geometry]
\label{app:prop:mha_minkowski}
Let there be $H$ heads. For head $h$, let $\bm{V}_{h} = \{\bm{v}_{h}^{(1)},\ldots,\bm{v}_{h}^{(m_h)}\}$ be
its value vectors, and let the head output be
\[
\bm{y}_{h} = \sum_{j=1}^{m_h} \alpha_{h,j}\, \bm{v}_{h}^{(j)}
\quad \text{with} \quad
\bm{\alpha}_{h} \in \Delta^{m_h}.
\]
Let $\bm{W}_{O}^{(h)}$ be the per-head output projection. The block output is
\[
\bm{y} = \sum_{h=1}^{H} \bm{W}_{O}^{(h)} \bm{y}_{h}.
\]
Then
\[
\bm{y} \in \oplus_{h=1}^{H} \bm{W}_{O}^{(h)}\bigl(\conv(\bm{V}_{h})\bigr),
\]
that is, the attainable outputs lie in the Minkowski sum of the head polytopes after projection.
Moreover, under the reachability condition that each head can realize any point of
$\text{relint}\bigl(\Delta^{m_h}\bigr)$ (up to the softmax additive constant), the set of attainable
outputs equals this Minkowski sum. In that case, the representation admits an MRH form with
block-convex codes
\[
\bm{z} = \bigl(\bm{\alpha}_{1},\ldots,\bm{\alpha}_{H}\bigr)
\quad \text{and archetypes} \quad
\Arch = \bigl(\bm{W}_{O}^{(1)}\bm{V}_{1},\ldots,\bm{W}_{O}^{(H)}\bm{V}_{H}\bigr),
\]
where each block $\bm{\alpha}_{h}$ belongs to a simplex and each block of archetypes is the
projected value set for that head.
\end{proposition}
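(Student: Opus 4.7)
The plan is to mirror the structure of the statement: first establish the containment $\bm{y} \in \oplus_{h=1}^{H} \bm{W}_{O}^{(h)}(\conv(\bm{V}_h))$, then upgrade to equality under the reachability hypothesis, and finally read off the block-convex codes and archetypes required by Definition~\ref{def:minkowski}.

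For the containment, I would invoke Lemma~\ref{app:prop:single_head} head-wise: each head $h$ produces $\bm{y}_h = \sum_{j} \alpha_{h,j}\, \bm{v}_h^{(j)}$ with $\bm{\alpha}_h \in \Delta^{m_h}$, hence $\bm{y}_h \in \conv(\bm{V}_h)$. By Lemma~\ref{app:prop:affine}, the per-head output projection $\bm{W}_O^{(h)}$ transports this convex combination entry-by-entry, so $\bm{W}_O^{(h)} \bm{y}_h \in \bm{W}_O^{(h)}(\conv(\bm{V}_h))$. Summing across heads and unfolding the definition of Minkowski sum $A \oplus B = \{\bm{a}+\bm{b} : \bm{a}\in A,\, \bm{b}\in B\}$ gives $\bm{y} \in \oplus_{h=1}^{H} \bm{W}_O^{(h)}(\conv(\bm{V}_h))$ immediately. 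This half is bookkeeping.

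For equality, I would exploit the reachability assumption: by the single-head argument each head already attains every point of $\conv(\bm{V}_h)$, and the affine image $\bm{W}_O^{(h)}(\conv(\bm{V}_h))$ follows by direct composition. The delicate step is \emph{decoupling} the heads. Although all heads share the same input token $\bm{x}$, their queries $\bm{W}_Q^{(h)} \bm{x}$ pass through distinct parameter blocks, so under a generic-rank condition on the stacked query projections $(\bm{W}_Q^{(1)}, \ldots, \bm{W}_Q^{(H)})$ any prescribed tuple $(\bm{\alpha}_1, \ldots, \bm{\alpha}_H)$ of per-head attention patterns is simultaneously realisable as $\bm{x}$ varies over its domain. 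Given this decoupling, an arbitrary element of the Minkowski sum, written as $\sum_h \bm{W}_O^{(h)} \bm{u}_h$ with $\bm{u}_h \in \conv(\bm{V}_h)$, is attained by the query choice that independently aims each $\bm{\alpha}_h$ at the convex weights producing $\bm{u}_h$. Boundary faces of each simplex are handled as in Lemma~\ref{app:prop:single_head} by a standard closure argument.

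For the MRH identification, I would stack the head-wise simplex vectors into a block-convex code $\bm{z} = (\bm{\alpha}_1, \ldots, \bm{\alpha}_H)$, declare the tiles $\T_h$ to index the archetypes belonging to head $h$, and set $\Arch_{\T_h} = \bm{W}_O^{(h)} \bm{V}_h$; then $\bm{y} = \sum_{h=1}^{H} \bm{\alpha}_h \Arch_{\T_h}$ matches Definition~\ref{def:minkowski} verbatim. The principal obstacle in the whole argument is the simultaneous-reachability step underpinning equality, which quietly relies on generic rank of the query maps; containment and the MRH identification are essentially algebraic. A second caveat I would flag is that nothing in this argument forces $|S| \ll H$, so the strict block-sparse clause of MRH is here an \emph{approximate} property — it must be justified empirically by the observation that many heads concentrate near a simplex vertex or contribute negligibly after projection, as the Archetypal Analysis experiments in \cref{fig:hypothesis_quantitative} suggest.
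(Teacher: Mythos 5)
Your argument follows the same route as the paper's proof: invoke Lemma~\ref{app:prop:single_head} head-wise, push each head's hull through its projection via Lemma~\ref{app:prop:affine} and the identity $\bm{L}(\conv(S))=\conv(\bm{L}(S))$, sum by the definition of Minkowski addition, and read off the block codes and archetypes. Your containment and MRH-identification steps match the paper's exactly. Where you differ, and in fact improve on the paper, is at the equality step. The paper writes ``By reachability, choose a query so that the head-$h$ attention equals $\bm{\alpha}_h=\bm{\beta}_h$'' and immediately concludes ``Then the block output equals $\bm{y}^\star$,'' but the stated hypothesis is per-head, and all $H$ heads share the same query/input: per-head surjectivity of each $\bm{\alpha}_h$ onto $\mathrm{relint}(\Delta^{m_h})$ does not by itself guarantee that the tuple $(\bm{\beta}_1,\ldots,\bm{\beta}_H)$ is \emph{jointly} realizable by a single query. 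Your proposed fix --- a generic-rank assumption on the stacked query maps $(\bm{W}_Q^{(1)},\ldots,\bm{W}_Q^{(H)})$ so that the joint logit map is surjective modulo the per-block $\mathbf{1}$-directions --- is precisely the missing hypothesis that makes the paper's sentence true, and under it the equality argument closes. Your second caveat is also well taken: the construction produces $|S|=H$, not $|S|\ll m$, so the block-sparsity clause of Definition~\ref{def:minkowski} is not a consequence of the attention algebra but an additional empirical regularity; the proposition's phrase ``admits an MRH form'' is accurate only for the Minkowski-sum and block-convex-coding clauses, not the sparsity one.
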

\begin{proof}
By Lemma~\ref{app:prop:single_head}, for each head $h$ we have
$\bm{y}_{h} \in \conv(\bm{V}_{h})$. Then, for any linear map $\bm{L}$ and finite set $S$, we have
$\bm{L}(\conv(S)) = \conv(\bm{L}(S))$.
Therefore $\bm{W}_{O}^{(h)} \bm{y}_{h} \in \bm{W}_{O}^{(h)}\bigl(\conv(\bm{V}_{h})\bigr)
= \conv\bigl(\bm{W}_{O}^{(h)}\bm{V}_{h}\bigr)$. By definition of the Minkowski sum, if $\bm{p}_{h} \in \bm{W}_{O}^{(h)}\bigl(\conv(\bm{V}_{h})\bigr)$
then $\sum_{h} \bm{p}_{h} \in \oplus_{h} \bm{W}_{O}^{(h)}\bigl(\conv(\bm{V}_{h})\bigr)$.
Taking $\bm{p}_{h} = \bm{W}_{O}^{(h)} \bm{y}_{h}$ and summing over heads gives
\[
\bm{y} = \sum_{h=1}^{H} \bm{W}_{O}^{(h)} \bm{y}_{h}
\in \oplus_{h=1}^{H} \bm{W}_{O}^{(h)}\bigl(\conv(\bm{V}_{h})\bigr).
\]
This proves the inclusion.

Now, assume head $h$ can realize any $\bm{\alpha}_{h} \in \text{relint}\bigl(\Delta^{m_h}\bigr)$. Let an arbitrary element of the Minkowski sum be given:
\[
\bm{y}^{\star} = \sum_{h=1}^{H} \bm{p}_{h}
\quad \text{with} \quad
\bm{p}_{h} \in \bm{W}_{O}^{(h)}\bigl(\conv(\bm{V}_{h})\bigr).
\]
For each $h$ there exists $\bm{\beta}_{h} \in \Delta^{m_h}$ such that
$\bm{p}_{h} = \bm{W}_{O}^{(h)} \sum_{j} \beta_{h,j}\, \bm{v}_{h,j}$.
By reachability, choose a query so that the head-$h$ attention equals $\bm{\alpha}_{h} = \bm{\beta}_{h}$.
Then the block output equals $\bm{y}^{\star}$. Since $\bm{y}^{\star}$ was arbitrary in the Minkowski sum,
the attainable set equals the sum.

For the MRH identification, we collect the per-head attention weights into the block vector $\bm{z} = (\bm{\alpha}_{1},\ldots,\bm{\alpha}_{H})$ and the per-head projected values into archetype blocks $\Arch = (\bm{W}_{O}^{(1)}\bm{V}_{1},\ldots,\bm{W}_{O}^{(H)}\bm{V}_{H})$. The output $\bm{y}$ is thus a sum of $H$ block-convex combinations, realizing the Minkowski sum structure with block-convex coding as required by MRH.
\end{proof}

Having established that attention produces convex polytopes and that affine transformations preserve their structure, we now move toward more realistic operating regimes. What does it mean, geometrically, to have \emph{sparse} or \emph{block-sparse} attention as is commonly observed, and what happens under the elementwise nonlinearities used in practice?

\subsection{Attention Concentration Effects}

We first examine the limit in which softmax sharpens. In fact, as temperature $\tau$ decreases, attention places nearly all mass on the highest-logit index. The attainable set contracts from the full convex hull toward its vertices.

\begin{lemma}[Low-temperature softmax selects vertices]
\label{app:lem:lowT}
Consider attention with values $\bm{V}=\{\bm{v}_1,\ldots,\bm{v}_m\}$ and weights $\bm{\alpha}=\bm{\sigma}(\x/\tau)$ with logits $\x \in \mathbb{R}^m$ and temperature $\tau>0$. Then
\[
\lim_{\tau\to 0} \ \sum_{j=1}^{m} \alpha_j \bm{v}_j = \bm{v}_{j^\star}
\quad \text{where} \quad
j^\star = \arg\max_{j} x_j.
\]
In the zero-temperature limit the output lies at a vertex of $\conv(\bm{V})$.
\end{lemma}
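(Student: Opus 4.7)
The plan is to reduce the claim to the pointwise asymptotic behavior of softmax as $\tau\to 0^+$ and then push convergence through the affine readout $\bm{\alpha}\mapsto \sum_j \alpha_j\bm{v}_j$. The argument has three short moves: rewrite the softmax to expose which terms vanish, identify the limit of $\bm{\alpha}$, and invoke continuity of the linear readout. The only genuinely delicate point is tie-breaking in the argmax.

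First, I would use the translation invariance $\bm{\sigma}(\bm{x}+c\mathbf{1})=\bm{\sigma}(\bm{x})$ already recalled in the notation paragraph to normalize by subtracting the largest logit. Setting $\Delta_j=x_{j^\star}-x_j\ge 0$ with $x_{j^\star}=\max_j x_j$, the temperature-scaled softmax becomes
\[
\alpha_j(\tau)=\frac{e^{-\Delta_j/\tau}}{\sum_{k=1}^m e^{-\Delta_k/\tau}}.
\]
For $j=j^\star$ the numerator equals $1$; for any $j$ with $\Delta_j>0$, the numerator tends to $0$ as $\tau\to 0^+$. This rewriting isolates the dominant term and makes the limit manifest without any sophisticated estimate.

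Next, assuming $j^\star$ is the unique maximizer, the denominator converges to $1$, so $\bm{\alpha}(\tau)\to \bm{e}_{j^\star}$ in $\Delta^m$. Because the readout $\bm{\alpha}\mapsto \sum_j \alpha_j \bm{v}_j$ is linear and therefore continuous, the attention output converges to $\bm{v}_{j^\star}$, which is an extreme point of $\conv(\bm{V})$ under the implicit generic-position assumption that the values index the vertices. No further machinery is needed beyond continuity on the compact simplex.

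The main obstacle is the tie case: if several indices $j^\star_1,\dots,j^\star_r$ attain the maximum, the same computation yields $\alpha_{j^\star_s}(\tau)\to 1/r$ with all other weights vanishing, so the limit is the centroid $\tfrac{1}{r}\sum_s \bm{v}_{j^\star_s}$, a point on a face rather than a vertex. This is not an essential obstruction but a qualification on the statement; I would add a short remark noting that the stated vertex limit holds whenever $\arg\max_j x_j$ is a singleton, and in the degenerate case one still lands on the boundary of $\conv(\bm{V})$, which is what matters for the geometric point the lemma is making in context.
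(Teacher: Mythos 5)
Your proof is correct and takes essentially the same route as the paper's: both normalize by the top logit to expose the factors $e^{-\Delta_j/\tau}$, show all non-maximizing weights vanish as $\tau\to 0^+$, and conclude by continuity of the linear readout on the simplex. Your remark on tie-breaking is a genuine refinement the paper glosses over—its statement and proof implicitly treat $\arg\max_j x_j$ as a singleton, and your observation that a tie of multiplicity $r$ yields the centroid of the corresponding face (a boundary point but not a vertex) is accurate and worth making explicit.
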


\begin{proof}
For $\tau>0$, $\alpha_j = \exp(x_j/\tau)/\sum_k \exp(x_k/\tau)$.
If $j^\star$ indexes the maximum of $\x$ and $\Delta_j = x_{j^\star}-x_j \ge 0$, then
$\alpha_{j^\star} = 1\big/\bigl(1+\sum_{j\neq j^\star} e^{-\Delta_j/\tau}\bigr)$ and
$\sum_{j\neq j^\star} \alpha_j = \sum_{j\neq j^\star} e^{-\Delta_j/\tau}\, \alpha_{j^\star}$.
As $\tau\to 0$, all terms $e^{-\Delta_j/\tau}$ vanish for $\Delta_j>0$, hence $\alpha_{j^\star}\to 1$ and $\alpha_j\to 0$ for $j\neq j^\star$. The convex combination collapses to $\bm{v}_{j^\star}$.
\end{proof}

In fact, we can derive a quantitative measure of this convergence, let $\mathrm{diam}(\bm{V}) = \max_{p,q}\|\bm{v}_p-\bm{v}_q\|$ denote the diameter of the value set. Then the deviation from the winning vertex $\bm{v}_{j^\star}$ satisfies
\[
\left\| \sum_{j} \alpha_j \bm{v}_j - \bm{v}_{j^\star} \right\|
\le \mathrm{diam}(\bm{V}) \sum_{j\neq j^\star} e^{-\Delta_j/\tau},
\]
showing that a finite logit margin already forces the output to lie in a small neighborhood of a vertex, with exponentially small deviation in $1/\tau$.
Another way to see the effect is to direclty study the geometry created under strict sparsity contraint.
\begin{lemma}[Support restriction selects a subpolytope]
\label{app:lem:support_face}
Fix a subset $S\subseteq\{1,\ldots,m\}$ and consider the feasible set of outputs with attention supported on $S$,
\[
\mathcal{Y}_S = \Bigl\{ \sum_{j\in S} \alpha_j \bm{v}_j \,:\, \bm{\alpha}\in\Delta^m,\ \alpha_j=0 \text{ for } j\notin S \Bigr\}.
\]
Then $\mathcal{Y}_S = \conv(\{\bm{v}_j: j\in S\})$. In particular, if across input regimes the support repeatedly takes values in a family $\mathcal{S}$ of subsets, the attainable set is the union $\bigcup_{S\in\mathcal{S}} \conv(\{\bm{v}_j: j\in S\})$, i.e., a union of lower-dimensional polytopes. When a given $S$ coincides with the maximizers of some linear functional over $\bm{V}$, $\mathcal{Y}_S$ is a face of $\conv(\bm{V})$.
When $S = \arg\max_{j} \langle \bm{w}, \bm{v}_j \rangle$ for some vector $\bm{w}$, then $\mathcal{Y}_S$ is the face of $\conv(\bm{V})$ exposed by the supporting hyperplane with normal $\bm{w}$.
\end{lemma}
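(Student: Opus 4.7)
The argument is essentially an unfolding of definitions; the only conceptually nontrivial point is the linear-functional characterization of exposed faces in the last claim.

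The plan is to first establish the main equality $\mathcal{Y}_S = \conv(\{\bm{v}_j: j \in S\})$. A vector $\bm{\alpha} \in \Delta^m$ with $\alpha_j = 0$ for $j \notin S$ is in bijection (by zero-padding / zero-restriction) with a vector $(\alpha_j)_{j \in S} \in \Delta^{|S|}$, so the image of $\bm{\alpha} \mapsto \sum_{j \in S} \alpha_j \bm{v}_j$ under this restricted constraint coincides with $\conv(\{\bm{v}_j : j \in S\})$ by definition of the convex hull. Both inclusions are straightforward: any element of $\mathcal{Y}_S$ is a convex combination of $\{\bm{v}_j : j \in S\}$, and conversely every convex combination in that hull arises from a valid $\bm{\alpha} \in \Delta^m$ supported on $S$.

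The union statement is then immediate: ranging $S$ over the family $\mathcal{S}$ produces $\bigcup_{S \in \mathcal{S}} \conv(\{\bm{v}_j : j \in S\})$. I would emphasize here that each piece has dimension at most $|S|-1$, so a block-sparse attention regime confines outputs to a union of genuinely lower-dimensional polytopes rather than to the full hull $\conv(\bm{V})$, which is precisely the geometric picture the MRH discussion invokes.

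For the exposed-face claim, set $M := \max_j \langle \bm{w}, \bm{v}_j \rangle$ and $H := \{\bm{x} : \langle \bm{w}, \bm{x} \rangle = M\}$. By linearity, $\max_{\bm{x} \in \conv(\bm{V})} \langle \bm{w}, \bm{x} \rangle = M$, so $H$ supports $\conv(\bm{V})$ and the exposed face with normal $\bm{w}$ is $F = \conv(\bm{V}) \cap H$. Writing any $\bm{x} = \sum_j \alpha_j \bm{v}_j$ with $\bm{\alpha} \in \Delta^m$, one has $\langle \bm{w}, \bm{x} \rangle = \sum_j \alpha_j \langle \bm{w}, \bm{v}_j \rangle \le M$, with equality iff $\alpha_j = 0$ for every $j \notin S = \arg\max_j \langle \bm{w}, \bm{v}_j \rangle$. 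Applying the first step to this $S$ then gives $F = \mathcal{Y}_S = \conv(\{\bm{v}_j : j \in S\})$.

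The only mildly delicate step is the equality characterization in the exposed-face argument, namely that a convex combination attains the maximum of a linear functional exactly when it is supported on the maximizing vertices. This is standard convex analysis (support functions of polytopes attain their value on vertex subsets) and I would cite it rather than rederive it in detail.
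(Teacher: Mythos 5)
Your proof is correct and follows essentially the same route the paper takes; the paper simply compresses it to ``immediate from the definition of convex hull and the constraint'' plus a citation to Rockafellar for the supporting-hyperplane characterization, whereas you unfold the equality-case argument ($\sum_j \alpha_j \langle\bm{w},\bm{v}_j\rangle = M$ iff $\alpha_j=0$ for all $j$ with $\langle\bm{w},\bm{v}_j\rangle<M$) explicitly. The extra remark that each piece $\conv(\{\bm{v}_j:j\in S\})$ has dimension at most $|S|-1$ is a useful addition that the paper leaves implicit.
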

\begin{proof}
Immediate from the definition of convex hull and the constraint $\alpha_j=0$ for $j\notin S$. The face condition is the standard supporting-hyperplane characterization of faces~\cite{rockafellar1970convex}.
\end{proof}

Lemma~\ref{app:lem:support_face} formalizes the intuition: as attention sparsifies, geometry collapses from the full polytope to subpolytopes, and in the extreme low-temperature limit to vertices (point). 

\subsection{Non-identifiability of Minkowski decomposition}
\label{app:sec:non_identifiability}

We now address a question that bears directly on recoverability: given only the attainable activation set
\[
\mathcal{X} \subset \mathbb{R}^{d}
\quad \text{with} \quad
\mathcal{X} = \oplus_{i=1}^{m} \mathcal{P}_{i},
\]
can we uniquely recover the summands \(\{\mathcal{P}_{i}\}\)?
If MRH is to be useful for analysis, we must understand when (and why) such decompositions are non-unique.

\begin{proposition}[\textbf{Non-identifiability of Minkowski decomposition}]
\label{prop:nonident}
Let \(\mathcal{X} = \oplus_{i=1}^{m} \mathcal{P}_{i}\) be a Minkowski sum of convex polytopes.
Given only observations from \(\mathcal{X}\), the decomposition \(\{\mathcal{P}_{1},\ldots,\mathcal{P}_{m}\}\) is generally non-unique:
there exist distinct collections \(\{\mathcal{Q}_{1},\ldots,\mathcal{Q}_{k}\}\) such that
\(\mathcal{X} = \oplus_{j=1}^{k} \mathcal{Q}_{j}\).
In particular, even very simple polytopes admit infinitely many decompositions as sums of line segments (zonotope generators) with varying directions and lengths.
\end{proposition}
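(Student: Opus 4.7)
The overall strategy is to combine a structural reduction via support functions with an explicit low-dimensional counterexample, then promote it to an infinite family through the zonotope construction already hinted at in the statement.

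First, I would dualize the decomposition problem. For any compact convex $\mathcal{C} \subset \mathbb{R}^{d}$, let $h_{\mathcal{C}}(\bm{u}) = \sup_{\bm{x} \in \mathcal{C}} \langle \bm{u}, \bm{x} \rangle$. The classical identity $h_{\mathcal{A} \oplus \mathcal{B}} = h_{\mathcal{A}} + h_{\mathcal{B}}$ (already invoked in the main text) turns any Minkowski decomposition $\mathcal{X} = \oplus_{i} \mathcal{P}_{i}$ into an additive decomposition $h_{\mathcal{X}} = \sum_{i} h_{\mathcal{P}_{i}}$ of the single piecewise-linear sublinear function $h_{\mathcal{X}}$ into sublinear summands (the support functions of convex polytopes are precisely the piecewise-linear positively homogeneous sublinear functions). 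Since the cone of sublinear functions admits no unique factorization under addition, this reduction already signals that recovery of $\{\mathcal{P}_{i}\}$ from $\mathcal{X}$ alone is hopeless in general; it now remains only to witness the phenomenon concretely.

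Second, I would exhibit the hexagonal counterexample. Let $\mathcal{H} \subset \mathbb{R}^{2}$ be the regular hexagon inscribed in the unit circle, and let $\mathcal{T}$ be the equilateral triangle with vertices at angles $0,\, 2\pi/3,\, 4\pi/3$. A direct check (via the six edge normals) shows $\mathcal{H} = \mathcal{T} \oplus (-\mathcal{T})$, the star-of-David construction, so $\mathcal{H}$ admits a decomposition with two full-dimensional summands. At the same time, $\mathcal{H}$ is centrally symmetric and therefore a zonotope: $\mathcal{H} = \mathcal{L}_{1} \oplus \mathcal{L}_{2} \oplus \mathcal{L}_{3}$, where each $\mathcal{L}_{k}$ is a line segment parallel to one of the three edge-pair directions, with length matching the edge of $\mathcal{H}$. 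Verification again reduces to evaluating $h_{\mathcal{H}}$ at the six edge normals and checking that both sums of component support functions agree. This produces two decompositions of the same set that differ simultaneously in cardinality and in the dimensions of the summands, which already establishes non-uniqueness.

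Third, I would upgrade this to infinitely many decompositions via the zonotope construction. For any zonotope $\mathcal{Z} = \oplus_{i} [\bm{0}, \bm{v}_{i}]$, splitting a generator as $\bm{v}_{i} = \bm{v}_{i}' + \bm{v}_{i}''$ with $\bm{v}_{i}', \bm{v}_{i}''$ positively collinear with $\bm{v}_{i}$ yields the strictly different decomposition obtained by replacing $[\bm{0}, \bm{v}_{i}]$ with $[\bm{0}, \bm{v}_{i}'] \oplus [\bm{0}, \bm{v}_{i}'']$; choosing the splitting parameter continuously provides a one-parameter family, and iterating across generators and directions gives the infinite family claimed. Composing such a zonotope with the hexagon then shows that the infinitude is not merely an artefact of rescaling trivial summands.

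The main obstacle I anticipate is pinning down what ``distinct decompositions'' means so the statement is non-vacuous: the collinear-splitting construction is one genuine but arguably trivial source of non-uniqueness, and the sharper content is the hexagon example, which shows that even the \emph{number} and \emph{dimensions} of summands are not determined by $\mathcal{X}$. A secondary subtlety is that general polytopes need not be Minkowski-decomposable at all (a convex polytope is called indecomposable when every summand is homothetic to it), so the proof cannot proceed by classifying all decompositions up to refinement; it must content itself with exhibiting non-uniqueness. Citing the classical decomposability theory of Shephard and Smilansky~\cite{smilansky1987decomposability} situates the argument within a well-studied body of work.
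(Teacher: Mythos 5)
Your proposal follows the same high-level route as the paper's proof: dualize via support functions and the additivity identity $h_{\mathcal{A}\oplus\mathcal{B}} = h_{\mathcal{A}} + h_{\mathcal{B}}$, observe that additive decompositions of a sublinear function into sublinear summands are far from unique, and then ground this abstraction with a concrete low-dimensional example plus the segment-splitting trick for infinitude. Where you diverge is in the choice of witness. The paper uses the axis-aligned rectangle $[-a,a]\times[-b,b]$ and obtains non-uniqueness purely by refining one generator segment into two collinear pieces; all the resulting decompositions are thus refinements of a single zonotope generator set. Your hexagon example is strictly richer: $\mathcal{T}\oplus(-\mathcal{T})$ (two full-dimensional triangles) versus $\mathcal{L}_1\oplus\mathcal{L}_2\oplus\mathcal{L}_3$ (three one-dimensional segments) shows that not just the lengths and count of collinear generators, but also the \emph{number} and \emph{dimensions} of the summands, are undetermined by $\mathcal{X}$. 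This is a genuinely stronger form of non-identifiability than what the rectangle alone exhibits, and you are right to flag that distinction as the sharper content of the statement. One small slip: with $\mathcal{T}$ having vertices at angles $0,\,2\pi/3,\,4\pi/3$ on the unit circle, the difference body $\mathcal{T}\oplus(-\mathcal{T})$ is a regular hexagon whose vertices lie at radius $\sqrt{3}$ and at angles $\pi/6,\,\pi/2,\ldots,11\pi/6$, so it does not coincide with the hexagon inscribed in the unit circle with a vertex at angle $0$; you need to rescale by $1/\sqrt{3}$ and rotate by $\pi/6$ (or, more simply, just define $\mathcal{H}$ to be that difference body). This does not affect the argument's validity, only the literal identification of $\mathcal{H}$.
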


\begin{proof}
We argue via support functions. For a nonempty closed convex set \(\mathcal{C}\subset\mathbb{R}^{d}\),
its support function is
\[
h_{\mathcal{C}}(\bm{u}) = \sup_{\x \in \mathcal{C}} \langle \bm{u}, \x \rangle
\qquad \bm{u} \in \mathbb{R}^{d}.
\]
Support functions are sublinear (positively homogeneous and subadditive), and they are additive under Minkowski sums~\cite{gardner1995geometric}:
\[
h_{\mathcal{A} \oplus \mathcal{B}}(\bm{u}) = h_{\mathcal{A}}(\bm{u}) + h_{\mathcal{B}}(\bm{u})
\quad \text{for all } \bm{u} \in \mathbb{R}^{d}.
\]
Hence a decomposition \(\mathcal{X} = \oplus_{i} \mathcal{P}_{i}\) is equivalent to a decomposition
\[
h_{\mathcal{X}} = \sum_{i=1}^{m} h_{\mathcal{P}_{i}}
\]
of the single sublinear function \(h_{\mathcal{X}}\) into a sum of sublinear summands.

But additive decompositions of a fixed sublinear function are highly non-unique in general.
Indeed, fix any sublinear \(s\) with \(0 \le s \le h_{\mathcal{X}}\) pointwise. Then both
\[
h_{\mathcal{X}} = (h_{\mathcal{X}} - s) + s
\quad \text{and, more generally,}~~\forall~\{h_1, ..., h_m\}~~\text{s.t.}~~
\sum_{i=1}^{m} h_{i} = h_{\mathcal{X}}
\]
all define a valid support-function decompositions.
Under lower semicontinuity, each sublinear \(h_{i}\) is itself the support function of a unique closed convex set \(\mathcal{Q}_{i}\) containing the origin, i.e., \(h_{\mathcal{Q}_{i}} = h_{i}\).
Therefore
\[
h_{\mathcal{X}} = \sum_{i=1}^{m} h_{i}
\quad \Longleftrightarrow \quad
\mathcal{X} = \oplus_{i=1}^{m} \mathcal{Q}_{i}.
\]
Since there are infinitely many ways to split \(h_{\mathcal{X}}\) into a sum of sublinear functions, there are infinitely many corresponding Minkowski decompositions. This proves the general non-uniqueness claim.
\end{proof}

To build concrete intuition on why non-uniqueness appear, consider the simplest case of decomposing a rectangle as a sum of segments.

\paragraph{Segments generating the same zonotope}
We work in \(\mathbb{R}^{2}\) and consider the axis-aligned rectangle
\[
\mathcal{X} = [-a,a] \times [-b,b]
= ([-a,a]\mathbf{e}_{1}) \oplus ([-b,b]\mathbf{e}_{2}).
\]
For any \(\alpha \in (0,1)\),
\[
[-a,a]\mathbf{e}_{1}
= [-\alpha a, \alpha a]\mathbf{e}_{1} \oplus [-(1-\alpha)a, (1-\alpha)a]\mathbf{e}_{1},
\]
so
\[
\mathcal{X}
= [-\alpha a, \alpha a]\mathbf{e}_{1}
\ \oplus\ [-(1-\alpha)a, (1-\alpha)a]\mathbf{e}_{1}
\ \oplus\ [-b,b]\mathbf{e}_{2}.
\]
Varying \(\alpha\) yields uncountably many distinct 3-segment decompositions of the same rectangle.
Also, iterating this splitting on either axis leads to infinitely many distinct \(k\)-segment decompositions whose projections sum to the same total width and height.
Hence even for very simple polytopes, Minkowski-sum decompositions into segments are non-unique.

\paragraph{Limitations \& Positioning.} We emphasize that the observations done in the \cref{sec:hypothesis} -- piece-wise linear on-manifold trajectories, efficient archetypal reconstructions, and block-structured co-activation -- represent preliminary evidence rather than definitive proof. Multiple geometric hypotheses could generate similar surface patterns. Nevertheless, the convergent evidence motivates exploring what MRH would mean for interpretability practice.

Specifically, we treat the Minkowski Representation Hypothesis (MRH) as a working hypothesis of a refinement of LRH and not a proved theory of representation. We do not claim causal identification of concepts, head ``tiles,'' or their generative mechanisms; our evidence is observational and model-specific (DINOv2-B). Our goal is to describe robust empirical regularities (task-specific subspaces; families of depth cues; ``Elsewhere'' patterns) and propose a geometry that makes testable predictions for future work (e.g., per-head block structure, subspace additivity across heads). We therefore refrain from stating that our results ``contradict'' LRH; instead, we document systematic finding of regularities that go beyond a purely sparse near-orthogonal picture. All conclusions should be read as conditional on our dictionary, probes, and datasets, and we report ablations where feasible.

\clearpage
\section{Visualization tool}
\label{app:demo}

To facilitate exploration of the concept dictionary and make our findings accessible to the broader research community, we release \textbf{Dino}Vision, an interactive web-based visualization tool that enables real-time exploration of the 32,000 extracted concepts. 
The tool presents concepts as points in a 2D UMAP projection where spatial proximity indicates conceptual similarity in the original high-dimensional space, though global clustering patterns should be interpreted with caution due to UMAP's limitations in preserving large-scale structure.
\begin{figure}[h]
\centering
\includegraphics[width=\textwidth]{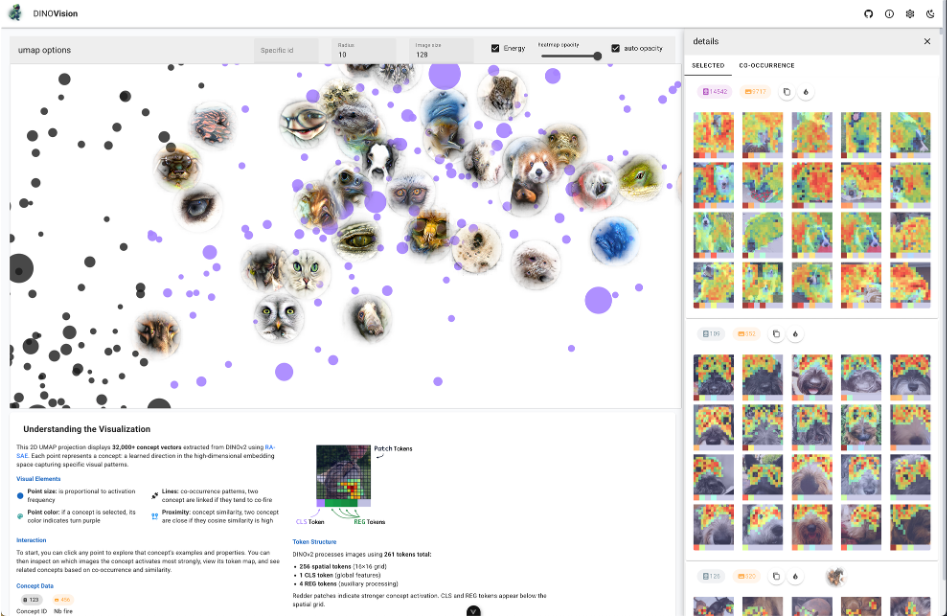}
\caption{DinoVision interface showing the interactive 2D UMAP projection of 32,000 concepts extracted from DINOv2. Users can explore individual concepts by clicking points to reveal activation patterns across the 261 token grid (256 spatial patches plus CLS and register tokens). The visualization includes adjustable parameters for point size, opacity, and co-occurrence links between frequently activated concept pairs.}
\label{fig:website}
\end{figure}
The interface displays each concept as a point whose size reflects activation frequency across the dataset.
Users can click any point to examine detailed activation patterns showing how that concept fires across DINOv2's 261 tokens, which comprise 256 spatial patches arranged in a $16\times16$ grid plus one classification token and four register tokens. The token visualization uses color intensity to indicate activation strength, with redder regions corresponding to stronger concept responses. 
The tool help us discovered that some concepts activate exclusively on register tokens, and they seems to capturing global scene properties like illumination and motion blur.
Interactive features include adjustable visualization parameters such as point size scaling, heatmap opacity controls, and the ability to display co-occurrence links between concepts that frequently activate together. 
Users can navigate directly to specific concepts by entering concept identifiers or explore neighborhoods around selected points. The co-occurrence analysis reveals structured relationships in the concept space, with connecting lines indicating statistical dependencies between concept activations. 
For concept visualization we use Feature Accentuation (FA) from~\cite{hamblin2024feature}, we start from maximally activating images of each concept and perform 1024 steps of gradient ascent optimization parameterized in Fourier space with MACO~\cite{fel2023unlocking} constraints on the magnitude of the spectrum, boosted according to natural image statistics approximately following $1/\omega^2$ where $\omega$ represents cycles per image.

Importantly, the tool implements a composite (two-layer) rendering approach that maintains smooth 60fps interaction for point navigation while progressively loading high-resolution concept visualizations as needed.
We believe the tool serves as both a research instrument for further investigation and a demonstration of the rich structure present in vision transformer representations.

\end{document}